\documentclass[11pt,letterpaper,logo,twocolumn]{style}

\usepackage{amsmath,amssymb,amsfonts,bm,mathtools}
\usepackage{graphicx}
\usepackage{booktabs}
\usepackage{multirow}
\usepackage{makecell}
\usepackage{array}
\usepackage{pifont}
\usepackage{tikz}
\usetikzlibrary{arrows.meta,positioning,fit,calc,shapes.misc,backgrounds}
\usepackage{tabularray}
\UseTblrLibrary{booktabs}
\usepackage{algorithm}
\usepackage{algpseudocode}
\usepackage{textcomp}
\usepackage{url}
\usepackage{adjustbox}
\usepackage{threeparttable}

\graphicspath{{./}{image/}}
\definecolor{ASRGray}{RGB}{232,232,232}
\definecolor{ASRBlue}{RGB}{220,247,250}
\definecolor{ASRGreen}{RGB}{0,130,60}
\definecolor{TopOne}{RGB}{198,219,239}
\definecolor{TopTwo}{RGB}{222,235,247}
\definecolor{TopThree}{RGB}{239,243,255}
\definecolor{ASRHeader}{RGB}{228,228,228}
\definecolor{ASRGroup}{RGB}{238,238,238}
\definecolor{ASROurs}{RGB}{219,246,248}
\definecolor{VenueGray}{RGB}{80,80,80}

\newcommand{\venue}[1]{\textcolor{VenueGray}{\scriptsize #1}}

\providecommand{\best}[1]{\textbf{#1}}
\providecommand{\second}[1]{\underline{#1}}
\providecommand{\third}[1]{\textit{#1}}

\newcommand{\mb}[2]{#1~{\scriptsize(#2)}}
\newcommand{\gain}[1]{\,{\scriptsize\textcolor{ASRGreen}{#1}}}

\newtheorem{assumption}{Assumption}
\newtheorem{theorem}{Theorem}
\newtheorem{lemma}{Lemma}
\newtheorem{proposition}{Proposition}
\newtheorem{corollary}{Corollary}

\newcommand{\cS}{\mathcal{S}}
\newcommand{\cP}{\mathcal{P}}
\newcommand{\cG}{\mathcal{G}}

\newcommand{\R}{\mathbb{R}}
\newcommand{\E}{\mathbb{E}}

\newcommand{\law}{\mathsf{Law}}
\newcommand{\W}{\mathsf{W}}

\newcommand{\Lip}{\mathrm{Lip}}
\newcommand{\Tr}{\mathrm{Tr}}
\newcommand{\Cov}{\mathrm{Cov}}

\newcommand{\Fgt}{\mathfrak{F}}
\newcommand{\C}{\mathbb{C}}

\newcommand{\cF}{\mathcal{F}}
\newcommand{\cT}{\mathcal{T}}

\title{Attention-Spectrum Regularization for Replay-Free Continual Multimodal LLMs}
\runningtitle{Attention-Spectrum Regularization for Replay-Free Continual Multimodal LLMs}
\PublicDate{2026-06-20}

\author{%
  {\Authfont \small
    \textbf{Chuangxin Zhao}\equal\textsuperscript{1} \quad
    \textbf{Canran Xiao}\equal\textsuperscript{2} \quad
    \textbf{Siyuan Ma}\textsuperscript{3} \quad
    \textbf{Mengyao Lyu}\textsuperscript{3} \quad
    \textbf{Yanbiao Ma}\textsuperscript{4} \quad
    \textbf{Jun Xia}\advisor\textsuperscript{1} \quad
    \textbf{Guiguang Ding}\advisor\textsuperscript{5} \quad
    \textbf{Yang Liu}\advisor\textsuperscript{3}
  }\\
  {\Affilfont
    \textsuperscript{1} The Hong Kong University of Science and Technology (Guangzhou) \quad
    \textsuperscript{2} Sun Yat-sen University \quad
    \textsuperscript{3} Nanyang Technological University \quad
    \textsuperscript{4} Renmin University of China \quad
    \textsuperscript{5} Tsinghua University \\
    \equal\ Equal contribution \quad
    \advisor\ Corresponding authors: \texttt{junxia@hkust-gz.edu.cn}, \texttt{dinggg@tsinghua.edu.cn}, \texttt{yangliu@ntu.edu.sg}
  }
}

\keywords{Continual learning, multimodal large language models, cross-attention, spectral regularization, visual question answering}

\begin{document}

\begin{abstract}
Multimodal large language models (MLLMs) are increasingly required to adapt to non-stationary streams of visual domains, question types, and user instructions, yet continual fine-tuning often causes severe forgetting of previously acquired multimodal skills. Existing continual vision--language methods mainly preserve outputs, replay data or pseudo-data, regularize embedding geometry, or allocate task-specific parameters, but they provide limited control over how the internal cross-modal attention patterns supporting old skills drift during adaptation. In this paper, we propose \emph{Attention-Spectrum Regularization} (ASR), a replay-free continual learning framework that preserves skill-conditioned structures of cross-modal attention. ASR treats cross-attention maps as two-dimensional signals, summarizes their scale and directional properties into compact spectral statistics, and stores only skill-wise prototype distributions instead of replaying past image-question pairs, generated pseudo-examples, or old-stage teacher snapshots. During later stages, a phase-invariant spectral regularizer constrains harmful drift of these prototypes while allowing instance-level attention to adapt to new tasks. We further provide theoretical analysis showing that skill-conditioned spectral drift controls forgetting under a spectral sufficiency assumption, and that Fourier power spectra are stable to spatial translations and bounded perturbations. Experiments on continual VQA and multimodal instruction-tuning benchmarks, including VQA v2, VQACL, CLT-VQA, CoIN, and UCIT, show that ASR consistently improves final performance and reduces forgetting over strong replay-, regularization-, and adapter-based baselines. These results suggest that preserving skill-level attention structure is an effective and lightweight mechanism for continual MLLMs. Code is available at \href{https://github.com/Creative-zcx/attention-spectrum-replay}{github.com/Creative-zcx/attention-spectrum-replay}.
\end{abstract}

\maketitle

\section{Introduction}
\label{sec:intro}

Multimodal large language models (MLLMs) are increasingly deployed in
dynamic environments where new domains, question types, and multimodal
instructions arrive over time~\cite{liu2023visual,xu2024lvlm,chu2025improving}. This continual adaptation is essential for
long-lived VQA systems, multimodal assistants, retrieval applications, and
real-world agents~\cite{constructvl23,COIN,chu2026ct,chi2026impromptu}. However, adapting MLLMs in
non-stationary streams remains difficult: standard fine-tuning often causes
catastrophic forgetting, degrading both previous task performance and
zero-shot generalization~\cite{VQA_CL,modx23,zscl23,chu2026medtri,wang2021semantic}.

Existing continual vision--language learning methods have made important
progress from several perspectives. Continual VQA studies design skill- or
question-type streams and mitigate forgetting through replay, pseudo-data,
symbolic prompts, or expert routing~\cite{VQA_CL,sgp23,vqacl23,quad25,clmoe25}.
Continual CLIP/VLM methods preserve image--text alignment by regularizing
features, similarities, or representation topology~\cite{incclip22,modx23,zscl23,ctp23,cclip25}.
Recent continual multimodal instruction-tuning methods further improve
parameter-efficient adaptation through LoRA, MoE adapters, routing, style
normalization, or data selection~\cite{Hidellava25,SEFE25,BranchLoRA25,DMoLE25,Adapt25}.
These methods substantially improve the stability--plasticity trade-off, but
they mainly operate on external outputs, stored or generated samples,
embedding geometry, or trainable parameter allocation.

A key aspect remains underexplored: the internal cross-modal attention
structure that supports multimodal reasoning. Question-conditioned visual
attention has long been central to VQA models~\cite{yang2016stacked,anderson2018bottomup},
and modern VL transformers explicitly use co-attentional or cross-modality
layers to align language and visual regions~\cite{lu2019vilbert,tan2019lxmert}.
For skills such as counting, localization, relation reasoning, and text
reading, successful prediction depends not only on the final answer or global
image--text similarity, but also on where the model looks, how fine-grained
its focus is, and which spatial structures are emphasized~\cite{selvaraju2017gradcam,xiao2026reversible,zhang2026multivariate,jiao2026large}.
Thus, a model may preserve part of its representation geometry while still
drifting in the visual evidence it uses for old skills; this concern is
consistent with VQA studies showing that answer accuracy can be inflated by
language priors rather than grounded visual understanding~\cite{goyal2017making}.
Although attention transfer and attention-prior studies suggest that attention
maps can serve as useful supervisory signals,
existing continual VL methods provide limited mechanisms for explicitly
diagnosing or preserving such skill-dependent patterns of visual focus.

\begin{figure}[ht]
	\centering
	\includegraphics[width=\linewidth]{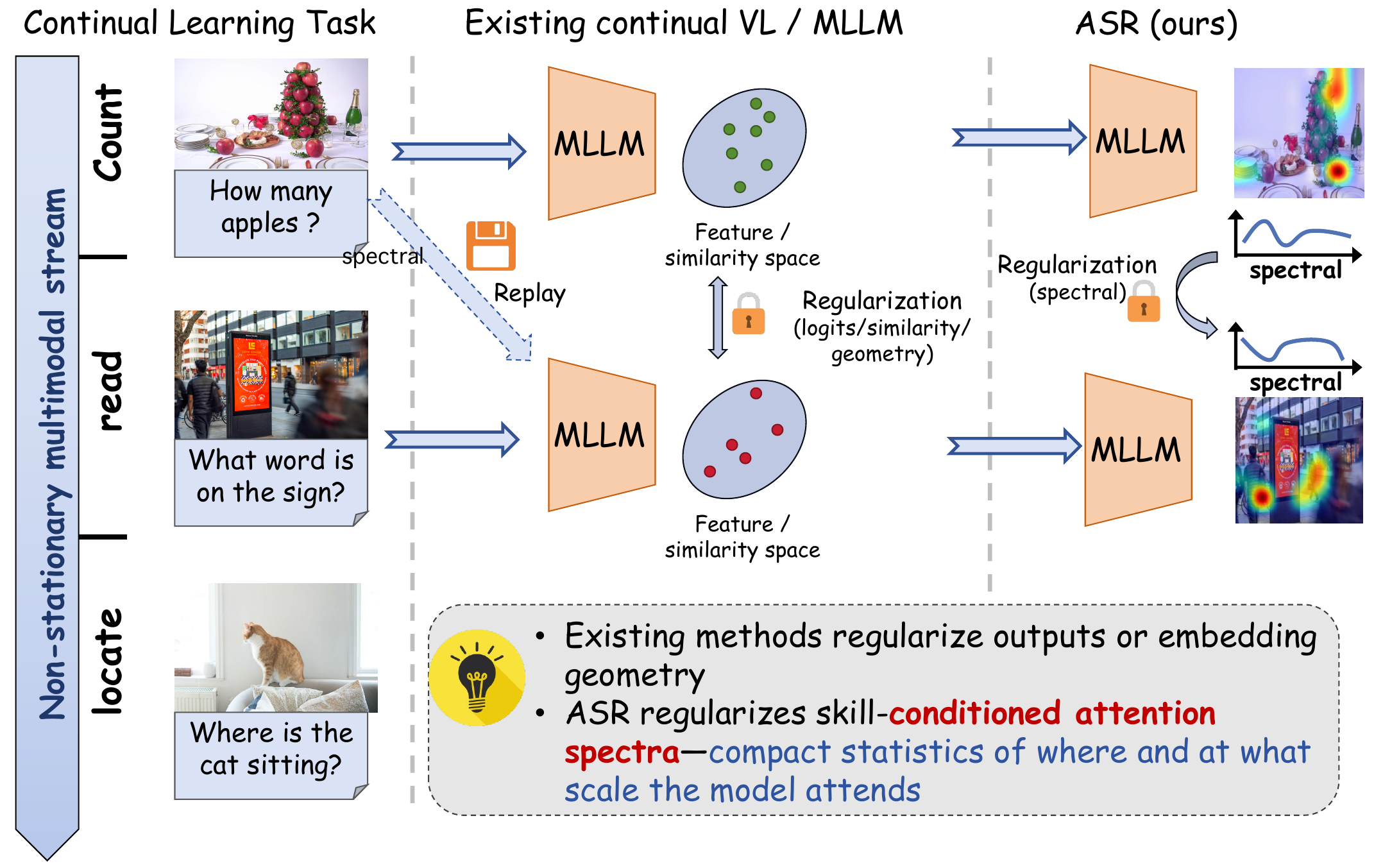}
	\caption{
Existing continual learning (CL) methods adapt a multimodal backbone mainly by replaying past data or pseudo-examples and regularizing logits or feature / similarity geometry.
ASR instead treats cross-attention as a 2D signal, and applies a phase-invariant spectral regularization to keep attention spectra stable across stages.}
\label{fig:teaser}
\end{figure}

This paper asks: \emph{can we mitigate forgetting in MLLMs by preserving the
skill-conditioned structure of cross-modal attention?} We hypothesize that
different multimodal skills induce relatively stable attention patterns, and
that harmful drift of these patterns is an important source of forgetting.
Based on this view, we propose \emph{Attention-Spectrum Regularization} (ASR), which
treats cross-modal attention as a structured two-dimensional signal and
preserves compact skill-level summaries of its scale and directional
properties. This allows ASR to regularize internal visual focus behavior
without storing past images, questions, or teacher models.

Our contributions are summarized as follows:
\begin{itemize}
	\item \textbf{Insight into multimodal forgetting.}
	We show that different reasoning skills exhibit distinct cross-attention
	spectral patterns, and that drift of these patterns is strongly associated
	with skill-wise forgetting.

    \item \textbf{Replay-free structural retention.}
    We propose \emph{Attention-Spectrum Regularization} (ASR), a plug-in continual
    learning framework that preserves compact skill-conditioned attention
    statistics without storing or replaying past samples, pseudo-samples, or old-stage teacher snapshots.

    \item \textbf{Consistent gains across benchmarks.}
    ASR improves final performance and reduces forgetting on continual VQA
    and multimodal instruction-tuning benchmarks, including VQA v2, VQACL,
    CLT-VQA, CoIN, and UCIT, with consistent gains across multiple MLLM
    backbones.
\end{itemize}

\section{Related Work}
\label{sec:related}

\subsection{Continual vision--language and VQA learning.}
Early continual learning methods mainly mitigated forgetting by constraining parameters or predictions, as in EWC~\cite{EWC} and LwF~\cite{LwF}, or by replaying a small memory of past samples~\cite{ER,DER}. With the rise of vision--language models, this problem shifted from closed-set recognition to preserving cross-modal alignment and compositional reasoning. Representative studies explored continual CLIP/VLM pretraining and retrieval~\cite{incclip22,modx23,ctp23,ticclip24,cclip25}, data-free structured concept learning~\cite{constructvl23}, and continual VQA streams where models must retain question-specific reasoning skills~\cite{VQA_CL,vqacl23}. VQACL introduced a skill--concept benchmark with novel compositions~\cite{vqacl23}; TRIPLET decoupled unimodal prompt learning before cross-modal interaction~\cite{triplet23}; SGP replayed symbolic scene-graph prompts~\cite{sgp23}; QUAD used questions-only replay with attention consistency distillation~\cite{quad25}; and recent SoTA methods such as CL-MoE and BCP-MFA address continual VQA through expert routing or drift correction~\cite{clmoe25,BCPMFA25}. These methods substantially improve stability, but they often depend on stored questions, symbolic surrogates, generated pseudo-data, teacher signals, or additional routing capacity. In contrast, ASR keeps neither past images nor past questions, and stores only compact skill-conditioned spectral prototypes of cross-modal attention.

\subsection{Continual multimodal instruction tuning of MLLMs.}
Continual instruction tuning extends the above setting from VQA-style tasks to heterogeneous instruction streams. CoIN established a benchmark for sequential multimodal instruction tuning and showed that MLLMs suffer severe forgetting across diverse task families~\cite{COIN}. Subsequent work improved parameter-efficient adaptation through LoRA-style subspaces and expert modules~\cite{lora,Olora}. HiDe-LLaVA separates task-specific and task-general components according to representation similarity~\cite{Hidellava25}; SEFE distinguishes superficial answer-style forgetting from essential ability forgetting~\cite{SEFE25}; BranchLoRA introduces asymmetric LoRA branches and routing~\cite{BranchLoRA25}; D-MoLE dynamically composes curriculum LoRA experts~\cite{DMoLE25}; and Adapt-$\infty$ selects dynamic data to scale continual multimodal instruction tuning~\cite{Adapt25}. These methods mainly operate on adapter allocation, routing, answer style, or data selection. ASR is complementary: rather than allocating more task-specific capacity, it regularizes the spatial scale and directional structure of cross-modal attention, providing an internal behavioral constraint that can be combined with parameter-efficient tuning.

\subsection{Attention, representation, and spectral retention.}
Knowledge retention has also been studied through intermediate representations and attention maps. Feature or representation distillation has been used in continual learning and dense prediction~\cite{LwF,Plop}, while attention transfer shows that attention maps can serve as compact supervisory signals~\cite{attentiontransfer17,tian2022vibus,tian2022vibus}. In multimodal continual learning, MAFED observes that different modalities may evolve at different rates and uses modality-aware feature distillation~\cite{mafed24}; GaB studies data-free continual VQA through generated pseudo-rehearsal~\cite{gab25}; and QUAD shows that attention drift is an important source of forgetting in VQA~\cite{quad25}. However, directly matching high-dimensional features or raw attention maps can be teacher-dependent, memory-inefficient, and overly restrictive when new tasks require different visual grounding. ASR instead treats cross-modal attention as a two-dimensional signal and preserves its phase-insensitive Fourier statistics. By modeling radial energy, angular anisotropy, and dominant spectral modes with skill-conditioned Gaussian prototypes, ASR retains stable reasoning-specific visual focus patterns while allowing instance-level attention to adapt to new data.

\begin{figure*}[t]
  \centering
  \includegraphics[width=0.98\linewidth]{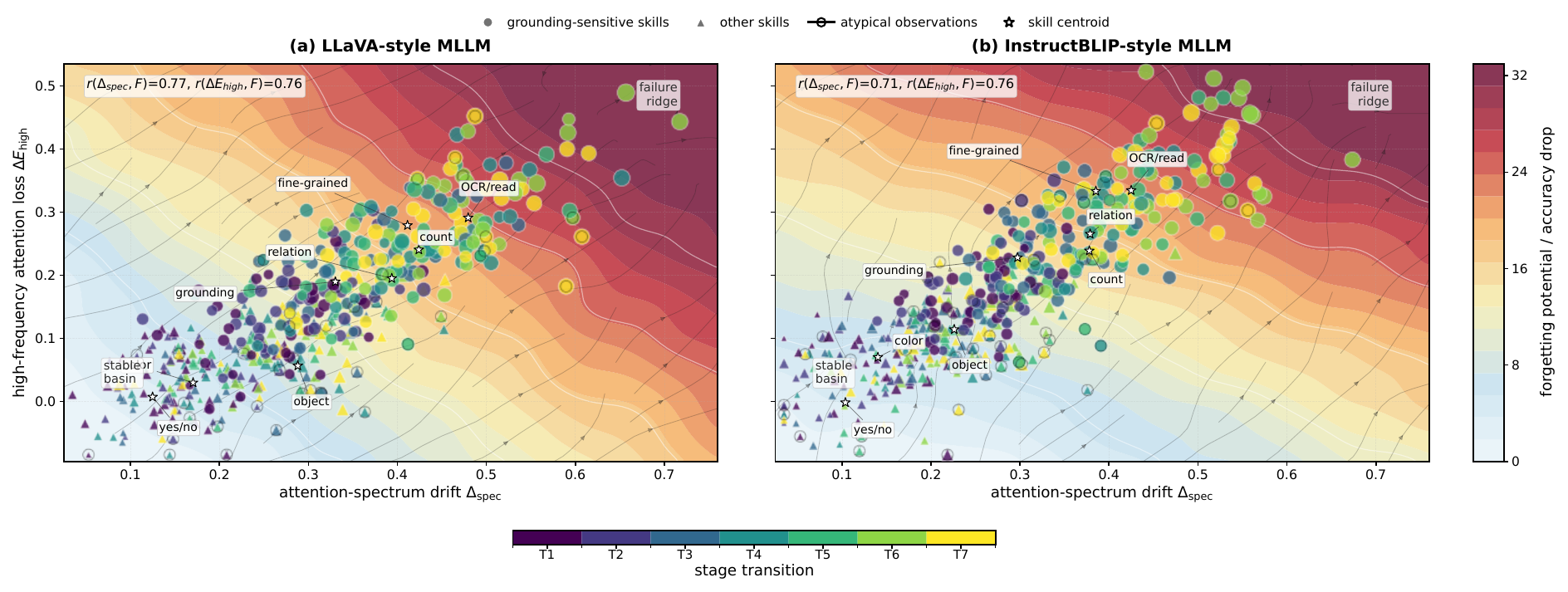}
  \vspace{-2mm}
  \caption{
  Exploratory potential-field diagnosis of vanilla sequential fine-tuning on
  two different MLLM backbones.
  Each marker corresponds to one skill--transition--seed observation. The
  horizontal axis measures attention-spectrum drift, and the vertical axis
  measures high-frequency attention loss. The background contours show the
  smoothed forgetting potential, and the streamlines indicate the direction of
  increasing forgetting. Marker area denotes the increase of attention entropy.
  Rings mark atypical observations caused by noisy stage transitions or
  answer-distribution shifts. T1--T7 denote consecutive stage transitions.
  Both models exhibit a similar stable basin at low spectral drift and low
  high-frequency loss, as well as a failure ridge where large spectrum drift
  coincides with weakened fine-scale attention.
  }
  \label{fig:prelim_potential}
  \vspace{-2mm}
\end{figure*}

\section{Preliminary Study}
\label{sec:prelim}

Before introducing our method, we first perform an exploratory study to examine
\emph{how multimodal forgetting manifests inside cross-modal attention}.

\paragraph{Diagnostic setup}
We consider two representative multimodal backbones: a LLaVA-style MLLM and an
InstructBLIP-style MLLM. Each model is continually fine-tuned on the same
non-stationary VQA stream. After each stage, we evaluate old validation subsets
grouped by reasoning skills, such as \texttt{count}, \texttt{color},
\texttt{spatial}, \texttt{relation}, \texttt{OCR/read}, and
\texttt{fine-grained}. For model $m$, skill $s$, and transition
$t{-}1 \rightarrow t$, the skill-wise forgetting is defined as
\begin{equation}
  F_{m,s}^{(t)}
  =
  \max_{\tau < t}
  \operatorname{Acc}
  \big(
    \boldsymbol{\theta}_{m}^{(\tau)};\mathcal{V}_s
  \big)
  -
  \operatorname{Acc}
  \big(
    \boldsymbol{\theta}_{m}^{(t)};\mathcal{V}_s
  \big),
\end{equation}
where $\mathcal{V}_s$ denotes the held-out validation samples associated with
skill $s$.

For the same validation samples, we extract cross-modal attention maps from the
saved checkpoints and summarize their frequency content. Let
$\bar{\mathbf{r}}_{m,s}^{(t)}$ and $\bar{\mathbf{d}}_{m,s}^{(t)}$ denote the
mean radial and angular spectra of skill $s$ at checkpoint
$\boldsymbol{\theta}_{m}^{(t)}$. We measure the spectrum drift across two
adjacent checkpoints by
\begin{equation}
  \Delta_{\mathrm{spec},m,s}^{(t)}
  =
  \operatorname{JSD}
  \big(
    \bar{\mathbf{r}}_{m,s}^{(t-1)}
    \Vert
    \bar{\mathbf{r}}_{m,s}^{(t)}
  \big)
  +
  \eta
  \operatorname{JSD}
  \big(
    \bar{\mathbf{d}}_{m,s}^{(t-1)}
    \Vert
    \bar{\mathbf{d}}_{m,s}^{(t)}
  \big),
\end{equation}
where $\operatorname{JSD}(\cdot\Vert\cdot)$ is the Jensen--Shannon divergence.
We further measure the loss of high-frequency attention energy:
\begin{equation}
  \Delta E_{\mathrm{high},m,s}^{(t)}
  =
  \sum_{\rho_k > \rho_0}
  \left[
    \bar{r}_{m,s,k}^{(t-1)}
    -
    \bar{r}_{m,s,k}^{(t)}
  \right],
\end{equation}
where $\rho_0$ separates low/mid-frequency and high-frequency bands. A positive
$\Delta E_{\mathrm{high}}$ indicates that fine-scale attention components
become weaker after continual tuning.

\paragraph{Forgetting potential field}
To visualize the failure landscape, we construct a smooth diagnostic potential
field for each model:
\begin{equation}
  \mathcal{U}_{m}(x,y)
  \approx
  \mathbb{E}
  \left[
    F_{m,s}^{(t)}
    \mid
    x=\Delta_{\mathrm{spec},m,s}^{(t)},
    \,
    y=\Delta E_{\mathrm{high},m,s}^{(t)}
  \right].
\end{equation}
It is estimated
from the observed skill--transition--seed measurements by smooth interpolation.
The contours indicate the severity of expected forgetting, while the streamlines
show the direction of increasing forgetting potential.

\paragraph{Observations}
Fig.~\ref{fig:prelim_potential} shows that, despite architectural differences, the two backbones exhibit similar failure geometry. Skills with small spectral
drift and limited high-frequency loss remain in the stable basin and suffer only minor forgetting. 
In contrast, grounding-sensitive skills such as
\texttt{count}, \texttt{relation}, \texttt{OCR/read}, \texttt{grounding}, and
\texttt{fine-grained} tend to move toward the high-potential region. These skills do not merely lose output accuracy; their cross-modal attention also undergoes a structured change in frequency space.

Some points show moderate forgetting without severe spectral drift, which may be caused by answer-space shift or language-side bias. Conversely, a few samples show large attention drift but limited accuracy degradation, suggesting that not every attention change is harmful. Nevertheless, the dominant trend is consistent across both models: severe forgetting is most likely when the attention spectrum drifts and fine-scale visual evidence is weakened simultaneously.

This preliminary study suggests that multimodal forgetting has a measurable attention-level signature. More importantly, this signature is skill-dependent:
different reasoning skills occupy different regions of the potential field and
degrade along different trajectories. These observations motivate our method
design. Instead of storing old data or only constraining output logits, we seek
to preserve compact, skill-conditioned summaries of cross-modal attention
structure. The next section instantiates this idea as
\emph{Attention-Spectrum Regularization}.

\begin{figure*}[htb]
	\centering
	\includegraphics[width=1\linewidth]{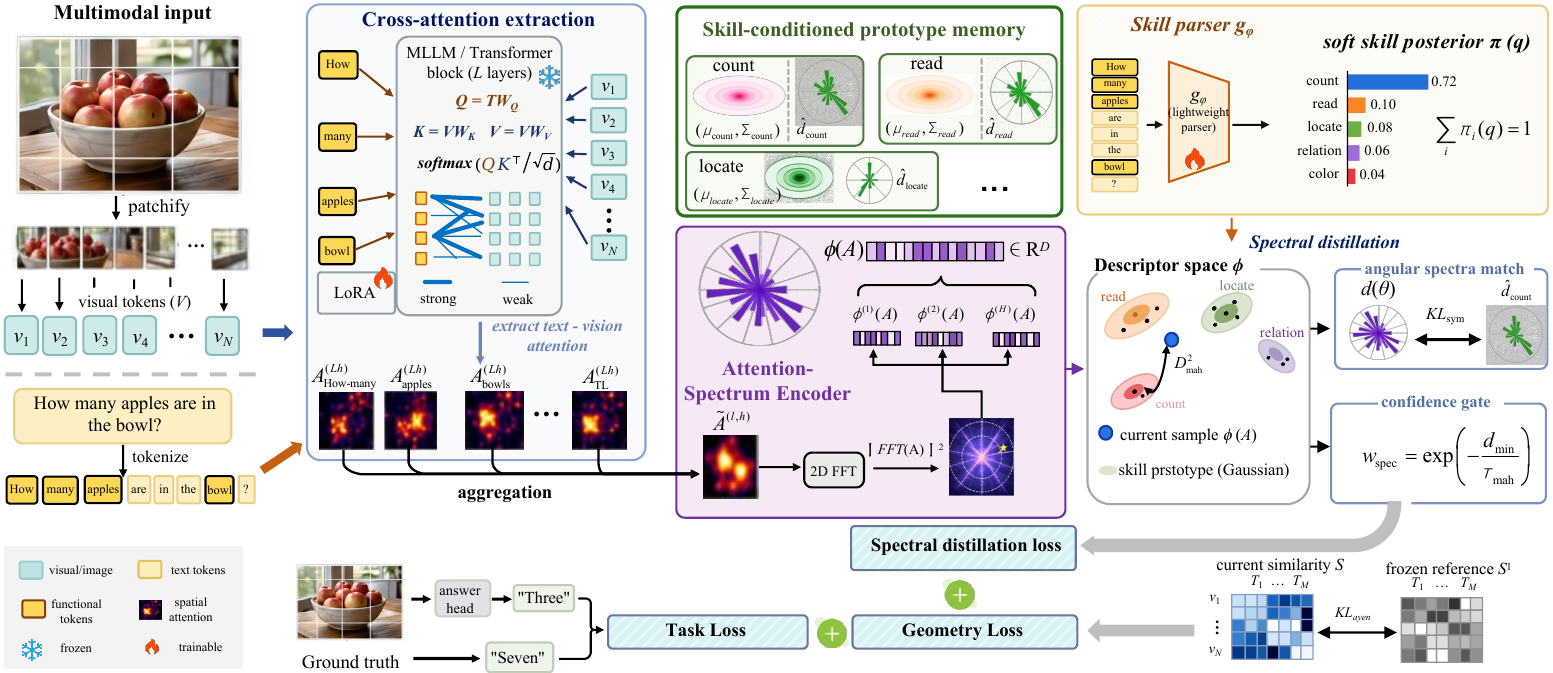}
	\caption{Overview of Attention-Spectrum Regularization (ASR). We extract cross-modal attention maps from a multimodal LLM, encode them into low-dimensional spectral descriptors, aggregate them into skill-conditioned prototypes stored in a compact memory, and apply a spectral distillation loss during training on later stages. ASR only keeps spectrum statistics for each reasoning skill.}
	\label{fig:asr_pipline}
\end{figure*}

\section{Method}
\label{sec:method}

We propose \emph{Attention-Spectrum Regularization} (ASR), a replay-free continual learning framework for MLLMs. 
ASR treats these cross-attention maps as two-dimensional signals and operates on them through four components: 
(1) a \emph{cross-attention extractor} that selects layers/heads and aggregates functional tokens;
(2) a \emph{spectral encoder} $\boldsymbol{\phi}(\cdot)$ that maps attention maps to low-dimensional spectral statistics summarizing \emph{where} and \emph{at what scale} the model attends;
(3) a \emph{skill parser} $g_{\boldsymbol{\psi}}$ producing a distribution over reasoning skills $\boldsymbol{\pi}(q)$;
(4) a \emph{prototype memory} $\mathcal{M}$ storing skill-conditioned spectral prototypes.
The overall pipeline is illustrated in Fig.~\ref{fig:asr_pipline}.

\subsection{Preliminaries}
\label{subsec:problem}

At stage $t \in \{1,\dots,T\}$, the learner observes a dataset
$
  \mathcal{D}^{(t)} = \big\{(I_i^{(t)}, q_i^{(t)}, y_i^{(t)}) \big\}_{i=1}^{N_t},
$
where $I_i^{(t)}$ is an image, $q_i^{(t)}$ a question or instruction, and $y_i^{(t)}$ a task-specific target (answer tokens, relevance labels, etc.). Once stage $t$ is finished, $\mathcal{D}^{(t)}$ is no longer accessible.

A multimodal LLM $f_{\boldsymbol{\theta}}$ processes $(I,q)$ through a vision encoder, a language model, and architecture-specific multimodal fusion modules~\cite{zhang2026adaptive}. Since different MLLM families expose visual grounding signals in different forms, we do not assume that every backbone has explicit encoder--decoder cross-attention layers. Instead, ASR operates on a unified \emph{cross-modal attention map}, defined as a text-to-vision attention map. For models with explicit cross-attention, this map is the native cross-attention matrix. For decoder-only MLLMs such as LLaVA-style, Qwen-VL-style, and InternVL-style models, it is extracted from the text-to-vision submatrix of decoder self-attention after visual tokens have been inserted into the language-model context. Detailed backbone-specific extraction rules are provided in Appendix~\ref{app:attention_extraction}.

Let $\mathcal{L}_\mathrm{attn}$ be the set of candidate multimodal attention layers or decoder blocks, and let $\mathcal{H}_\mathrm{attn}$ be the set of attention heads. For layer or block $l \in \mathcal{L}_\mathrm{attn}$ and head $h \in \mathcal{H}_\mathrm{attn}$, ASR obtains
$
  A^{(l,h)} \in \mathbb{R}^{T_q \times H \times W},
$
where $T_q$ is the number of question/instruction tokens and $H \times W$ is the reconstructed visual grid. The slice $A^{(l,h)}_{\tau} \in \mathbb{R}^{H \times W}$ encodes how token $\tau$ attends to visual evidence.

We assume a fixed set of reasoning skills $\mathcal{S}$ (e.g., $\{\texttt{count}, \texttt{color}, \texttt{read}, \dots\}$). A lightweight parser $g_{\boldsymbol{\psi}}$ maps $q$ to a probability vector~\cite{chen2022cerberus}:
\begin{equation}
  \boldsymbol{\pi}(q) = \big(\pi_s(q)\big)_{s \in \mathcal{S}} \in \Delta^{|\mathcal{S}|},
  \qquad
  \sum_{s \in \mathcal{S}} \pi_s(q) = 1,
\end{equation}
where $\pi_s(q)$ is the posterior that skill $s$ is required by $q$.

\subsection{Attention-Spectrum Encoding}
\label{subsec:spectrum}

Cross-modal attention maps capture where the model looks when solving a multimodal problem. Empirically, different skills induce characteristic spatial scales and directional patterns. ASR encodes these properties via frequency-domain statistics.

\vspace{1mm}
\noindent\textbf{Token aggregation.}
Given $A^{(l,h)} \in \mathbb{R}^{T_q \times H \times W}$, we select question-side functional tokens $U(q)\subseteq \{1,\dots,T_q\}$, such as wh-words, verbs, relation words, OCR-related tokens, and head nouns. Answer tokens are excluded when teacher forcing is used. We average the corresponding text-to-vision maps:
\begin{equation}
  \tilde{A}^{(l,h)}(x,y)
  = \frac{1}{|U(q)|} \sum_{\tau \in U(q)} A^{(l,h)}_{\tau}(x,y),
\end{equation}
then normalize the map to sum to one.
For brevity we write $A(x,y)$ for a normalized map.

\vspace{1mm}
\noindent\textbf{2D Fourier spectrum.}
We compute the 2D discrete Fourier transform
\begin{equation}
  \hat{A}(u,v)
  = \sum_{x=0}^{H-1} \sum_{y=0}^{W-1}
    A(x,y) \exp\!\bigg(- 2\pi i \Big(\frac{ux}{H} + \frac{vy}{W}\Big) \bigg),
\end{equation}
for $u\in\{0,\dots,H{-}1\}$ and $v\in\{0,\dots,W{-}1\}$, and the power spectrum
$
  P(u,v) = |\hat{A}(u,v)|^2.
$
We map $(u,v)$ to polar coordinates
\begin{align}
  \rho(u,v) &= \sqrt{\Big(\frac{u}{H}\Big)^2 + \Big(\frac{v}{W}\Big)^2}, \\
  \theta(u,v) &= \operatorname{atan2}\Big(\frac{v}{W}, \frac{u}{H}\Big),
\end{align}
and renormalize radii to $[0,1]$ by dividing by $\rho_{\max} = \max_{u,v} \rho(u,v)$.

\vspace{1mm}
\noindent\textbf{Radial and angular spectra.}
We discretize radii into $K$ bins $\{\mathcal{R}_k\}$ on $[0,1]$ and angles into $M_\theta$ bins $\{\Theta_m\}$ on $[-\pi,\pi)$.
For each head, we compute a radial spectrum $\mathbf{r}\in\mathbb{R}^K$ and an angular spectrum $\mathbf{d}\in\mathbb{R}^{M_\theta}$ by averaging the power $P(u,v)$ over frequencies whose radius or angle falls into the corresponding bin:
\begin{equation}
  r_k \propto \sum_{\rho(u,v)\in\mathcal{R}_k} P(u,v),
  \qquad
  d_m \propto \sum_{\theta(u,v)\in\Theta_m} P(u,v),
\end{equation}
followed by normalization.
We further define an anisotropy vector $\mathbf{a}\in\mathbb{R}^{M_\theta}$ as
\begin{equation}
  a_m = \frac{d_m}{\bar d + \varepsilon},
  \qquad
  \bar d = \frac{1}{M_\theta}\sum_m d_m,
\end{equation}
so that $a_m>1$ indicates directions with above-average energy. We use $M_\theta$ for angular bins to avoid overloading $M$, which denotes the number of continual tasks in the evaluation metrics.

\vspace{1mm}
\noindent\textbf{Dominant peak and per-head descriptor.}
We locate the dominant frequency peak $(u^\star,v^\star)=\arg\max_{u,v}P(u,v)$ with radius $\rho^\star$ and angle $\theta^\star$, and record its power $p^\star=P(u^\star,v^\star)$.
For each layer--head $(l,h)$ we then form a spectral descriptor
\begin{equation}
\boldsymbol{\phi}^{(l,h)}(A)
= \big[\mathbf{r}^\top,\;\mathbf{d}^\top,\;\mathbf{a}^\top,\;\rho^\star,\;\theta^\star,\;p^\star\big]^\top
\in \mathbb{R}^{D_0},
\end{equation}
where $D_0 = K + 2M_\theta + 3$.

\vspace{1mm}
\noindent\textbf{Layer--head aggregation.}
Given a set of selected attention layers or blocks $\mathcal{L}_\mathrm{sel}\subseteq\mathcal{L}_\mathrm{attn}$ and heads $\mathcal{H}_\mathrm{sel}\subseteq\mathcal{H}_\mathrm{attn}$, we denote $\mathcal{J}=\mathcal{L}_\mathrm{sel}\times\mathcal{H}_\mathrm{sel}$ and $J=|\mathcal{J}|$.
For each sample we aggregate head-level descriptors by their mean and element-wise variance,
\begin{equation}
\begin{aligned}
\boldsymbol{\mu}_\phi &= \frac{1}{J}\sum_{(l,h)\in\mathcal{J}} \boldsymbol{\phi}^{(l,h)}(A), \\
\boldsymbol{\sigma}_\phi^2 &= \frac{1}{J}\sum_{(l,h)\in\mathcal{J}}\big(\boldsymbol{\phi}^{(l,h)}(A)-\boldsymbol{\mu}_\phi\big)^{\odot 2},
\end{aligned}
\end{equation}
and define the final per-sample spectral descriptor as
\begin{equation}
\boldsymbol{\phi}(A) =
[\boldsymbol{\mu}_\phi^\top,\;\boldsymbol{\sigma}_\phi^\top]^\top
\in \mathbb{R}^D,
\qquad
D=2D_0.
\end{equation}
We similarly average head-level angular spectra $\mathbf{d}^{(l,h)}(\theta)$ over $(l,h)\in\mathcal{J}$ to obtain a per-sample angular profile $\mathbf{d}(\theta)$, which we will later match to skill-specific angular prototypes.

\subsection{Skill-Conditioned Spectrum Prototypes}
\label{subsec:prototypes}

Different skills induce distinct but stable attention spectra across tasks. ASR therefore maintains, for each skill $s\in\mathcal{S}$, a Gaussian distribution over spectral descriptors and a mean angular spectrum.

\vspace{1mm}
\noindent\textbf{Per-stage moments.}
At the end of stage $t$, we collect descriptors attributed to skill $s$ as
\begin{equation}
  \mathcal{F}_s^{(t)} 
  = \big\{ \boldsymbol{\phi}_i^{(t)} \;\big|\; (I_i^{(t)}, q_i^{(t)}, y_i^{(t)}) \in \mathcal{D}^{(t)},\; \pi_s(q_i^{(t)}) > \tau_\mathrm{skill} \big\}
\end{equation}
where $s\in\mathcal{S}$, $\boldsymbol{\phi}_i^{(t)} = \boldsymbol{\phi}(A_i^{(t)})$ and $\tau_\mathrm{skill}\in[0,1)$ is a threshold. Let $n_s^{(t)} = |\mathcal{F}_s^{(t)}|$. We estimate per-stage mean and diagonal covariance
\begin{equation} \small
\begin{aligned}
  \widehat{\boldsymbol{\mu}}_s^{(t)}
  &= \frac{1}{n_s^{(t)}}
     \sum_{\boldsymbol{\phi} \in \mathcal{F}_s^{(t)}} \boldsymbol{\phi}, \\
  \widehat{\boldsymbol{\Sigma}}_s^{(t)}
  &= \operatorname{diag}\!\Big(
       \frac{1}{n_s^{(t)} - 1}
       \sum_{\boldsymbol{\phi} \in \mathcal{F}_s^{(t)}}
       \big(\boldsymbol{\phi} - \widehat{\boldsymbol{\mu}}_s^{(t)}\big)^{\odot 2}
     \Big).
\end{aligned}
\end{equation}
Similarly, with angular spectra $\mathbf{d}_i^{(t)} \in \mathbb{R}^M$ for those samples,
\begin{equation}
  \widehat{\mathbf{d}}_s^{(t)} 
  = \frac{1}{n_s^{(t)}}
    \sum_{i: \boldsymbol{\phi}_i^{(t)} \in \mathcal{F}_s^{(t)}}
    \mathbf{d}_i^{(t)}.
\end{equation}

\vspace{1mm}
\noindent\textbf{Exponential moving average.}
We maintain a global prototype memory
$
  \mathcal{M} = \big\{ (\boldsymbol{\mu}_s, \boldsymbol{\Sigma}_s, \hat{\mathbf{d}}_s) \big\}_{s \in \mathcal{S}},
$
initialized at $t=1$. After stage $t$ we update
\begin{align}
  \boldsymbol{\mu}_s 
  &\leftarrow \alpha \, \boldsymbol{\mu}_s + (1-\alpha) \, \widehat{\boldsymbol{\mu}}_s^{(t)}, \\
  \boldsymbol{\Sigma}_s 
  &\leftarrow \alpha \, \boldsymbol{\Sigma}_s + (1-\alpha) \, \widehat{\boldsymbol{\Sigma}}_s^{(t)}, \\
  \hat{\mathbf{d}}_s
  &\leftarrow \alpha \, \hat{\mathbf{d}}_s + (1-\alpha) \, \widehat{\mathbf{d}}_s^{(t)},
\end{align}
with $\alpha\in[0,1)$ controlling the memory horizon. Thus, for each $s$ we approximate
$
  \boldsymbol{\phi}(A)\,|\,s \sim \mathcal{N}(\boldsymbol{\mu}_s, \boldsymbol{\Sigma}_s),
$
and interpret $\hat{\mathbf{d}}_s$ (after normalization) as the mean angular spectrum for skill $s$.

\subsection{Spectral Distillation}
\label{subsec:distillation}

Given prototypes, ASR constrains the current model to use attention spectra consistent with previously learned skills.

\vspace{1mm}
\noindent\textbf{Skill-weighted Mahalanobis penalty.}
For a sample $(I,q)$ at stage $t{+}1$, let $\boldsymbol{\phi}=\boldsymbol{\phi}(A)$ be its descriptor, $\mathbf{d}(\theta)$ its angular spectrum, and $\boldsymbol{\pi}(q)$ its skill posterior. For $s\in\mathcal{S}$, define the squared Mahalanobis distance
\begin{equation}
  D^2_{\mathrm{mah}}(\boldsymbol{\phi} \,\Vert\, s)
  = \big(\boldsymbol{\phi} - \boldsymbol{\mu}_s\big)^\top
    \boldsymbol{\Sigma}_s^{-1}
    \big(\boldsymbol{\phi} - \boldsymbol{\mu}_s\big),
\end{equation}
where $\boldsymbol{\Sigma}_s^{-1}$ is the inverse diagonal covariance. The skill-weighted penalty is
\begin{equation}
  \ell_{\mathrm{spec}}^{\mathrm{mah}}(\boldsymbol{\phi}, q)
  = \sum_{s \in \mathcal{S}} 
    \pi_s(q) \, D^2_{\mathrm{mah}}(\boldsymbol{\phi} \,\Vert\, s).
\end{equation}

\vspace{1mm}
\noindent\textbf{Angular KL divergence.}
We normalize $\mathbf{d}(\theta)$ and $\hat{\mathbf{d}}_s$ to discrete distributions
\begin{equation}
  p_m = \frac{d_m}{\sum_{m'} d_{m'} + \varepsilon}, 
  \qquad
  \hat{p}_{s,m} = \frac{\hat{d}_{s,m}}{\sum_{m'} \hat{d}_{s,m'} + \varepsilon},
\end{equation}
and use a symmetric KL divergence.
The angular term is
\begin{equation}
  \ell_{\mathrm{spec}}^{\mathrm{ang}}(\mathbf{d}, q)
  = \sum_{s \in \mathcal{S}} 
    \pi_s(q) \, \mathrm{KL}_\mathrm{sym}\big(\mathbf{d} \,\Vert\, \hat{\mathbf{d}}_s\big).
\end{equation}

\vspace{1mm}
\noindent\textbf{Confidence-adaptive weight.}
We reduce regularization for spectra that are far from all prototypes. Define
\begin{equation}
\begin{split}
  d_{\min}(\boldsymbol{\phi}) 
  &= \min_{s \in \mathcal{S}} D^2_{\mathrm{mah}}(\boldsymbol{\phi} \,\Vert\, s), \\
  w_{\mathrm{spec}}(\boldsymbol{\phi}) 
  &= \exp\!\Big(-\frac{d_{\min}(\boldsymbol{\phi})}{\tau_{\mathrm{mah}}}\Big),
\end{split}
\end{equation}
with temperature $\tau_{\mathrm{mah}} > 0$. When $\boldsymbol{\phi}$ is close to some prototype, $w_{\mathrm{spec}}\approx1$; otherwise it decays exponentially.

\vspace{1mm}
\noindent\textbf{Spectral distillation loss.}
The spectral distillation loss for sample $(I,q)$ is
\begin{equation}
\begin{split}
  \mathcal{L}_{\mathrm{spec}}(I,q)
  = w_{\mathrm{spec}}\big(\boldsymbol{\phi}(A)\big)
    \bigg(& \ell_{\mathrm{spec}}^{\mathrm{mah}}\big(\boldsymbol{\phi}(A), q\big) \\
          & + \lambda_{\mathrm{ang}}
            \,\ell_{\mathrm{spec}}^{\mathrm{ang}}\big(\mathbf{d}(\theta), q\big)
    \bigg),
\end{split}
\end{equation}
with $\lambda_{\mathrm{ang}}\ge0$. 

\subsection{Objective and Training Schedule}
\label{subsec:objective}

\begin{algorithm}[t]
\small
\caption{Attention-Spectrum Regularization (ASR)}
\label{alg:asr}
\begin{algorithmic}[1]
\Require Stage datasets $\{\mathcal{D}^{(t)}\}_{t=1}^{T}$, MLLM $f_{\boldsymbol{\theta}}$, skill parser $g_{\boldsymbol{\psi}}$, decay $\alpha$, weights $\beta,\gamma$
\Ensure Final model parameters $\boldsymbol{\theta}$ and spectrum memory $\mathcal{M}$
\State Initialize prototype memory $\mathcal{M}\leftarrow \emptyset$
\For{$t=1,\ldots,T$}
    \For{mini-batch $\mathcal{B}\subset \mathcal{D}^{(t)}$}
        \State Run $f_{\boldsymbol{\theta}}$ on $\mathcal{B}$ to obtain predictions and cross-attention maps $\{A^{(l,h)}\}$
        \State Aggregate functional-token maps and encode spectra $(\boldsymbol{\phi}_i,\mathbf{d}_i)$ by 2D FFT
        \State Compute skill posterior $\boldsymbol{\pi}_i=g_{\boldsymbol{\psi}}(q_i)$ for each sample
        \State Compute $\mathcal{L}_{\mathrm{spec}}$ using skill-weighted Mahalanobis distance, angular KL divergence, and confidence-adaptive weight
        \State Compute $\mathcal{L}^{(t)}=\frac{1}{|\mathcal{B}|}\sum_i\big[\mathcal{L}_{\mathrm{task}}+\beta\mathcal{L}_{\mathrm{spec}}\big]+\gamma\mathcal{L}_{\mathrm{geo}}$
        \State Update $\boldsymbol{\theta}\leftarrow \boldsymbol{\theta}-\eta\nabla_{\boldsymbol{\theta}}\mathcal{L}^{(t)}$
    \EndFor
    \State Recompute $(\boldsymbol{\phi}_i,\mathbf{d}_i,\boldsymbol{\pi}_i)$ on $\mathcal{D}^{(t)}$ or a subset before discarding data
    \For{skill $s\in\mathcal{S}$}
        \State Collect $\mathcal{F}_s^{(t)}=\{\boldsymbol{\phi}_i:\pi_{i,s}>\tau_{\mathrm{skill}}\}$ and estimate $(\widehat{\boldsymbol{\mu}}_s^{(t)},\widehat{\boldsymbol{\Sigma}}_s^{(t)},\widehat{\mathbf{d}}_s^{(t)})$
        \State Initialize or update $(\boldsymbol{\mu}_s,\boldsymbol{\Sigma}_s,\widehat{\mathbf{d}}_s)\in\mathcal{M}$ by EMA with decay $\alpha$
    \EndFor
\EndFor
\State \Return $\boldsymbol{\theta},\mathcal{M}$
\end{algorithmic}
\end{algorithm}

Given $\hat{y} = f_{\boldsymbol{\theta}}(I,q)$ at stage $t$, we denote the task loss by $\mathcal{L}_{\mathrm{task}}(I,q,y)$.

\vspace{1mm}
\noindent\textbf{Geometry regularizer.}
To avoid over-regularization in the spectral space, we add a light geometric regularizer that anchors cross-modal similarities to a frozen reference encoder $f_{\boldsymbol{\theta}^0}$ (e.g., the original pretrained model). For a mini-batch of $B$ pairs $\{(I_i,q_i)\}_{i=1}^B$, let $z_{I,i}, z_{q,i}$ be embeddings from $f_{\boldsymbol{\theta}}$ and $z_{I,i}^0, z_{q,i}^0$ from $f_{\boldsymbol{\theta}^0}$. Define cosine similarity matrices
\begin{align}
  S_{ij} &= \frac{\langle z_{I,i}, z_{q,j} \rangle}{\|z_{I,i}\| \,\|z_{q,j}\|}, &
  S^0_{ij} &= \frac{\langle z_{I,i}^0, z_{q,j}^0 \rangle}{\|z_{I,i}^0\| \,\|z_{q,j}^0\|},
\end{align}
and a batch-level loss
\begin{equation}
\begin{split}
  \mathcal{L}_{\mathrm{geo}} 
  = \frac{1}{2B} \sum_{i=1}^B 
    \Big( & \mathrm{KL}\big(\sigma(S_{i,:}) \,\Vert\, \sigma(S^0_{i,:})\big) \\
          & + \mathrm{KL}\big(\sigma(S^0_{i,:}) \,\Vert\, \sigma(S_{i,:})\big) \Big),
\end{split}
\end{equation}
where $\sigma$ is row-wise softmax.

\vspace{1mm}
\noindent\textbf{Total loss.}
For a mini-batch $\mathcal{B}^{(t)} \subset \mathcal{D}^{(t)}$ at stage $t$, the total loss is
\begin{equation}
\begin{split}
  \mathcal{L}^{(t)} 
  = \frac{1}{|\mathcal{B}^{(t)}|}
    \sum_{(I,q,y) \in \mathcal{B}^{(t)}}
    \Big[ & \mathcal{L}_{\mathrm{task}}(I,q,y) \\
          & + \beta \,\mathcal{L}_{\mathrm{spec}}(I,q) \Big]
    + \gamma \,\mathcal{L}_{\mathrm{geo}},
\end{split}
\end{equation}
with hyperparameters $\beta,\gamma\ge0$.

\noindent\textbf{Training schedule.}
The overall training procedure is summarized in Algorithm~\ref{alg:asr}. At each stage, ASR adapts the current MLLM using only the current data, where the spectral loss is inactive at the first stage and is enabled once prototype memory is available. After stage-wise optimization, ASR extracts attention-spectrum descriptors from the current stage, groups them according to skill posteriors, and updates the skill-conditioned prototype memory by exponential moving average before discarding the data. In this way, ASR preserves compact skill-level attention statistics without storing past samples or maintaining a teacher model.

\section{Theory}
\label{sec:theory}

\subsection{Forgetting Bound via Skill-Conditioned Spectral Drift}
\label{subsec:theory_forgetting}

We formalize ASR as a memory mechanism that preserves the
skill-conditioned distribution of attention spectra.  Let
$x=(I,q)$ denote an image--question pair and let
$z_{\theta}(x)=\phi(A_{\theta}(x))\in\R^D$ be the ASR spectral
descriptor extracted from the cross-modal attention of model
$f_{\theta}$.  For an old stage $m$ and a later stage $t>m$, let
$\theta_m$ and $\theta_t$ be the parameters after training stages
$m$ and $t$, respectively.  We write $S\in\cS$ for the reasoning
skill and define the skill-conditioned descriptor law
\begin{equation}
  Q_{m,s}^{\theta}
  :=
  \law\!\left(
    z_{\theta}(X)
    \,\middle|\,
    (X,Y,S)\sim \cP_m,\ S=s
  \right),
  \qquad s\in\cS .
  \label{eq:main_descriptor_law}
\end{equation}
Let $\omega_{m,s}:=\Pr_{\cP_m}(S=s)$ be the skill mass at stage
$m$.  The old-stage risk and the forgetting from stage $m$ to stage
$t$ are
\begin{equation}
\begin{aligned}
  R_m(\theta)
  & :=
  \E_{(X,Y)\sim \cP_m}
  \left[
    \ell(f_{\theta}(X),Y)
  \right],
  \\
  \Fgt_{m\to t}
  & :=
  \left[
    R_m(\theta_t)-R_m(\theta_m)
  \right]_+ .
\end{aligned}
\label{eq:main_risk_forgetting_def}
\end{equation}

For each old skill $s$, ASR stores a spectral prototype
$(\mu_{m,s},\Sigma_{m,s})$, where $\Sigma_{m,s}\succ 0$ is the
skill-wise covariance used to whiten spectral deviations.  It
induces the Mahalanobis ground metric
\begin{equation}
  d_{m,s}(z,z')
  :=
  \left\|
    \Sigma_{m,s}^{-1/2}(z-z')
  \right\|_2 .
  \label{eq:main_mahalanobis_metric}
\end{equation}
The following assumption states that, on an old skill, the loss
increase caused by changing the model can be read from the drift of
its spectral descriptors up to a small residual.

\begin{assumption}[Skill-wise spectral sufficiency]
\label{ass:main_spectral_sufficiency}
For every old stage $m$ and skill $s\in\cS$, there exist a measurable
surrogate $g_{m,s}:\R^D\to\R$, a constant $L_{m,s}>0$, and a residual
$\epsilon_{m,s}\ge 0$ such that, for
$\theta\in\{\theta_m,\theta_t\}$,
\begin{equation}
\begin{aligned}
  \left|
    R_{m,s}(\theta)
    -
    \E_{Z\sim Q_{m,s}^{\theta}}
    \left[
      g_{m,s}(Z)
    \right]
  \right|
  &\le
  \epsilon_{m,s},
  \\
  \left|
    g_{m,s}(z)-g_{m,s}(z')
  \right|
  &\le
  L_{m,s} d_{m,s}(z,z') ,
\end{aligned}
\label{eq:main_spectral_sufficiency}
\end{equation}
where
$R_{m,s}(\theta)
 :=
 \E[\ell(f_{\theta}(X),Y)\mid S=s]$.
\end{assumption}

Let $\W_{p,m,s}$ be the $p$-Wasserstein distance under the ground
metric $d_{m,s}$.  Let
$\cG_{m,s}^{\theta}$ denote the Gaussian with the same spectral mean
and covariance as $Q_{m,s}^{\theta}$ under the representation stored
by ASR.  Define the non-Gaussianity and the Gaussian spectral drift
as
\begin{equation}
\begin{aligned}
  \Gamma_{m,s}^{\theta}
  &:=
  \W_{2,m,s}
  \!\left(
    Q_{m,s}^{\theta},
    \cG_{m,s}^{\theta}
  \right),
  \\
  \Delta_{m,t,s}
  &:=
  \W_{2,m,s}
  \!\left(
    \cG_{m,s}^{\theta_t},
    \cG_{m,s}^{\theta_m}
  \right).
\end{aligned}
\label{eq:main_gamma_delta_def}
\end{equation}
When $\cG_{m,s}^{\theta_m}=\mathcal{N}(\mu_{m,s},\Sigma_{m,s})$ and
$\cG_{m,s}^{\theta_t}=\mathcal{N}(\mu_{t|m,s},\Sigma_{t|m,s})$,
$\Delta_{m,t,s}$ has the closed form
\begin{equation}
\begin{aligned}
  \Delta_{m,t,s}^{2}
  &=
  \left\|
    \Sigma_{m,s}^{-1/2}
    (\mu_{t|m,s}-\mu_{m,s})
  \right\|_2^2
  \\
  &\quad+
  \left\|
    \left(
      \Sigma_{m,s}^{-1/2}
      \Sigma_{t|m,s}
      \Sigma_{m,s}^{-1/2}
    \right)^{1/2}
    -I_D
  \right\|_F^2 .
\end{aligned}
\label{eq:main_closed_form_delta}
\end{equation}

\begin{theorem}[Skill-conditioned spectral drift controls forgetting]
\label{thm:spectral_drift_forgetting}
Under Assumption~\ref{ass:main_spectral_sufficiency}, for every old
stage $m<t$, the forgetting of stage $m$ after training until stage
$t$ satisfies the sharp distributional bound
\begin{equation}
\begin{aligned}
  \Fgt_{m\to t}
  &\le
  \sum_{s\in\cS}
  \omega_{m,s} L_{m,s}
  \W_{1,m,s}
  \!\left(
    Q_{m,s}^{\theta_t},
    Q_{m,s}^{\theta_m}
  \right)
  \\
  &\quad+
  2
  \sum_{s\in\cS}
  \omega_{m,s}\epsilon_{m,s}.
\end{aligned}
\label{eq:main_sharp_w1_bound}
\end{equation}
Moreover, the distributional drift is controlled by the ASR
skill-wise Gaussian spectral drift:
\begin{equation}
\begin{aligned}
  \Fgt_{m\to t}
  &\le
  \sum_{s\in\cS}
  \omega_{m,s} L_{m,s}
  \left(
    \Gamma_{m,s}^{\theta_t}
    +
    \Delta_{m,t,s}
    +
    \Gamma_{m,s}^{\theta_m}
  \right)
  \\
  &\quad+
  2
  \sum_{s\in\cS}
  \omega_{m,s}\epsilon_{m,s}.
\end{aligned}
\label{eq:main_gaussian_drift_bound}
\end{equation}
In particular, if the skill-conditioned descriptors are Gaussian in
the ASR spectral space, then
$\Gamma_{m,s}^{\theta_t}=\Gamma_{m,s}^{\theta_m}=0$ and forgetting is
controlled only by the closed-form drift
$\Delta_{m,t,s}$ in Eq.~\eqref{eq:main_closed_form_delta}.
\end{theorem}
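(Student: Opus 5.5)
The argument is a chain of three elementary steps: a skill-wise decomposition of the risk, a Kantorovich--Rubinstein duality step, and the triangle inequality in $\W_{2,m,s}$.

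\textbf{Step 1: decompose by skill.} By the law of total expectation, $R_m(\theta)=\sum_{s}\omega_{m,s}R_{m,s}(\theta)$. Since $[a]_+\le|a|$ whenever the right side bounds $a$ and is nonnegative, it suffices to bound
\[
\textstyle R_m(\theta_t)-R_m(\theta_m)\le\sum_s\omega_{m,s}\big|R_{m,s}(\theta_t)-R_{m,s}(\theta_m)\big|.
\]
For a fixed skill $s$, we insert the surrogate expectations from Assumption~\ref{ass:main_spectral_sufficiency} and apply the triangle inequality:
\[
|R_{m,s}(\theta_t)-R_{m,s}(\theta_m)|\le \epsilon_{m,s}+\big|\E_{Q^{\theta_t}_{m,s}}g_{m,s}-\E_{Q^{\theta_m}_{m,s}}g_{m,s}\big|+\epsilon_{m,s}.
\]
This is exactly where the factor $2$ in front of $\sum_s\omega_{m,s}\epsilon_{m,s}$ comes from.

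\textbf{Step 2: Kantorovich--Rubinstein.} The surrogate $g_{m,s}$ is $L_{m,s}$-Lipschitz with respect to $d_{m,s}$. Hence for any coupling $\gamma$ of $Q^{\theta_t}_{m,s}$ and $Q^{\theta_m}_{m,s}$,
\[
|\E g(Z)-\E g(Z')|\le L_{m,s}\E_\gamma d_{m,s}(Z,Z').
\]
Taking the infimum over couplings gives $L_{m,s}\W_{1,m,s}(Q^{\theta_t}_{m,s},Q^{\theta_m}_{m,s})$. This only requires the primal (coupling) direction, not full duality, so no compactness argument is needed. Summing over $s$ with weights $\omega_{m,s}$ yields \eqref{eq:main_sharp_w1_bound}. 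The only technical caveat is finiteness of first moments of the descriptor laws. This holds because descriptors are bounded: normalized attention maps have bounded power spectra, so all spectral statistics are bounded.

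\textbf{Step 3: pass to Gaussian drift.} By Jensen (or Cauchy--Schwarz applied to the optimal $\W_2$ coupling), $\W_{1,m,s}\le\W_{2,m,s}$. Since $\W_{2,m,s}$ is a metric on laws with finite second moments under $d_{m,s}$, the triangle inequality through $\cG^{\theta_t}_{m,s}$ and $\cG^{\theta_m}_{m,s}$ gives
\[
\W_{2,m,s}(Q^{\theta_t},Q^{\theta_m})\le\Gamma^{\theta_t}_{m,s}+\Delta_{m,t,s}+\Gamma^{\theta_m}_{m,s}.
\]
Substituting this into \eqref{eq:main_sharp_w1_bound} yields \eqref{eq:main_gaussian_drift_bound}. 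The Gaussian special case follows because $Q=\cG$ forces $\Gamma=0$.

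\textbf{Main obstacle.} The delicate point is not the inequality chain but the justification of the closed form \eqref{eq:main_closed_form_delta} under the Mahalanobis ground metric. Under the linear change of variables $z\mapsto\Sigma_{m,s}^{-1/2}z$, the metric $d_{m,s}$ becomes Euclidean, and $\W_{2,m,s}$ between the two Gaussians equals the Euclidean $\W_2$ between their pushforwards. These pushforwards are $\mathcal N(\Sigma^{-1/2}\mu_{t|m},\Sigma^{-1/2}\Sigma_{t|m}\Sigma^{-1/2})$ and $\mathcal N(\Sigma^{-1/2}\mu_{m},I_D)$. The Bures formula with one covariance equal to $I_D$ gives $\mathrm{tr}(C+I-2C^{1/2})=\|C^{1/2}-I\|_F^2$ with $C=\Sigma^{-1/2}\Sigma_{t|m}\Sigma^{-1/2}$, which is the stated expression. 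I would verify this reduction and note that the theorem itself only uses $\Delta$ through its definition, so the closed form is a consequence rather than a prerequisite.
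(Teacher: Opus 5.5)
Your proposal is correct and follows the paper's proof essentially step for step: the same skill decomposition, the same add-and-subtract of surrogate expectations giving the $2\epsilon_{m,s}$ term, the same primal coupling bound for $L_{m,s}$-Lipschitz $g_{m,s}$ (no duality), $\W_1\le\W_2$ by Jensen, and the triangle inequality through the two Gaussians. The differences are only in packaging. The paper proves the $\W_2$ triangle inequality explicitly via the gluing lemma and Minkowski, where you cite the metric property of $\W_2$. It also derives the whitened Gaussian formula from scratch (a Schur-complement lower bound plus the coupling $Y=\delta+C_1^{1/2}X$), where you whiten and cite the Bures formula.
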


\noindent\textbf{Proof sketch.}
For each old skill, Assumption~\ref{ass:main_spectral_sufficiency}
reduces the old-task risk difference to the difference of two
expectations of an $L_{m,s}$-Lipschitz function over the old and
current descriptor laws.  Therefore,
\begin{equation}
\begin{aligned}
  \left[
    R_{m,s}(\theta_t)-R_{m,s}(\theta_m)
  \right]_+
  &\le
  L_{m,s}
  \W_{1,m,s}
  \!\left(
    Q_{m,s}^{\theta_t},
    Q_{m,s}^{\theta_m}
  \right)
  \\
  &\quad+
  2\epsilon_{m,s}.
\end{aligned}
\label{eq:main_sketch_skill_bound}
\end{equation}
Averaging Eq.~\eqref{eq:main_sketch_skill_bound} over the skill
weights $\omega_{m,s}$ gives Eq.~\eqref{eq:main_sharp_w1_bound}.
The second bound follows by inserting Gaussian prototypes between
the two descriptor laws:
\begin{equation}
\begin{aligned}
  \W_{1,m,s}
  \!\left(
    Q_{m,s}^{\theta_t},
    Q_{m,s}^{\theta_m}
  \right)
  &\le
  \W_{2,m,s}
  \!\left(
    Q_{m,s}^{\theta_t},
    Q_{m,s}^{\theta_m}
  \right)
  \\
  &\le
  \Gamma_{m,s}^{\theta_t}
  +
  \Delta_{m,t,s}
  +
  \Gamma_{m,s}^{\theta_m}.
\end{aligned}
\label{eq:main_sketch_gaussian_insert}
\end{equation}
Finally, Eq.~\eqref{eq:main_closed_form_delta} is the exact
$2$-Wasserstein distance between two Gaussians after whitening by
$\Sigma_{m,s}^{-1/2}$.  The full proof is given in
Appendix~\ref{app:spectral_forgetting}.

\subsection{Phase-Invariant Stability of Spectral Attention}
\label{subsec:phase_invariant_theory}

ASR regularizes the Fourier power spectrum of cross-attention maps
rather than the raw attention maps.  This choice induces a useful
invariance: spatial translations of the attention map only change
the phase of its Fourier coefficients and therefore leave the power
spectrum unchanged.  We formalize this property and show that the
resulting spectral descriptor is stable under noisy or boundary
corrupted translations.

Let $A\in\R^{H\times W}$ be a nonzero attention map and let
$\widehat A=\cF A$ be the unitary two-dimensional discrete Fourier
transform, defined by
\begin{equation}
  \widehat A(u,v)
  =
  \frac{1}{\sqrt{HW}}
  \sum_{x=0}^{H-1}
  \sum_{y=0}^{W-1}
  A(x,y)
  \exp\!\left(
    -2\pi i
    \left(
      \frac{ux}{H}
      +
      \frac{vy}{W}
    \right)
  \right).
  \label{eq:phase_main_dft_def}
\end{equation}
The normalized Fourier power spectrum is
\begin{equation}
  p(A)_{u,v}
  :=
  \frac{
    |\widehat A(u,v)|^2
  }{
    \|A\|_F^2
  },
  \qquad
  p(A)\in\Delta^{HW-1}.
  \label{eq:phase_main_power_def}
\end{equation}
Let $C$ be the deterministic ASR spectral coarsening operator that
maps the frequency distribution to radial/angular statistics, and
define
\begin{equation}
  \Psi(A)
  :=
  C p(A),
  \qquad
  \kappa_C
  :=
  \sup_{r\ne 0}
  \frac{
    \|Cr\|_2
  }{
    \|r\|_1
  }.
  \label{eq:phase_main_descriptor_def}
\end{equation}
For a cyclic shift $\delta=(\delta_x,\delta_y)$, define
\begin{equation}
  (\cT_{\delta}A)(x,y)
  :=
  A\big((x-\delta_x)\!\!\mod H,\,
        (y-\delta_y)\!\!\mod W\big).
  \label{eq:phase_main_translation_def}
\end{equation}

We consider a skill-conditioned source-to-target attention shift
where a target-domain attention map is generated as
\begin{equation}
  A^{\mathrm{tar}}
  =
  \cT_{\Delta} A^{\mathrm{src}} + E,
  \qquad
  \rho
  :=
  \frac{
    \|E\|_F
  }{
    \|A^{\mathrm{src}}\|_F
  },
  \label{eq:phase_main_shift_model}
\end{equation}
where $\Delta$ may be random and may depend on the sample, and $E$
captures non-translational perturbations such as boundary effects,
occlusion, or attention noise.

\begin{assumption}[Spectral Lipschitz risk]
\label{ass:phase_main_lipschitz_risk}
For each skill $s\in\cS$, the old-skill loss can be represented by
a spectral surrogate $h_s$ satisfying
\begin{equation}
  |h_s(z)-h_s(z')|
  \le
  L_s\|z-z'\|_2,
  \qquad
  z,z'\in\mathrm{range}(\Psi).
  \label{eq:phase_main_lipschitz_risk}
\end{equation}
\end{assumption}

\begin{theorem}[Phase-invariant spectral stability]
\label{thm:phase_invariant_stability}
For every nonzero attention map $A$ and every cyclic translation
$\delta$, the Fourier power spectrum and the ASR spectral descriptor
are exactly invariant:
\begin{equation}
  p(\cT_{\delta}A)=p(A),
  \qquad
  \Psi(\cT_{\delta}A)=\Psi(A).
  \label{eq:phase_main_exact_invariance}
\end{equation}
Moreover, under the perturbed shift model in
Eq.~\eqref{eq:phase_main_shift_model}, if $\rho\le 1$, then
\begin{equation}
  \left\|
    p(A^{\mathrm{tar}})
    -
    p(A^{\mathrm{src}})
  \right\|_1
  \le
  2\rho,
  \left\|
    \Psi(A^{\mathrm{tar}})
    -
    \Psi(A^{\mathrm{src}})
  \right\|_2
  \le
  2\kappa_C\rho .
  \label{eq:phase_main_pointwise_stability}
\end{equation}
The constant $2$ in the power-spectrum bound is sharp.

Consequently, for any skill $s$ and any coupling between source and
target attention maps satisfying
Eq.~\eqref{eq:phase_main_shift_model},
\begin{equation}
\begin{aligned}
  &\W_1
  \left(
    \law(\Psi(A^{\mathrm{tar}})\mid s),
    \law(\Psi(A^{\mathrm{src}})\mid s)
  \right)
  \\
  &\qquad\le
  2\kappa_C
  \E
  \left[
    \rho
    \mid s
  \right].
\end{aligned}
\label{eq:phase_main_distribution_stability}
\end{equation}
Under Assumption~\ref{ass:phase_main_lipschitz_risk}, the
skill-conditioned spectral risk gap obeys
\begin{equation}
\begin{aligned}
  &\left|
    \E
    \left[
      h_s(\Psi(A^{\mathrm{tar}}))
      \mid s
    \right]
    -
    \E
    \left[
      h_s(\Psi(A^{\mathrm{src}}))
      \mid s
    \right]
  \right|
  \\
  &\qquad\le
  2L_s\kappa_C
  \E
  \left[
    \rho
    \mid s
  \right].
\end{aligned}
\label{eq:phase_main_risk_stability}
\end{equation}
In particular, when $E=0$, the spectral distribution and the
spectral risk are invariant to arbitrary skill-conditioned random
translations.
\end{theorem}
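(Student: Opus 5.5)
The plan is to prove the theorem in four layers: exact invariance, a pointwise perturbation bound for normalized power spectra, the linear pushforward through $C$, and then lifting to laws and risks by coupling.

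\textbf{Step 1 (exact invariance).} By the DFT shift theorem, $\widehat{\cT_\delta A}(u,v)=e^{-2\pi i(u\delta_x/H+v\delta_y/W)}\widehat A(u,v)$, so $|\widehat{\cT_\delta A}|^2=|\widehat A|^2$ entrywise. Since $\cT_\delta$ permutes the entries of $A$, we also have $\|\cT_\delta A\|_F=\|A\|_F$. Hence $p(\cT_\delta A)=p(A)$, and applying the deterministic operator $C$ gives $\Psi(\cT_\delta A)=\Psi(A)$.

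\textbf{Step 2 (pointwise stability, the main step).} Set $A:=\cT_\Delta A^{\mathrm{src}}$ and $B:=A+E=A^{\mathrm{tar}}$. By Step 1 it suffices to bound $\|p(B)-p(A)\|_1$, and $\|E\|_F/\|A\|_F=\rho$ because translation preserves the Frobenius norm. For the degenerate case $\rho=1$ with $B=0$, I would note that $p(B)$ is undefined, so the claim is read for nonzero $B$.

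Let $a=\widehat A/\|A\|_F$ and $b=\widehat B/\|B\|_F$. By Parseval for the unitary $\cF$, these are unit vectors in $\C^{HW}$, and $\mathrm{Re}\langle a,b\rangle=\langle A,B\rangle/(\|A\|_F\|B\|_F)=:c$ is the cosine of the real angle between $A$ and $B$. Then
\[
\|p(B)-p(A)\|_1=\sum_k\bigl||a_k|-|b_k|\bigr|\,(|a_k|+|b_k|)\le \|a-b\|_2\,\|a+b\|_2 ,
\]
using $\bigl||a_k|-|b_k|\bigr|\le|a_k-b_k|$, $|a_k|+|b_k|\le |a_k+b_k|+\ldots$; more cleanly, I would bound by $\sum_k |a_k-b_k|\,|a_k+b_k|$ after choosing signs, or directly via $\bigl||a_k|^2-|b_k|^2\bigr|=|\mathrm{Re}((a_k-b_k)\overline{(a_k+b_k)})|$ and Cauchy--Schwarz. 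This gives $\|a-b\|_2\|a+b\|_2=\sqrt{(2-2c)(2+2c)}=2\sin\angle(A,B)$.

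Finally, $\sin\angle(A,B)=\mathrm{dist}(A,\mathrm{span}\,B)/\|A\|_F\le\|A-B\|_F/\|A\|_F=\rho$, so $\|p(B)-p(A)\|_1\le 2\rho$. This is the step I expect to be delicate: one must get exactly the constant $2$ via the $\sin$-of-angle identity rather than the crude bound $\|a-b\|\le 2\rho$, which would give $4\rho$.

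\textbf{Step 3 (sharpness).} I would work in a two-dimensional real subspace spanned by two real Fourier modes, for example the constant mode and a Nyquist mode, both of which are real-valued. Take $A$ with equal power on the two modes, so that $a=(\tfrac1{\sqrt2},\tfrac1{\sqrt2})$. Choose $E$ so that $B$ is $A$ rotated by a small angle $t$ within that plane, with $\|E\|_F/\|A\|_F=\rho\approx t$. Then
\[
\|p(B)-p(A)\|_1=2\,\bigl|\cos^2(\tfrac\pi4-t)-\tfrac12\bigr|=\sin 2t,
\]
and the ratio $\sin 2t/\rho\to 2$ as $t\to0$. Hence the constant $2$ cannot be improved.

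\textbf{Step 4 (descriptor, distributional, and risk bounds).} By linearity of $C$ and the definition of $\kappa_C$,
\[
\|\Psi(B)-\Psi(A)\|_2\le\kappa_C\|p(B)-p(A)\|_1\le 2\kappa_C\rho .
\]
Given any coupling of source and target maps obeying Eq.~\eqref{eq:phase_main_shift_model} conditionally on $s$, the pair $(\Psi(A^{\mathrm{tar}}),\Psi(A^{\mathrm{src}}))$ is a coupling of the two conditional laws. Therefore $\W_1\le\E[\|\Psi(A^{\mathrm{tar}})-\Psi(A^{\mathrm{src}})\|_2\mid s]\le 2\kappa_C\E[\rho\mid s]$, which is Eq.~\eqref{eq:phase_main_distribution_stability}.

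Under Assumption~\ref{ass:phase_main_lipschitz_risk}, either Kantorovich--Rubinstein duality or the same coupling directly gives Eq.~\eqref{eq:phase_main_risk_stability}. The case $E=0$ gives $\rho=0$, which yields exact invariance of both the spectral distribution and the spectral risk under arbitrary random translations.
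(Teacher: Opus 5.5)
Your proof is correct and has the same architecture as the paper's: the shift theorem for exact invariance, a pointwise $\ell_1$ bound on normalized power, pushforward through $C$, and the source--target coupling used as a transport plan for both the $\W_1$ and the risk bounds. The two technical steps are done differently.

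\textbf{Key inequality.} The paper dualizes $\|q(a)-q(b)\|_1$ against diagonal contractions to bound it by the trace norm $\|P_u-P_v\|_*$ of the difference of rank-one projectors. It computes that norm as $2\sqrt{1-|\langle u,v\rangle|^2}$ and then shows $\sqrt{1-|\langle u,v\rangle|^2}\le\rho$ by a distance-to-span argument. You reach the same intermediate quantity via $|a_k|^2-|b_k|^2=\mathrm{Re}\bigl((a_k-b_k)\overline{(a_k+b_k)}\bigr)$ and Cauchy--Schwarz, giving $\|a-b\|_2\|a+b\|_2=2\sqrt{1-c^2}$, and then use the same distance-to-span step. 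Your route is more elementary. The projector route is insensitive to global phases by construction and works verbatim for arbitrary complex vectors. Yours uses that $\langle a,b\rangle$ is real, which holds here because $A^{\mathrm{src}}$, $A^{\mathrm{tar}}$ and $E$ are real; in general you would first rotate $b$ by a global phase, which does not change $q(b)$. Do delete the abandoned first attempt in Step 2: $|a_k|+|b_k|\le|a_k+b_k|$ is false in general, and only the $\mathrm{Re}$-identity route is valid.

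\textbf{Sharpness.} The paper takes $b=\cos\varphi\,v$ (the orthogonal projection onto the rotated line), with $u,v$ symmetric about angle $\pi/4$. This attains $\|q(b)-q(a)\|_1=2\rho$ exactly for every $\rho\in[0,1]$. Your pure rotation gives $\rho=2\sin(t/2)$ and ratio $\sin 2t/(2\sin(t/2))<2$, which tends to $2$ only as $t\to0$. That suffices to show no smaller constant works, but not that the bound is attained. In exchange, your example is explicitly the DFT of a real map (and nonnegative, if you rotate toward the constant mode), a point the paper's abstract two-coordinate Fourier example leaves implicit. The Nyquist mode needs an even side length; otherwise a cosine mode split over a conjugate pair gives the same computation.

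Finally, your remark that $p(B)$ is undefined when $\rho=1$ and $B=0$ flags an edge case the paper's lemma passes over.
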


\noindent\textbf{Proof sketch.}
The DFT diagonalizes cyclic translations.  Specifically,
\begin{equation}
  \widehat{\cT_{\delta}A}(u,v)
  =
  \exp\!\left(
    -2\pi i
    \left(
      \frac{u\delta_x}{H}
      +
      \frac{v\delta_y}{W}
    \right)
  \right)
  \widehat A(u,v).
  \label{eq:phase_main_sketch_phase_ramp}
\end{equation}
Thus the translation only multiplies each Fourier coefficient by a
unit-modulus phase factor, proving
Eq.~\eqref{eq:phase_main_exact_invariance}.  For the perturbed case,
let $a=\cF(\cT_{\Delta}A^{\mathrm{src}})$ and
$b=\cF(A^{\mathrm{tar}})=a+\cF E$.  Parseval gives
$\|\cF E\|_2=\|E\|_F$ and $\|a\|_2=\|A^{\mathrm{src}}\|_F$.
The key step is the sharp projective stability inequality
\begin{equation}
  \left\|
    \frac{|b|^{\odot 2}}{\|b\|_2^2}
    -
    \frac{|a|^{\odot 2}}{\|a\|_2^2}
  \right\|_1
  \le
  2
  \frac{\|b-a\|_2}{\|a\|_2}.
  \label{eq:phase_main_sketch_projective_bound}
\end{equation}
This inequality follows by bounding the classical total variation
between the coordinate measurements of two unit vectors by the trace
distance between the corresponding rank-one projectors.  Applying
the coarsening operator $C$ yields the descriptor bound in
Eq.~\eqref{eq:phase_main_pointwise_stability}.  Finally,
Eq.~\eqref{eq:phase_main_distribution_stability} follows by using
the source--target coupling as a valid transport plan, and
Eq.~\eqref{eq:phase_main_risk_stability} follows from the Lipschitz
property of $h_s$.  Full proofs are in
Appendix~\ref{app:phase_invariant_stability}.

\begin{table*}[t]
\centering
\small
\caption{
Continual VQA results.
AP is higher better and AF is lower better.
Joint denotes the multitask upper bound.
Green numbers after ASR results indicate improvement over the strongest baseline for each metric.
}
\label{tab:vqa_results}
\resizebox{\textwidth}{!}{%
\begin{tblr}{
  colspec = {
    Q[l,wd=4.15cm]
    Q[c,wd=1.15cm]
    |
    Q[c,wd=1.18cm]
    Q[c,wd=1.18cm]
    |
    Q[c,wd=1.36cm]
    Q[c,wd=1.36cm]
    Q[c,wd=1.36cm]
    Q[c,wd=1.36cm]
    |
    Q[c,wd=1.52cm]
    Q[c,wd=1.52cm]
  },
  row{1-2} = {bg=ASRHeader, font=\bfseries},
  cell{1}{1} = {r=2}{l},
  cell{1}{2} = {r=2}{c},
  cell{1}{3} = {c=2}{c},
  cell{1}{5} = {c=4}{c},
  cell{1}{9} = {c=2}{c},
  vline{3,5,9} = {0.5pt},
  rowsep = 1.45pt,
  colsep = 2.3pt,
}
\toprule
\textbf{Method} & \textbf{Venue}
& \textbf{VQA v2 (10 tasks)} &
& \textbf{VQACL (VQAv2+NExT-QA)} & & &
& \textbf{CLT-VQA} & \\
&
& AP$\uparrow$ & AF$\downarrow$
& AP$_\text{std}\uparrow$ & AF$_\text{std}\downarrow$
& AP$_\text{nov}\uparrow$ & AF$_\text{nov}\downarrow$
& AP$\uparrow$ & AF$\downarrow$ \\
\midrule

\SetRow{bg=ASRGroup,font=\itshape}
\SetCell[c=10]{l} Reference
& & & & & & & & & \\
\mb{Vanilla}{LLaVA-1.5-7B}
& \venue{--}
& 44.1 & 18.3
& 40.2 & 19.5
& 28.7 & 17.3
& 36.8 & 22.5 \\

\midrule
\SetRow{bg=ASRGroup,font=\itshape}
\SetCell[c=10]{l} Regularization-based
& & & & & & & & & \\
\mb{EWC~\cite{EWC}}{LLaVA-1.5-7B}
& \venue{PNAS'17}
& 45.0 & 15.7
& 40.9 & 18.1
& 29.2 & 16.8
& 37.6 & 20.4 \\
\mb{LwF~\cite{LwF}}{LLaVA-1.5-7B}
& \venue{ECCV'16}
& 45.8 & 14.2
& 41.3 & 17.2
& 29.8 & 15.9
& 38.3 & 19.1 \\

\midrule
\SetRow{bg=ASRGroup,font=\itshape}
\SetCell[c=10]{l} Replay-based
& & & & & & & & & \\
\mb{ER~\cite{ER}}{LLaVA-1.5-7B}
& \venue{--}
& 47.9 & 11.1
& 42.7 & 14.0
& 31.0 & 13.1
& 40.8 & 15.7 \\

\midrule
\SetRow{bg=ASRGroup,font=\itshape}
\SetCell[c=10]{l} Continual VQA / VLM methods
& & & & & & & & & \\
\mb{VQACL~\cite{vqacl23}}{T5-style}
& \venue{CVPR'23}
& 48.5 & 9.2
& 44.1 & 11.6
& 33.4 & 10.4
& 41.5 & 13.9 \\
\mb{QUAD~\cite{quad25}}{T5-style}
& \venue{ICCV'25}
& 50.3 & 6.1
& \third{46.8} & 7.8
& 36.5 & 7.1
& 43.6 & 10.1 \\
\mb{CL-MoE~\cite{clmoe25}}{LLaVA-7B}
& \venue{CVPR'25}
& \second{51.2} & \second{3.8}
& \second{47.9} & \second{6.2}
& \second{37.8} & \second{6.0}
& 44.4 & 8.4 \\
\mb{BCP-MFA~\cite{BCPMFA25}}{VQA Enc-Dec}
& \venue{ICCV'25}
& 49.4 & 7.5
& 45.9 & 9.1
& 35.0 & 8.6
& \second{45.4} & \second{6.4} \\
\mb{KeepLoRA~\cite{keeplora26}}{LLaVA-1.5-7B}
& \venue{ICLR'26}
& \third{50.6} & \third{4.5}
& 46.5 & \third{6.9}
& \third{37.0} & \third{6.6}
& \third{44.7} & \third{7.2} \\

\midrule
\SetRow{bg=ASRGroup,font=\itshape}
\SetCell[c=10]{l} Upper bound
& & & & & & & & & \\
\mb{Joint}{LLaVA-1.5-7B}
& \venue{--}
& 55.0 & 0.0
& 52.4 & 0.0
& 43.0 & 0.0
& 49.8 & 0.0 \\

\midrule
\SetRow{bg=ASRGroup,font=\itshape}
\SetCell[c=10]{l} Ours
& & & & & & & & & \\

\SetRow{bg=ASROurs}
\mb{ASR}{LLaVA-1.5-7B}
& \venue{Ours}
& \best{52.0}\gain{$\uparrow$0.8}
& \best{2.4}\gain{$\downarrow$1.4}
& \best{48.6}\gain{$\uparrow$0.7}
& \best{5.1}\gain{$\downarrow$1.1}
& \best{39.1}\gain{$\uparrow$1.3}
& \best{4.9}\gain{$\downarrow$1.1}
& \best{46.2}\gain{$\uparrow$0.8}
& \best{5.5}\gain{$\downarrow$0.9} \\

\SetRow{bg=ASROurs}
\mb{ASR}{Qwen2.5-VL-7B}
& \venue{Ours}
& 54.2\gain{$\uparrow$3.0}
& 2.0\gain{$\downarrow$1.8}
& 51.4\gain{$\uparrow$3.5}
& 4.6\gain{$\downarrow$1.6}
& 42.0\gain{$\uparrow$4.2}
& 4.4\gain{$\downarrow$1.6}
& 48.0\gain{$\uparrow$2.6}
& 5.1\gain{$\downarrow$1.3} \\

\SetRow{bg=ASROurs}
\mb{ASR}{InternVL3-8B}
& \venue{Ours}
& 53.6\gain{$\uparrow$2.4}
& 2.1\gain{$\downarrow$1.7}
& 50.8\gain{$\uparrow$2.9}
& 4.8\gain{$\downarrow$1.4}
& 41.4\gain{$\uparrow$3.6}
& 4.5\gain{$\downarrow$1.5}
& 47.4\gain{$\uparrow$2.0}
& 5.3\gain{$\downarrow$1.1} \\

\bottomrule
\end{tblr}
}
\end{table*}

\begin{table}[t]
\centering
\small
\caption{
Continual multimodal instruction tuning on CoIN and UCIT.
We report Last and Avg, where higher is better.
Joint denotes the multitask upper bound.
Green numbers after ASR results indicate improvement over the strongest non-ours baseline for each metric.
}
\label{tab:cit_results}
\resizebox{\linewidth}{!}{%
\begin{tblr}{
  colspec = {
    Q[l,wd=4.25cm]
    Q[c,wd=1.20cm]
    |
    Q[c,wd=1.18cm]
    Q[c,wd=1.18cm]
    |
    Q[c,wd=1.18cm]
    Q[c,wd=1.18cm]
  },
  row{1-2} = {bg=ASRHeader, font=\bfseries},
  cell{1}{1} = {r=2}{l},
  cell{1}{2} = {r=2}{c},
  cell{1}{3} = {c=2}{c},
  cell{1}{5} = {c=2}{c},
  vline{3,5} = {0.5pt},
  rowsep = 1.45pt,
  colsep = 2.8pt,
}
\toprule
\textbf{Method} & \textbf{Venue}
& \textbf{CoIN} &
& \textbf{UCIT} & \\
&
& Last$\uparrow$ & Avg$\uparrow$
& Last$\uparrow$ & Avg$\uparrow$ \\
\midrule

\SetRow{bg=ASRGroup,font=\itshape}
\SetCell[c=6]{l} Reference
& & & & & \\
\mb{LoRA-FT}{LLaVA-1.5-7B}
& \venue{ICLR'22}
& 49.3 & 47.2
& 41.5 & 39.8 \\

\midrule
\SetRow{bg=ASRGroup,font=\itshape}
\SetCell[c=6]{l} LoRA / MoE-based continual tuning
& & & & & \\
\mb{O-LoRA~\cite{Olora}}{LLaVA-1.5-7B}
& \venue{--}
& 52.1 & 49.8
& 43.2 & 42.1 \\
\mb{MoELoRA~\cite{moelora}}{LLaVA-1.5-7B}
& \venue{--}
& 54.7 & 52.4
& 45.9 & 44.3 \\
\mb{HiDe-LLaVA~\cite{Hidellava25}}{LLaVA-1.5-7B}
& \venue{ACL'25}
& 57.2 & 55.1
& 47.8 & 46.0 \\
\mb{SEFE~\cite{SEFE25}}{LLaVA-1.5-7B}
& \venue{ICML'25}
& 58.4 & 56.2
& 49.0 & 47.3 \\
\mb{BranchLoRA~\cite{BranchLoRA25}}{LLaVA-1.5-7B}
& \venue{ACL'25}
& 59.1 & 56.9
& 49.7 & 47.9 \\
\mb{D-MoLE~\cite{DMoLE25}}{InternVL2-2B}
& \venue{ICML'25}
& \second{60.3} & 58.0
& 51.2 & 49.4 \\
\mb{Adapt-$\infty$~\cite{Adapt25}}{LLaVA-1.5-7B}
& \venue{ICLR'25}
& \third{59.7} & \second{58.8}
& \second{51.7} & \second{50.2} \\
\mb{PCLR~\cite{PCLR26}}{LLaVA-7B}
& \venue{ICLR'26}
& 59.5 & \third{58.2}
& \third{51.3} & \third{49.8} \\

\midrule
\SetRow{bg=ASRGroup,font=\itshape}
\SetCell[c=6]{l} Upper bound
& & & & & \\
\mb{Joint}{LLaVA-1.5-7B}
& \venue{--}
& 64.5 & 63.1
& 56.2 & 54.8 \\

\midrule
\SetRow{bg=ASRGroup,font=\itshape}
\SetCell[c=6]{l} Ours
& & & & & \\

\SetRow{bg=ASROurs}
\mb{ASR}{LLaVA-1.5-7B}
& \venue{Ours}
& \best{61.4}\gain{$\uparrow$1.1}
& \best{60.0}\gain{$\uparrow$1.2}
& \best{53.1}\gain{$\uparrow$1.4}
& \best{51.5}\gain{$\uparrow$1.3} \\

\SetRow{bg=ASROurs}
\mb{ASR}{Qwen2.5-VL-7B}
& \venue{Ours}
& 63.2\gain{$\uparrow$2.9}
& 61.8\gain{$\uparrow$3.0}
& 55.0\gain{$\uparrow$3.3}
& 53.3\gain{$\uparrow$3.1} \\

\SetRow{bg=ASROurs}
\mb{ASR}{InternVL3-8B}
& \venue{Ours}
& 62.7\gain{$\uparrow$2.4}
& 61.2\gain{$\uparrow$2.4}
& 54.5\gain{$\uparrow$2.8}
& 52.8\gain{$\uparrow$2.6} \\

\bottomrule
\end{tblr}
}
\end{table}

\section{Experimental Results}

\subsection{Experimental Setup}
\label{subsec:exp_setup}

\vspace{1mm}
\noindent\textbf{Datasets.}
ASR is evaluated on five continual multimodal settings:
\emph{(i) VQA v2 (Question-Type Incremental).}
It contains $\sim$200k images and 1.1M questions sourced mainly from MS-COCO. Following the VQACL/CL-MoE protocol, we split VQA v2 into $M{=}10$ tasks by question type as \citet{quad25}.
\emph{(ii) VQAv2 + NExT-QA (VQACL skill--concept setting).}
To evaluate compositional generalization and cross-dataset transfer, we additionally follow the VQACL~\cite{vqacl23} setting, where VQAv2 and NExT-QA are organized as a two-level sequence over reasoning skills and visual concepts.
\emph{(iii) CLT-VQA setting.}
To further examine robustness under imbalanced continual learning, we include the continual long-tailed VQA setting following \citet{BCPMFA25}. This setting introduces long-tailed answer and concept distributions across sequential VQA tasks, stressing whether a method can preserve previously learned visual reasoning patterns while adapting to new and under-represented concepts.
\emph{(iv) CoIN Benchmark for CIT.}
For continual multimodal instruction tuning, we follow \citet{SEFE25} to use the CoIN benchmark~\cite{COIN}.
\emph{(v) UCIT Benchmark for Unseen CIT.}
To stress generalization to tasks unseen during supervised fine-tuning of the base MLLM, we follow HiDe-LLaVA~\cite{Hidellava25} to evaluate on UCIT.

\vspace{1mm}
\noindent\textbf{Evaluation metrics.}
We follow standard continual learning metrics for multimodal VQA and CIT. For each task $b$ and training stage $a$, we record per-task performance $m_{a,b}$ using the task's official metric, including VQA accuracy for VQAv2 and CLT-VQA, WUPS for NExT-QA, and dataset-specific scores for CoIN/UCIT. Overall performance is summarized by \emph{Final Average Performance}
$
\mathrm{AP} = \frac{1}{M} \sum_{t=1}^{M} m_{M,t}
$
and \emph{Average Forgetting} $\mathrm{AF}$.
In the VQACL/QUAD skill--concept setting, we additionally report AP/AF separately on standard, i.e., seen skill--concept, and novel-composition splits. For the CLT-VQA setting, we report the overall AP/AF. For continual instruction tuning on CoIN and UCIT, we follow prior work and report \emph{Last}, i.e., final performance averaged over all $T$ stages, and \emph{Avg}, i.e., time-averaged performance across the training trajectory.

\vspace{1mm}
\noindent\textbf{Implementation details.}
Unless otherwise specified, main experiments are conducted with LLaVA-1.5-7B as the base MLLM. The backbone is kept frozen, and only lightweight adapters and task-specific heads are trainable. To verify that ASR is not tied to a specific LLaVA-style architecture, we further conduct backbone robustness experiments on Qwen2.5-VL-7B~\cite{qwen25vl} and InternVL3-8B~\cite{internvl3}. For LLaVA-1.5-7B, Qwen2.5-VL-7B, and InternVL3-8B, ASR extracts text-to-vision cross-modal attention maps from selected decoder self-attention blocks after visual tokens have been inserted into the language-model context. For backbones that expose explicit image--text cross-attention, ASR directly uses the native cross-attention matrices. In all cases, the extracted maps are reconstructed onto a visual grid and normalized before spectral encoding. The architecture-specific extraction details are given in Appendix~\ref{app:attention_extraction}.

For the VQA v2 10-task split, we train each task for 1 epoch with batch size $16$, using AdamW with learning rate $2\mathrm{e}{-4}$ and cosine decay. For the VQACL/QUAD setting on VQAv2 and NExT-QA, we train for 3 epochs per task with batch size $80$ and learning rate $1\mathrm{e}{-4}$. For the CLT-VQA setting, we follow the task sequence, long-tailed split, and optimization protocol of \citet{BCPMFA25}. For CoIN and UCIT, we follow \citet{Hidellava25} schedules: 1 epoch per task for the majority of datasets and up to 5 epochs for smaller OCR-style datasets. For Qwen2.5-VL-7B and InternVL3-8B, we use the same ASR hyperparameters as LLaVA-1.5-7B, and adjust only the per-device batch size with gradient accumulation to keep the effective batch size unchanged.

Please refer to the supplementary materials for more details.

\subsection{Main results}
\label{sec:main-results}

\noindent\textbf{Continual VQA.}
Table~\ref{tab:vqa_results} reports results on VQA v2, VQACL, and CLT-VQA. On the default LLaVA-1.5-7B backbone, ASR achieves the best results among all non-joint methods. On VQA v2, ASR improves over CL-MoE~\cite{clmoe25} by +0.8 AP and reduces AF by 1.4 points. On VQACL, ASR consistently improves both standard and novel-composition splits, with gains of +0.7 AP$_\text{std}$, +1.3 AP$_\text{nov}$, and lower forgetting. On CLT-VQA, ASR further improves over BCP-MFA~\cite{BCPMFA25} by +0.8 AP and 0.9 AF, showing its robustness under long-tailed continual VQA.
ASR also generalizes to stronger backbones. With Qwen2.5-VL-7B and InternVL3-8B, ASR obtains higher absolute AP and lower AF across all three VQA settings.

\vspace{1mm}
\noindent\textbf{Continual instruction tuning.}
Table~\ref{tab:cit_results} reports results on CoIN and UCIT. On LLaVA-1.5-7B, ASR achieves 61.4 Last and 60.0 Avg on CoIN, improving over the strongest baselines by +1.1 and +1.2 points, respectively. On UCIT, ASR reaches 53.1 Last and 51.5 Avg, outperforming Adapt-$\infty$~\cite{Adapt25} by +1.4 and +1.3 points. With Qwen2.5-VL-7B and InternVL3-8B, ASR further improves the absolute performance on both benchmarks, demonstrating consistent effectiveness across different MLLM backbones.

\subsection{Ablation Results}
\label{sec:ablation}

\vspace{1mm}
\noindent\textbf{Single–factor ablations.}
We perform single–factor ablations of ASR on the VQA v2 10-task split and CoIN CIT benchmark:
\ding{182} \textbf{w/o Spectrum Distillation} (remove $\mathcal{L}_{\mathrm{spec}}$; only task loss + $\mathcal{L}_{\mathrm{geo}}$);\quad
\ding{183} \textbf{w/o Skill Conditioning} (replace skill-specific prototypes $\{(\boldsymbol{\mu}_s,\boldsymbol{\Sigma}_s,\hat{\mathbf{d}}_s)\}_{s\in\mathcal{S}}$ by a single global prototype);\quad
\ding{184} \textbf{w/o Angular term} (set $\lambda_{\mathrm{ang}}{=}0$; only radial / Mahalanobis spectrum matching);\quad
\ding{185} \textbf{w/o Confidence Weighting} (set $w_{\mathrm{spec}}(\boldsymbol{\phi}) \equiv 1$);\quad
\ding{186} \textbf{w/o Geometry Regularizer} (set $\gamma{=}0$; no $\mathcal{L}_{\mathrm{geo}}$ anchor).
From Table~\ref{tab:ablation-asr}, removing \textbf{Spectrum Distillation} causes by far the largest degradation, confirming it as the main driver of stability, while collapsing per-skill prototypes into a single global prototype still yields clear drops in AP/Last and higher AF, underscoring the value of skill-conditioned priors. The angular term, confidence weighting, and geometry regularizer each contribute smaller but consistent gains, indicating that directional cues, adaptive regularization, and a light geometric anchor jointly turn spectral statistics into a robust constraint across tasks.

\newcommand{\abl}[2]{#1\,{\scriptsize\textcolor{blue}{(#2)}}}

\begin{table}[t]
\centering
\small
\setlength{\tabcolsep}{6pt}
\renewcommand{\arraystretch}{1.15}
\caption{\textbf{Single–factor ablations of ASR.}}
\vspace{-3mm}
\label{tab:ablation-asr}
\resizebox{\linewidth}{!}{%
\begin{tabular}{lcccc}
\toprule
\multirow{2}{*}{Variant} &
\multicolumn{2}{c}{VQA v2 (10 tasks)} &
\multicolumn{2}{c}{CoIN} \\
\cmidrule(lr){2-3} \cmidrule(lr){4-5}
& AP$\uparrow$ & AF$\downarrow$ &
Last$\uparrow$ & Avg$\uparrow$ \\
\midrule
\textbf{Full ASR} 
& \textbf{52.0} & \textbf{2.4} & \textbf{61.4} & \textbf{60.0} \\
\midrule
\ding{182}
& \abl{50.1}{-1.9} & \abl{4.3}{+1.9} & \abl{59.0}{-2.4} & \abl{57.7}{-2.3} \\
\ding{183} 
& \abl{51.1}{-0.9} & \abl{3.5}{+1.1} & \abl{60.0}{-1.4} & \abl{58.6}{-1.4} \\
\ding{184} 
& \abl{51.4}{-0.6} & \abl{3.0}{+0.6} & \abl{60.6}{-0.8} & \abl{59.1}{-0.9} \\
\ding{185}
& \abl{51.2}{-0.8} & \abl{3.2}{+0.8} & \abl{60.2}{-1.2} & \abl{58.9}{-1.1} \\
\ding{186}
& \abl{51.6}{-0.4} & \abl{2.9}{+0.5} & \abl{60.7}{-0.7} & \abl{59.4}{-0.6} \\
\bottomrule
\end{tabular}%
}
\end{table}

\begin{figure*}[t]
\centering
\begin{minipage}[t]{0.32\linewidth}
\centering
\includegraphics[width=\linewidth]{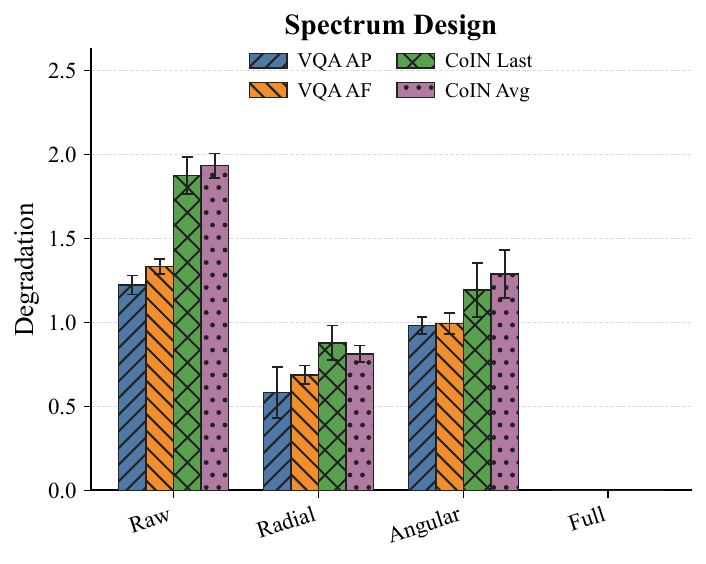}
\centerline{\small (a) Spectrum design}
\end{minipage}
\hfill
\begin{minipage}[t]{0.32\linewidth}
\centering
\includegraphics[width=\linewidth]{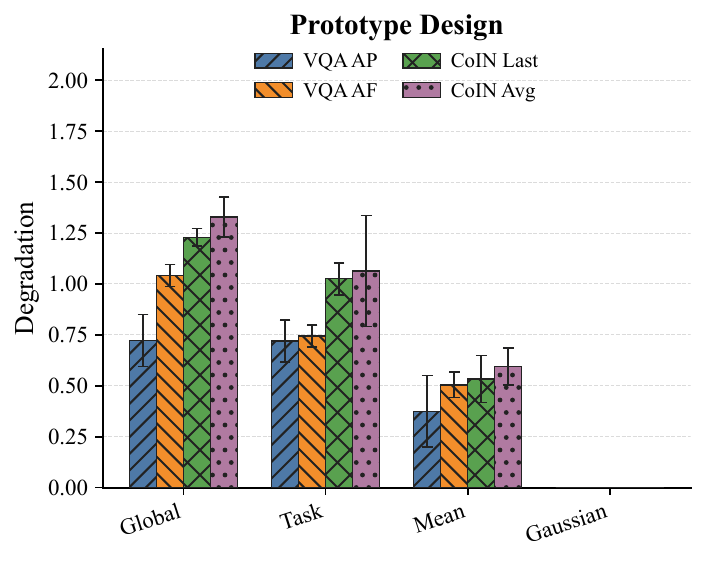}
\centerline{\small (b) Prototype design}
\end{minipage}
\hfill
\begin{minipage}[t]{0.32\linewidth}
\centering
\includegraphics[width=\linewidth]{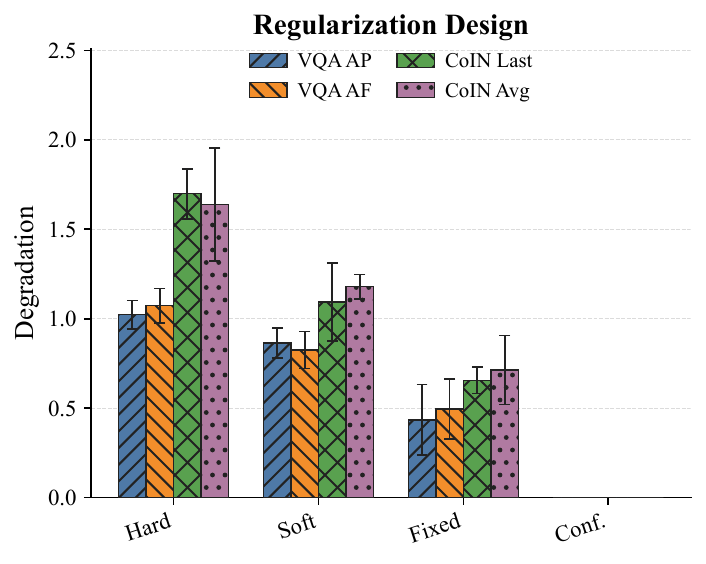}
\centerline{\small (c) Regularization design}
\end{minipage}
\caption{
\textbf{Design-level ablations of ASR.}
All plots show degradation relative to the full ASR variant.
For AP, Last, and Avg, lower bars indicate smaller performance drops.
For AF, lower bars indicate less forgetting increase.
}
\label{fig:design-ablation}
\end{figure*}

\noindent\textbf{Spectrum design variants.}
We study how different attention representations affect continual stability.
We compare four variants: directly distilling raw spatial attention maps
(\emph{Raw Attention}), using only the radial frequency profile
(\emph{Radial-only}), using only the angular profile
(\emph{Angular-only}), and the complete spectrum descriptor used by ASR
(\emph{Full Spectrum}).
As shown in Fig.~\ref{fig:design-ablation}(a), raw attention distillation is less effective, obtaining 50.8 AP / 3.7 AF on VQA v2 and 59.5 Last / 58.2 Avg on CoIN.
Radial-only matching performs better than angular-only matching, suggesting that spatial scale is more critical for preserving visual grounding.
However, the full spectrum achieves the best results, confirming that both scale and directional cues are useful for stabilizing cross-modal attention.

\vspace{1mm}
\noindent\textbf{Prototype design variants.}
We then examine how spectrum prototypes should be organized.
We compare a single global prototype (\emph{Global}), task-level prototypes
(\emph{Task-wise}), skill-wise prototypes with mean only
(\emph{Skill-wise Mean}), and the proposed skill-wise Gaussian prototypes
(\emph{Skill-wise Gaussian}).
Fig.~\ref{fig:design-ablation}(b) shows that global prototypes are insufficient, dropping to 51.1 AP and 60.0 Last.
Task-wise prototypes improve the result slightly, but remain weaker than skill-wise prototypes because tasks can contain mixed reasoning skills.
Using skill-wise mean prototypes further improves stability, while the full Gaussian modeling reaches the best result.
This supports our design choice of modeling forgetting at the skill level with both mean and variance statistics.

\vspace{1mm}
\noindent\textbf{Regularization variants.}
Finally, we compare different spectral regularization strategies.
\emph{Hard Skill} assigns each sample to the most likely skill, while
\emph{Soft Skill} uses the full posterior distribution $\boldsymbol{\pi}(q)$.
\emph{Fixed Weight} applies a non-adaptive spectral weight, and
\emph{Confidence Weight} uses the proposed prototype-distance-based adaptive weight.
As shown in Fig.~\ref{fig:design-ablation}(c), hard skill assignment is the weakest because ambiguous questions may involve multiple skills.
Soft skill weighting improves performance, while fixed weighting still over-regularizes samples whose spectra are far from existing prototypes.
The proposed confidence-adaptive weighting obtains the best results, reducing VQA AF to 2.4 and improving CoIN Avg to 60.0.

\subsection{Mechanism Analysis}

\subsubsection{Do different skills exhibit distinct attention spectra}
To test whether ASR learns skill-specific attention patterns, we analyze four representative skills (\texttt{count}, \texttt{read}, \texttt{locate}, \texttt{relation}) on the last stage of the VQA v2 stream.
For each skill we aggregate cross-attention maps into a 6D spectral summary (low/mid/high frequency energy and anisotropy along three orientation bands) for both the Baseline and ASR.
As shown in Fig.~\ref{fig:skill-spectra}, ASR yields more clearly separated skill-wise spectra, while the Baseline exhibits more entangled patterns, indicating that our method sharpens skill-conditioned attention priors instead of applying a uniform regularization.

\begin{figure}[ht]
\centering
\includegraphics[width=0.9\linewidth]{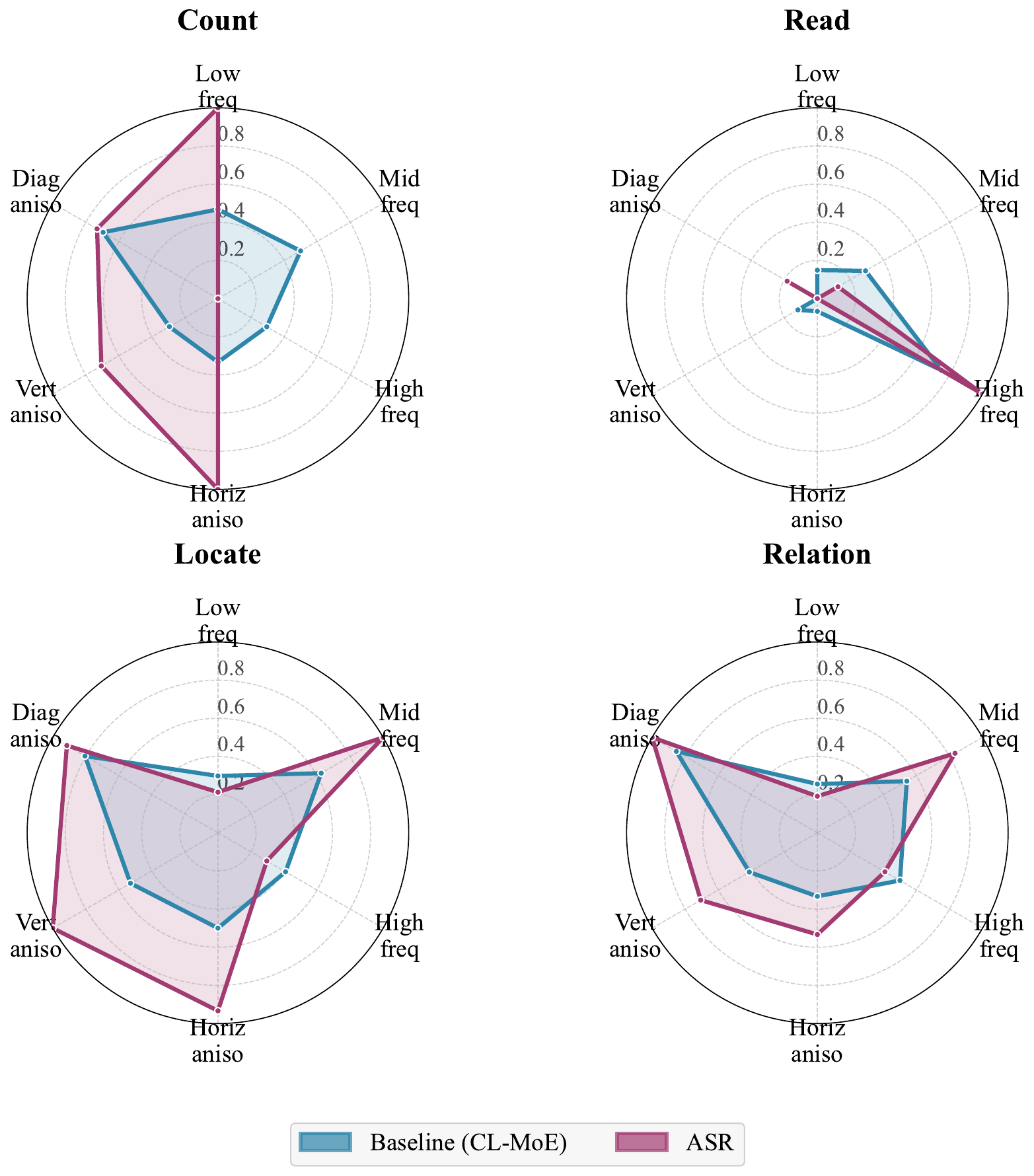}
\vspace{-2mm}
\caption{\textbf{Skill-wise attention spectra.}
ASR produces more distinct spectral signatures for different skills.}
\label{fig:skill-spectra}
\end{figure}

\subsubsection{Does stabilizing spectra correlate with reduced forgetting}
For each skill $s \in \{\texttt{count},\texttt{read},\texttt{locate},\texttt{relation}\}$ and stage $t>1$ in the VQA v2 stream, we measure a spectral prototype drift $\Delta_s^{(t)}=\|\boldsymbol{\mu}_s^{(t)}-\boldsymbol{\mu}_s^{(t-1)}\|_2$ and a skill-wise forgetting score $F_s^{(t)}$ (drop from the best historical accuracy on skill-$s$ questions to the accuracy after stage $t$).
From Fig.~\ref{fig:drift-forgetting}, the baseline exhibits a clear trend where larger prototype drift coincides with higher forgetting, whereas ASR both reduces the typical drift magnitudes and achieves consistently lower $F_s^{(t)}$ at comparable drift levels. 
It shows quantitative evidence that stabilizing skill-conditioned attention spectra is closely tied to mitigating catastrophic forgetting in continual multimodal learning.

\begin{figure}[ht]
\centering
\includegraphics[width=0.95\linewidth]{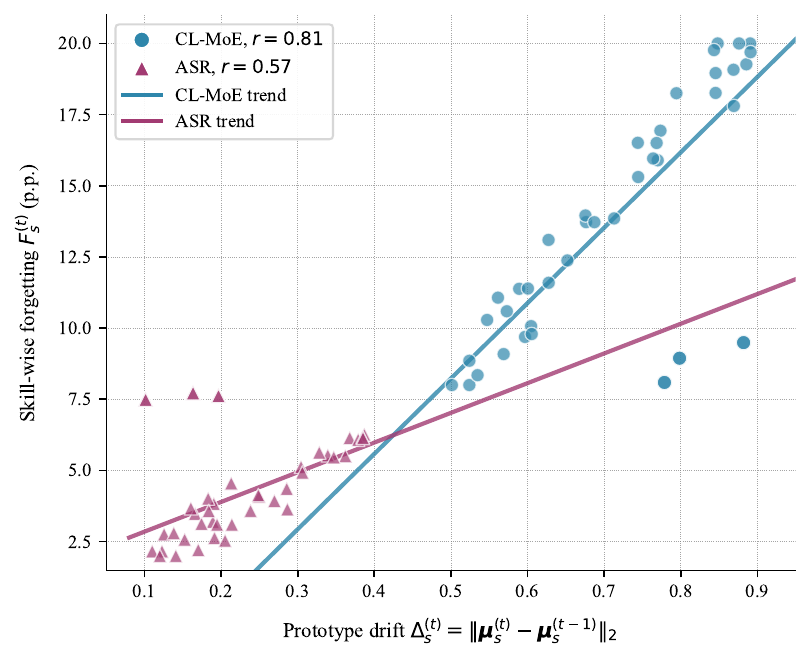}
\vspace{-2mm}
\caption{\textbf{Prototype spectral drift vs.\ forgetting.}
Each point is a skill–stage pair $(s,t)$.}
\label{fig:drift-forgetting}
\end{figure}

\subsubsection{Are correct predictions closer to spectral prototypes}
On the fifth task of the VQA v2 stream, we compute for each sample the squared Mahalanobis distance $D_{\mathrm{mah}}^2(\boldsymbol{\phi}(A),\boldsymbol{\mu}_s,\boldsymbol{\Sigma}_s)$ to its skill prototype and group instances by prediction outcome (correct vs.\ error) for Full ASR and w/o Spectrum Distillation.
From Fig.~\ref{fig:mahal-violin}, ASR produces markedly lower distances for correct than for error cases, with a clear separation between the two groups, whereas removing spectral distillation shifts all distributions toward larger, more overlapping distances.
This supports the view that ASR tightens skill-conditioned spectral manifolds and makes out-of-prototype spectra more indicative of mistakes.

\begin{figure}[ht]
\centering
\includegraphics[width=0.9\linewidth]{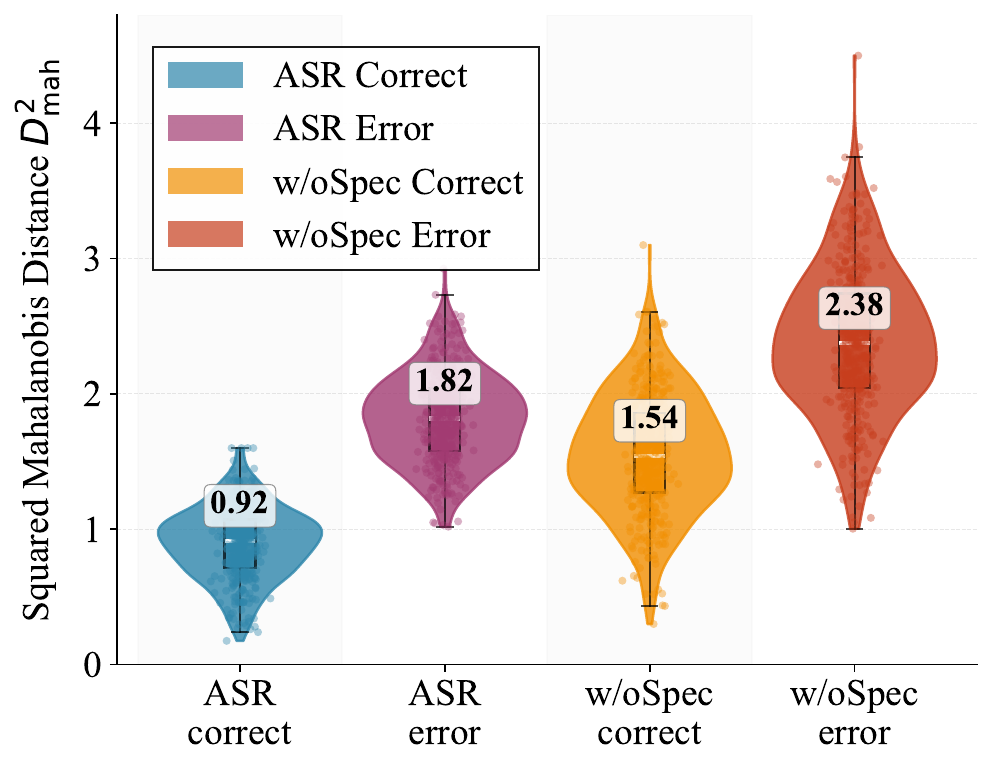}
\vspace{-2mm}
\caption{\textbf{Mahalanobis distance to skill prototypes for correct vs.\ error predictions.}
}
\label{fig:mahal-violin}
\end{figure}

\subsubsection{Task-Level Interference Structure}
We next examine how ASR reshapes task-level interference in the
VQA v2 10-task stream. We visualize the full task$\times$task structure to see which tasks interfere with
which others, and how this pattern changes under ASR.
Recall that $m_{a,b}$ denotes the performance on task $b$ after
training up to stage $a$ ($a\ge b$ for seen tasks). For each method
we construct a $10\times10$ matrix $M = [m_{a,b}]_{a,b=1}^{10}$ for
the VQA v2 stream. To isolate \emph{forgetting} on each task $b$, we
normalize column-wise by the best historical performance on that task:
\begin{equation}
  m_b^{\max}
  = \max_{a \ge b} m_{a,b},
  \qquad
  \Delta_{a,b}
  = m_{a,b} - m_b^{\max}.
\end{equation}
Thus, $\Delta_{a,b}=0$ for the stage(s) where task $b$ attains its
best performance, while $\Delta_{a,b}<0$ quantifies the amount of
forgetting at later stages. We restrict our analysis to the lower
triangular part ($a\ge b$), since tasks $b>a$ have not been seen at
stage $a$.

Fig.~\ref{fig:task-interference} shows the resulting task$\times$task heatmaps for CL-MoE and ASR.
Under CL-MoE, the interference map shows clear negative bands in the
lower-left region: early tasks (columns 1--3) gradually accumulate
2--3 points of forgetting as more tasks arrive, and several mid-stream
tasks also experience noticeable drops after later stages. In
contrast, ASR produces a much lighter map: most $\Delta_{a,b}$ values
remain within a narrow band around zero, and deep negative streaks
are largely absent. This confirms that ASR does not merely improve the
final average AP, but genuinely reshapes the task-level interference
structure, making forgetting more uniformly mild across the stream
and especially alleviating interference on early skills.

\begin{figure}[t]
\centering
\includegraphics[width=1\linewidth]{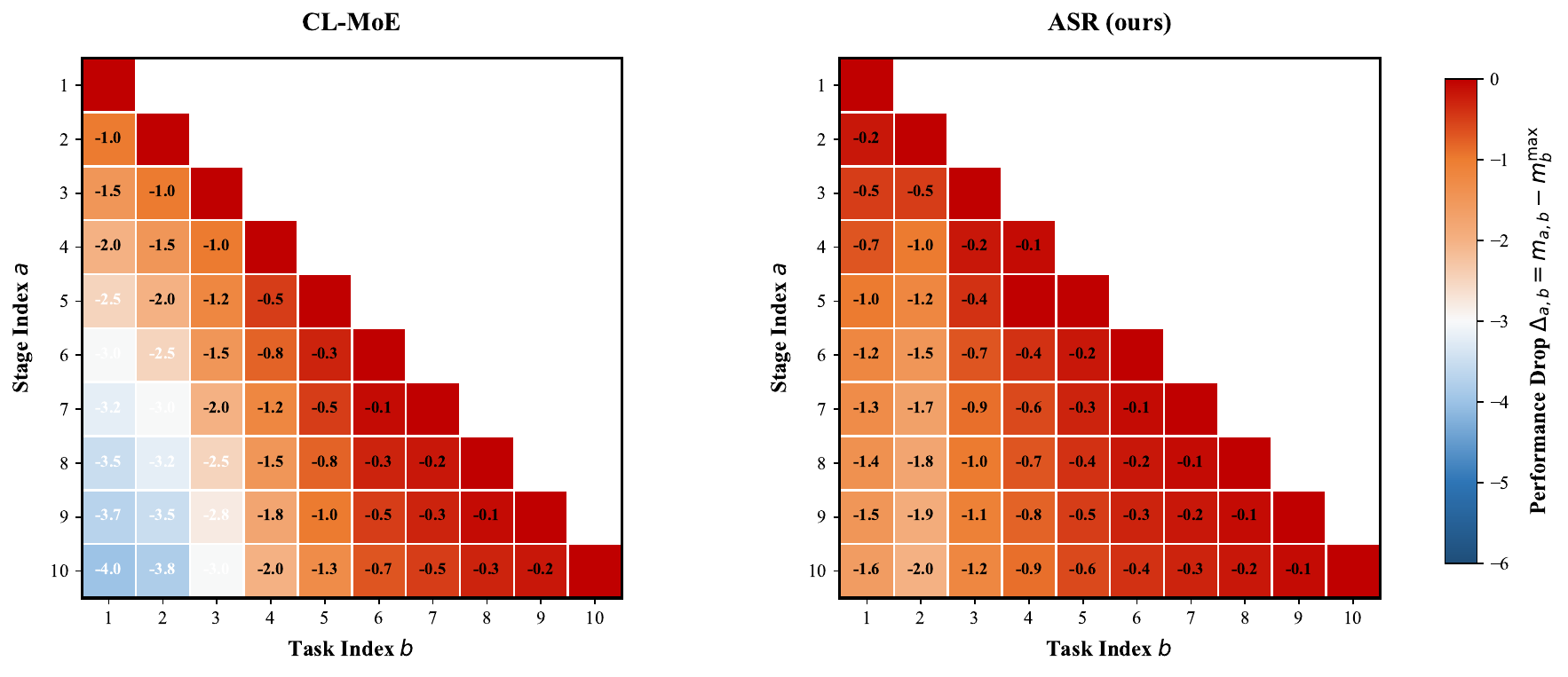}
\vspace{-2mm}
\caption{\textbf{Task$\times$task interference on VQA v2 (10-task
stream).} Each heatmap shows
$\Delta_{a,b} = m_{a,b} - m_b^{\max}$ for $a\ge b$, i.e., the drop
from best historical performance on task $b$ at stage $a$, the upper
triangle ($a<b$) is masked. Darker colors indicate stronger
forgetting.}
\label{fig:task-interference}
\end{figure}

\begin{figure}[t]
	\centering
	\includegraphics[width=\linewidth]{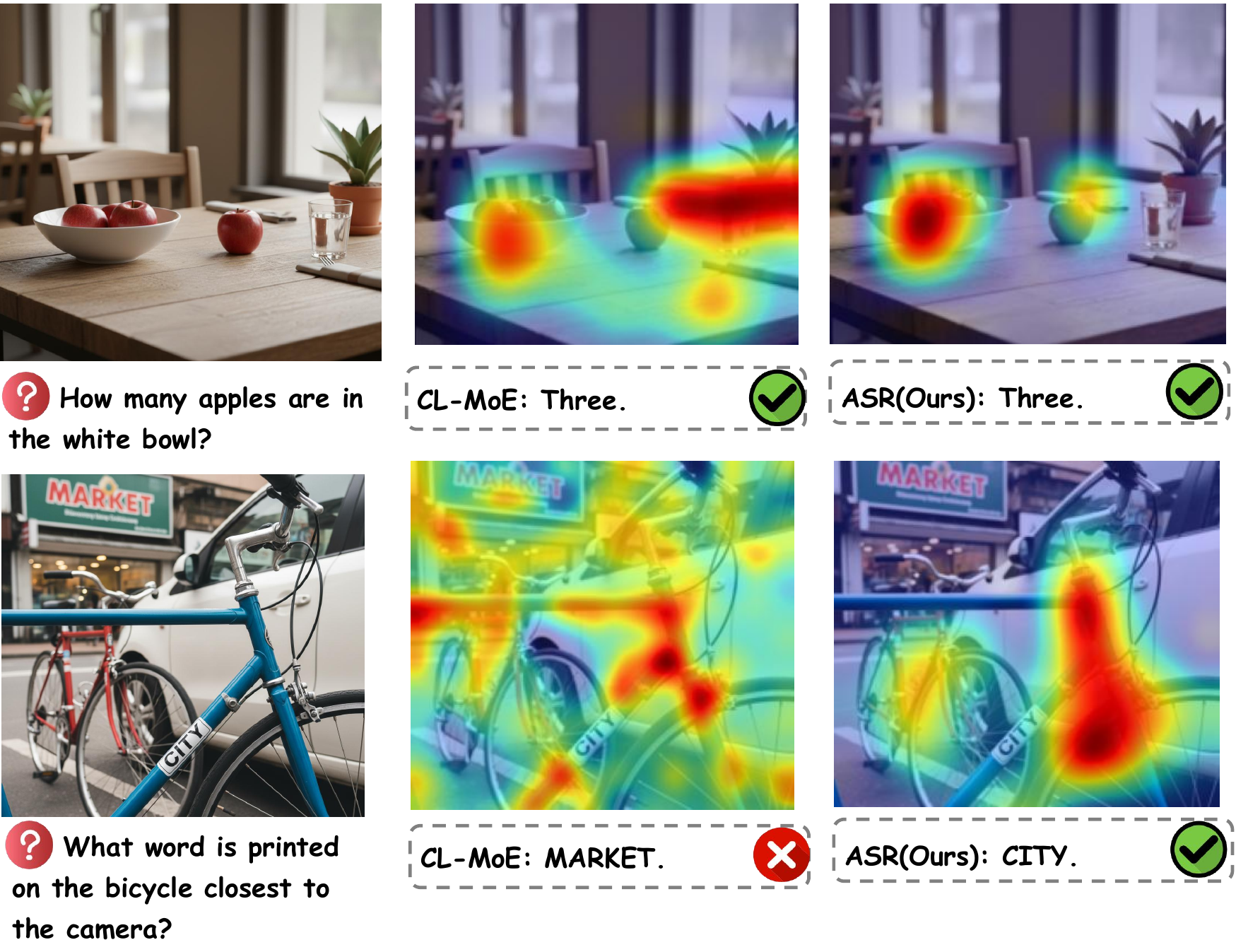}
  \vspace{-2mm}
	\caption{\textbf{Case study}.}
	\label{fig:vis}
\end{figure}

\subsubsection{Qualitative results} 
Figure~\ref{fig:vis} illustrates how ASR reshapes cross-attention.
In the counting case (top row), both methods predict the correct answer “Three”, but ASR focuses its attention almost exclusively on the apples in the white bowl, whereas CL-MoE spreads attention across the table.
In the reading case (bottom row), CL-MoE is distracted by the large “MARKET” sign in the background, while ASR concentrates on the small “CITY” text on the frame of the bicycle and answers correctly.

\section{Conclusion}
We addressed catastrophic forgetting in MLLMs by shifting the focus from preserving all features to preserving skill-conditioned patterns of visual attention, and proposed Attention-Spectrum Regularization (ASR) as a replay-free spectral regularizer that stabilizes cross-modal attention across tasks. Our experiments show that constraining attention spectra via compact, per-skill prototypes yields consistent gains over strong replay- and MoE-based baselines.
Future work may explore fully self-supervised skill discovery, and apply spectral attention regularization to broader continual settings such as open-world detection, grounding, and embodied agents.

\bibliographystyle{unsrtnat}
\bibliography{main}

\clearpage
\appendix

\section{Architecture-Agnostic Cross-Modal Attention Extraction}
\label{app:attention_extraction}

This appendix specifies how ASR extracts the unified cross-modal attention map
$A^{(l,h)}\in\mathbb{R}^{T_q\times H\times W}$ used in
Section~\ref{subsec:problem} and Section~\ref{subsec:spectrum}. The key point
is that ASR does not require every MLLM backbone to contain explicit
encoder--decoder cross-attention. For decoder-only MLLMs, ASR uses the
text-to-vision block of self-attention; for models with explicit
cross-attention, ASR uses the native cross-attention matrix.

\subsection{Token Index Sets}

After the multimodal input is packed by a backbone, let
$\mathcal{V}$ denote the set of visual-token indices and let
$\mathcal{T}$ denote the set of question/instruction-side text-token indices.
Special tokens and answer tokens are excluded from $\mathcal{T}$ when they are
not part of the input question or instruction. We also keep the visual-token
layout produced by the vision encoder or multimodal projector, so that each
visual token can be mapped back to a spatial position.

\subsection{Decoder-Only MLLMs}

For decoder-only MLLMs, such as LLaVA-style, Qwen-VL-style, and InternVL-style
models, visual tokens and text tokens are inserted into a single language-model
context. For a selected decoder block $l$ and head $h$, the self-attention
matrix is
\begin{equation}
\mathbf{P}^{(l,h)}
=
\operatorname{softmax}
\left(
\frac{\mathbf{Q}^{(l,h)}(\mathbf{K}^{(l,h)})^\top}{\sqrt{d_h}}
+
\mathbf{M}
\right),
\end{equation}
where $\mathbf{M}$ is the causal or backbone-specific attention mask. ASR
extracts the text-to-vision submatrix
\begin{equation}
\mathbf{B}^{(l,h)}
=
\mathbf{P}^{(l,h)}[\mathcal{T},\mathcal{V}]
\in
\mathbb{R}^{|\mathcal{T}|\times|\mathcal{V}|}.
\label{eq:app_self_attention_block}
\end{equation}
The row $\mathbf{B}^{(l,h)}_{\tau,:}$ represents how text token $\tau$ attends
to visual tokens in block $l$ and head $h$. This is the cross-modal attention
signal used by ASR for decoder-only MLLMs.

\subsection{Backbones with Explicit Cross-Attention}

For MLLMs that contain explicit cross-attention modules, such as Q-Former-style
or encoder--decoder fusion modules, ASR directly uses the image--text
cross-attention matrix:
\begin{equation}
\mathbf{B}^{(l,h)}
=
\mathbf{P}_{\mathrm{cross}}^{(l,h)}
\in
\mathbb{R}^{|\mathcal{T}|\times|\mathcal{V}|}.
\label{eq:app_explicit_cross_attention}
\end{equation}
Thus, Eq.~\eqref{eq:app_self_attention_block} and
Eq.~\eqref{eq:app_explicit_cross_attention} define the same object: a
token-level text-to-vision attention distribution.

\subsection{Visual-Grid Reconstruction}

To compute spectral statistics, ASR converts the token-level attention vector
into a two-dimensional visual map. Let $\Omega_{x,y}\subseteq\mathcal{V}$ be
the set of visual tokens assigned to grid cell $(x,y)$. For a text token
$\tau\in\mathcal{T}$, we define
\begin{equation}
A_{\tau}^{(l,h)}(x,y)
=
\frac{1}{|\Omega_{x,y}|}
\sum_{v\in\Omega_{x,y}}
\mathbf{B}^{(l,h)}_{\tau,v},
\qquad
(x,y)\in[H]\times[W].
\label{eq:app_grid_reconstruction}
\end{equation}
For standard patch-based visual encoders, $\Omega_{x,y}$ usually contains one
patch token. For backbones with patch merging, tiling, or dynamic-resolution
packing, $\Omega_{x,y}$ is determined by the backbone-specific visual-token
layout. If different samples produce different grid sizes, the attention maps
are resized to a common resolution before spectral encoding. Each map is then
renormalized as
\begin{equation}
A_{\tau}^{(l,h)}
\leftarrow
\frac{
A_{\tau}^{(l,h)}
}{
\sum_{x,y} A_{\tau}^{(l,h)}(x,y)+\varepsilon
}.
\label{eq:app_attention_normalization}
\end{equation}

\subsection{Functional-Token Selection}

ASR aggregates attention maps over a small set of question-side functional
tokens $U(q)$. In our implementation, $U(q)$ includes wh-words, verbs, head
nouns, relation words, numerals, and OCR-related tokens. If no token is
selected by this rule, we use all non-special question/instruction tokens. The
aggregated map is
\begin{equation}
\tilde{A}^{(l,h)}(x,y)
=
\frac{1}{|U(q)|}
\sum_{\tau\in U(q)}
A_{\tau}^{(l,h)}(x,y).
\label{eq:app_functional_token_aggregation}
\end{equation}
This aggregation is performed before the Fourier transform in
Section~\ref{subsec:spectrum}.

\subsection{Backbone-Specific Summary}

\begin{table}[t]
\centering
\caption{
Backbone-specific attention sources used by ASR. Decoder-only MLLMs do not
need explicit cross-attention layers; ASR extracts the text-to-vision block
from decoder self-attention after visual tokens are inserted into the
language-model context.
}
\label{tab:app_attention_sources}
\setlength{\tabcolsep}{3.5pt}
\renewcommand{\arraystretch}{1.10}
\begin{tabular}{p{2.25cm}p{2.85cm}p{2.75cm}}
\toprule
Backbone family & Attention source & ASR map \\
\midrule
LLaVA-style models
&
Decoder self-attention over inserted visual tokens and text tokens
&
Text-token rows to visual-token columns,
$\mathbf{P}^{(l,h)}[\mathcal{T},\mathcal{V}]$
\\
Qwen-VL-style models
&
Decoder self-attention over dynamically packed visual and text tokens
&
Text-to-vision submatrix reconstructed with the visual-token layout
\\
InternVL-style models
&
Decoder self-attention over visual-context tokens and text tokens
&
Text-to-vision submatrix reconstructed with the visual-token layout
\\
Q-Former / encoder--decoder models
&
Explicit image--text cross-attention
&
Native cross-attention matrix
$\mathbf{P}_{\mathrm{cross}}^{(l,h)}$
\\
\bottomrule
\end{tabular}
\end{table}

The selected layer--head set
$\mathcal{J}=\mathcal{L}_\mathrm{sel}\times\mathcal{H}_\mathrm{sel}$ is fixed
for each backbone across all continual-learning stages. For decoder-only
MLLMs, we select middle-to-late decoder blocks where visual and linguistic
tokens have undergone sufficient interaction. For backbones with explicit
cross-attention, we select the corresponding multimodal fusion blocks. After
this extraction step, all backbones provide the same object
$A^{(l,h)}\in\mathbb{R}^{T_q\times H\times W}$, so the subsequent spectral
encoder is architecture-agnostic.

\section{Full Proofs for the Skill-Wise Spectral Forgetting Bound}
\label{app:spectral_forgetting}

For stage $m$, let $\cP_m$ be the distribution of triples
$(X,Y,S)$, where $X=(I,q)$ and $S\in\cS$ is the reasoning skill.
For $s\in\cS$, define the conditional old-stage risk
\begin{equation}
  R_{m,s}(\theta)
  :=
  \E_{\cP_m}
  \left[
    \ell(f_{\theta}(X),Y)
    \,\middle|\,
    S=s
  \right].
  \label{eq:app_conditional_risk}
\end{equation}
The full old-stage risk decomposes as
\begin{equation}
  R_m(\theta)
  =
  \sum_{s\in\cS}
  \omega_{m,s} R_{m,s}(\theta),
  \qquad
  \omega_{m,s}:=\Pr_{\cP_m}(S=s).
  \label{eq:app_risk_decomposition}
\end{equation}
For any model parameter $\theta$, the skill-conditioned spectral
descriptor law is
\begin{equation}
  Q_{m,s}^{\theta}
  =
  \law
  \left(
    z_{\theta}(X)
    \,\middle|\,
    (X,Y,S)\sim \cP_m,\ S=s
  \right).
  \label{eq:app_descriptor_law}
\end{equation}
The ASR memory for skill $s$ at stage $m$ is
$(\mu_{m,s},\Sigma_{m,s})$ with $\Sigma_{m,s}\succ 0$.  The
corresponding ground metric is
\begin{equation}
  d_{m,s}(z,z')
  =
  \left\|
    \Sigma_{m,s}^{-1/2}(z-z')
  \right\|_2 .
  \label{eq:app_ground_metric}
\end{equation}
For $p\in\{1,2\}$ and probability measures $P,Q$ on $\R^D$, the
Wasserstein distance under $d_{m,s}$ is
\begin{equation}
  \W_{p,m,s}(P,Q)
  :=
  \inf_{\gamma\in\Pi(P,Q)}
  \left(
    \E_{(U,V)\sim\gamma}
    \left[
      d_{m,s}(U,V)^p
    \right]
  \right)^{1/p},
  \label{eq:app_wasserstein_def}
\end{equation}
where $\Pi(P,Q)$ is the set of all couplings of $P$ and $Q$.

For any descriptor law $Q$, let $\cG(Q)$ be the Gaussian distribution
with the same mean and covariance as represented in the ASR spectral
memory.  We use the shorthand
\begin{equation}
  \cG_{m,s}^{\theta}
  :=
  \cG(Q_{m,s}^{\theta}).
  \label{eq:app_gaussianization_def}
\end{equation}
The non-Gaussianity term and the Gaussian spectral drift are
\begin{equation}
\begin{aligned}
  \Gamma_{m,s}^{\theta}
  &:=
  \W_{2,m,s}
  \left(
    Q_{m,s}^{\theta},
    \cG_{m,s}^{\theta}
  \right),
  \\
  \Delta_{m,t,s}
  &:=
  \W_{2,m,s}
  \left(
    \cG_{m,s}^{\theta_t},
    \cG_{m,s}^{\theta_m}
  \right).
\end{aligned}
\label{eq:app_gamma_delta_def}
\end{equation}

\subsection{Auxiliary Lemmas}
\label{app:spectral_forgetting_lemmas}

\begin{lemma}[Risk reduction to spectral transport]
\label{lem:risk_to_transport}
Suppose Assumption~\ref{ass:main_spectral_sufficiency} holds for
stage $m$ and skill $s$.  Then
\begin{equation}
\begin{aligned}
  \left|
    R_{m,s}(\theta_t)-R_{m,s}(\theta_m)
  \right|
  &\le
  \left|
    \E_{Z\sim Q_{m,s}^{\theta_t}} \left[ g_{m,s}(Z) \right]
    \right. \\
  &\qquad \left.
    - \E_{Z\sim Q_{m,s}^{\theta_m}} \left[ g_{m,s}(Z) \right]
  \right| \\
  &\quad + 2\epsilon_{m,s}.
\end{aligned}
\label{eq:app_risk_reduction_statement}
\end{equation}
\end{lemma}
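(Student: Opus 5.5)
The plan is a direct triangle-inequality argument: insert the two surrogate expectations between the two conditional risks. For brevity we write $G(\theta):=\E_{Z\sim Q_{m,s}^{\theta}}[g_{m,s}(Z)]$ for $\theta\in\{\theta_m,\theta_t\}$. These expectations are well defined: the Lipschitz part of Assumption~\ref{ass:main_spectral_sufficiency} gives $|g_{m,s}(z)|\le |g_{m,s}(z_0)|+L_{m,s}d_{m,s}(z,z_0)$. So $G(\theta)$ is finite as soon as $Q_{m,s}^{\theta}$ has a finite first moment, which we take as implicit in the assumption (otherwise the first inequality of the assumption would not make sense).

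Next we write the decomposition
\begin{equation*}
R_{m,s}(\theta_t)-R_{m,s}(\theta_m)
=\big[R_{m,s}(\theta_t)-G(\theta_t)\big]
+\big[G(\theta_t)-G(\theta_m)\big]
+\big[G(\theta_m)-R_{m,s}(\theta_m)\big],
\end{equation*}
which is an algebraic identity. Taking absolute values and applying the triangle inequality bounds the left-hand side by the sum of three absolute values. The first and third terms are each at most $\epsilon_{m,s}$, by the first inequality of Assumption~\ref{ass:main_spectral_sufficiency} applied with $\theta=\theta_t$ and with $\theta=\theta_m$; the assumption is stated for exactly these two parameters. The middle term is the surrogate difference appearing in Eq.~\eqref{eq:app_risk_reduction_statement}, which yields the claimed bound with residual $2\epsilon_{m,s}$.

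There is no real obstacle. The only care point is to apply the residual bound separately at both endpoints, which is what produces the factor $2$. Finiteness of the expectations is handled as in the first paragraph.
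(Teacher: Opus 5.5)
Your proposal is correct and matches the paper's proof. Like the paper, you insert the surrogate expectations at $\theta_t$ and $\theta_m$, apply the triangle inequality, and bound each outer term by $\epsilon_{m,s}$ via Assumption~\ref{ass:main_spectral_sufficiency}; your remark on finiteness of those expectations is a harmless addition that the paper leaves implicit.
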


\begin{proof}
By adding and subtracting the spectral surrogate expectations, we get
\begin{equation}
\begin{aligned}
  &R_{m,s}(\theta_t)-R_{m,s}(\theta_m)
  \\
  &=
  \left[
    R_{m,s}(\theta_t)
    -
    \E_{Z\sim Q_{m,s}^{\theta_t}}
    g_{m,s}(Z)
  \right]
  \\
  &\quad+
  \left[
    \E_{Z\sim Q_{m,s}^{\theta_t}}
    g_{m,s}(Z)
    -
    \E_{Z\sim Q_{m,s}^{\theta_m}}
    g_{m,s}(Z)
  \right]
  \\
  &\quad+
  \left[
    \E_{Z\sim Q_{m,s}^{\theta_m}}
    g_{m,s}(Z)
    -
    R_{m,s}(\theta_m)
  \right].
\end{aligned}
\label{eq:app_risk_decomposition_chain}
\end{equation}
Taking absolute values and applying the triangle inequality yields
\begin{equation}
\begin{aligned}
  &\left|
    R_{m,s}(\theta_t)-R_{m,s}(\theta_m)
  \right|
  \\
  &\le
  \left|
    R_{m,s}(\theta_t)
    -
    \E_{Z\sim Q_{m,s}^{\theta_t}}
    g_{m,s}(Z)
  \right|
  \\
  &\quad+
  \left|
    \E_{Z\sim Q_{m,s}^{\theta_t}}
    g_{m,s}(Z)
    -
    \E_{Z\sim Q_{m,s}^{\theta_m}}
    g_{m,s}(Z)
  \right|
  \\
  &\quad+
  \left|
    \E_{Z\sim Q_{m,s}^{\theta_m}}
    g_{m,s}(Z)
    -
    R_{m,s}(\theta_m)
  \right|.
\end{aligned}
\label{eq:app_risk_triangle_chain}
\end{equation}
Assumption~\ref{ass:main_spectral_sufficiency} bounds the first and
third terms by $\epsilon_{m,s}$.  Therefore,
\begin{equation}
\begin{aligned}
  \left|
    R_{m,s}(\theta_t)-R_{m,s}(\theta_m)
  \right|
  &\le
  \epsilon_{m,s} \\
  &\quad +
  \left|
    \E_{Z\sim Q_{m,s}^{\theta_t}} g_{m,s}(Z)
    \right. \\
  &\qquad \left.
    - \E_{Z\sim Q_{m,s}^{\theta_m}} g_{m,s}(Z)
  \right| \\
  &\quad + \epsilon_{m,s},
\end{aligned}
\label{eq:app_risk_reduction_final}
\end{equation}
which proves Eq.~\eqref{eq:app_risk_reduction_statement}.
\end{proof}

\begin{lemma}[Lipschitz test functions are controlled by Wasserstein drift]
\label{lem:lipschitz_wasserstein}
Let $g:\R^D\to\R$ satisfy
$|g(z)-g(z')|\le Ld_{m,s}(z,z')$ for all $z,z'$.  Then, for any
probability measures $P,Q$ with finite first moments,
\begin{equation}
  \left|
    \E_{Z\sim P}g(Z)
    -
    \E_{Z\sim Q}g(Z)
  \right|
  \le
  L\W_{1,m,s}(P,Q).
  \label{eq:app_lip_wasserstein_statement}
\end{equation}
\end{lemma}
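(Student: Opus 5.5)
The plan is to prove Lemma~\ref{lem:lipschitz_wasserstein} through the coupling characterization of $\W_{1,m,s}$ in Eq.~\eqref{eq:app_wasserstein_def}; Kantorovich--Rubinstein duality is not needed. First we check integrability. Since $\Sigma_{m,s}\succ 0$, the metric $d_{m,s}$ is equivalent to the Euclidean norm: $d_{m,s}(z,z')\le \lambda_{\min}(\Sigma_{m,s})^{-1/2}\|z-z'\|_2$. Fix a point $z_0$. Then $|g(z)|\le |g(z_0)|+L d_{m,s}(z,z_0)$, so $g$ has at most linear growth. Because $P$ and $Q$ have finite first moments, $g$ is integrable under both, and $d_{m,s}(U,V)$ is integrable under every coupling $\gamma\in\Pi(P,Q)$. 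This makes every expectation below finite, so $\W_{1,m,s}(P,Q)<\infty$.

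Next, fix an arbitrary coupling $\gamma\in\Pi(P,Q)$. Its marginals are $P$ and $Q$, so
\[
\E_{P}g(Z)-\E_{Q}g(Z)=\E_{(U,V)\sim\gamma}\bigl[g(U)-g(V)\bigr].
\]
Combining Jensen's inequality for the absolute value with the Lipschitz hypothesis gives
\[
\bigl|\E_{P}g-\E_{Q}g\bigr|\le \E_{\gamma}\bigl|g(U)-g(V)\bigr|\le L\,\E_{\gamma}\bigl[d_{m,s}(U,V)\bigr].
\]
The left-hand side does not depend on $\gamma$. Taking the infimum over $\gamma\in\Pi(P,Q)$ on the right therefore gives exactly $L\,\W_{1,m,s}(P,Q)$ by Eq.~\eqref{eq:app_wasserstein_def} with $p=1$, which is Eq.~\eqref{eq:app_lip_wasserstein_statement}.

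The only step that needs care is the integrability check, because the splitting of the expectation of $g(U)-g(V)$ into the two marginal expectations requires both terms to be finite. The linear-growth argument above supplies this. No attainment of the infimum and no measurable-selection issues arise, since we only bound each coupling and then pass to the infimum.
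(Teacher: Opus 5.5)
Your proof is correct and matches the paper's argument: for an arbitrary coupling $\gamma\in\Pi(P,Q)$ you bound $|\E_P g-\E_Q g|$ by $L\,\E_\gamma[d_{m,s}(U,V)]$ and then take the infimum over couplings. Your extra integrability check, using the linear growth of $g$ and the equivalence of $d_{m,s}$ with the Euclidean norm, makes explicit a detail the paper leaves implicit.
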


\begin{proof}
Let $\gamma\in\Pi(P,Q)$ be any coupling.  If $(U,V)\sim\gamma$, then
$U\sim P$ and $V\sim Q$.  Hence
\begin{equation}
\begin{aligned}
  &\left|
    \E_{Z\sim P}g(Z)
    -
    \E_{Z\sim Q}g(Z)
  \right|
  \\
  &=
  \left|
    \E_{(U,V)\sim\gamma}
    \left[
      g(U)-g(V)
    \right]
  \right|
  \\
  &\le
  \E_{(U,V)\sim\gamma}
  \left[
    |g(U)-g(V)|
  \right]
  \\
  &\le
  L
  \E_{(U,V)\sim\gamma}
  \left[
    d_{m,s}(U,V)
  \right].
\end{aligned}
\label{eq:app_lip_coupling_chain}
\end{equation}
Taking the infimum over all couplings $\gamma\in\Pi(P,Q)$ gives
\begin{equation}
\begin{aligned}
  \left|
    \E_{Z\sim P}g(Z)
    -
    \E_{Z\sim Q}g(Z)
  \right|
  &\le
  L
  \inf_{\gamma\in\Pi(P,Q)}
  \E_{(U,V)\sim\gamma}
  \left[
    d_{m,s}(U,V)
  \right]
  \\
  &=
  L\W_{1,m,s}(P,Q).
\end{aligned}
\label{eq:app_lip_wasserstein_final}
\end{equation}
\end{proof}

\begin{lemma}[$\W_1$--$\W_2$ comparison and Gaussian insertion]
\label{lem:w1_w2_gaussian_insert}
For any descriptor laws
$Q_{m,s}^{\theta_t}$ and $Q_{m,s}^{\theta_m}$,
\begin{equation}
\begin{aligned}
  \W_{1,m,s}
  \left(
    Q_{m,s}^{\theta_t},
    Q_{m,s}^{\theta_m}
  \right)
  &\le
  \W_{2,m,s}
  \left(
    Q_{m,s}^{\theta_t},
    Q_{m,s}^{\theta_m}
  \right)
  \\
  &\le
  \Gamma_{m,s}^{\theta_t}
  +
  \Delta_{m,t,s}
  +
  \Gamma_{m,s}^{\theta_m}.
\end{aligned}
\label{eq:app_w1_w2_gaussian_statement}
\end{equation}
\end{lemma}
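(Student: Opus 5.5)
The lemma has two inequalities, and I would prove them separately.

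\textbf{First inequality.} This is the standard monotonicity of Wasserstein distances in $p$, and I would get it from Jensen's inequality. Fix any coupling $\gamma\in\Pi(Q_{m,s}^{\theta_t},Q_{m,s}^{\theta_m})$. Since $x\mapsto x^2$ is convex (equivalently, by Cauchy--Schwarz),
\begin{equation*}
\E_{\gamma}[d_{m,s}(U,V)]\le \big(\E_{\gamma}[d_{m,s}(U,V)^2]\big)^{1/2}.
\end{equation*}
The left side is at least $\W_{1,m,s}$ for every $\gamma$. Taking the infimum over $\gamma$ on the right then gives $\W_{1,m,s}\le\W_{2,m,s}$.

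\textbf{Second inequality.} This is the triangle inequality for $\W_{2,m,s}$, applied along the chain
\begin{equation*}
Q_{m,s}^{\theta_t}\to\cG_{m,s}^{\theta_t}\to\cG_{m,s}^{\theta_m}\to Q_{m,s}^{\theta_m}.
\end{equation*}
With the definitions in Eq.~\eqref{eq:app_gamma_delta_def}, the three links are bounded by $\Gamma_{m,s}^{\theta_t}$, $\Delta_{m,t,s}$ and $\Gamma_{m,s}^{\theta_m}$ respectively. Symmetry of $\W_2$ lets me reverse the last link.

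\textbf{Triangle inequality for $\W_{2,m,s}$.} The one step needing justification is that $\W_{2,m,s}$ is a metric on measures with finite second moment under $d_{m,s}$. Since $\Sigma_{m,s}\succ0$, $d_{m,s}$ is a genuine metric on $\R^D$. It is the Euclidean metric pushed through the linear bijection $z\mapsto\Sigma_{m,s}^{-1/2}z$. Hence $\W_{2,m,s}(P,Q)=\W_2(T_\#P,T_\#Q)$ with $T=\Sigma_{m,s}^{-1/2}$, and I could simply cite the standard Euclidean triangle inequality.

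To keep the argument self-contained, I would instead prove it by the gluing lemma. Take $\gamma_{12}\in\Pi(P_1,P_2)$ and $\gamma_{23}\in\Pi(P_2,P_3)$, each $\varepsilon$-optimal. Glue them along the common marginal $P_2$ into a law of $(Z_1,Z_2,Z_3)$; this exists by disintegration on $\R^D$, which is Polish. The $(1,3)$-marginal of the glued law is a coupling of $P_1$ and $P_3$. Minkowski's inequality in $L^2$, applied to the pointwise bound $d(Z_1,Z_3)\le d(Z_1,Z_2)+d(Z_2,Z_3)$, gives
\begin{equation*}
\W_2(P_1,P_3)\le\W_2(P_1,P_2)+\W_2(P_2,P_3)+2\varepsilon.
\end{equation*}
Letting $\varepsilon\to0$ gives the triangle inequality, and applying it twice gives the four-point chain.

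\textbf{Main obstacle.} The only real subtlety is measure-theoretic. All laws involved must have finite second moments, so that every term is finite and the gluing is valid. This holds for the Gaussians by construction. For the $Q$'s, if a second moment is infinite, the corresponding $\Gamma$ term is $+\infty$ and the bound holds trivially. I would state this convention explicitly rather than impose it as an assumption.
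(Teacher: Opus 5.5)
Your proposal is correct and follows essentially the same proof as the paper: Jensen's inequality gives $\W_{1,m,s}\le\W_{2,m,s}$, and the second inequality comes from gluing $\eta$-optimal couplings, applying the triangle inequality for $d_{m,s}$, and using Minkowski's inequality in $L^2$ along the chain $Q_{m,s}^{\theta_t}\to\cG_{m,s}^{\theta_t}\to\cG_{m,s}^{\theta_m}\to Q_{m,s}^{\theta_m}$. The only difference is cosmetic: the paper glues all three couplings at once into a law of $(Z_0,Z_1,Z_2,Z_3)$ with a $3\eta$ slack, while you prove the two-term triangle inequality and apply it twice; your remarks on finite second moments are extra care the paper omits.
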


\begin{proof}
We first prove $\W_1\le\W_2$.  For any coupling
$\gamma\in\Pi(P,Q)$, Jensen's inequality gives
\begin{equation}
\begin{aligned}
  \E_{\gamma}
  \left[
    d_{m,s}(U,V)
  \right]
  &\le
  \left(
    \E_{\gamma}
    \left[
      d_{m,s}(U,V)^2
    \right]
  \right)^{1/2}.
\end{aligned}
\label{eq:app_jensen_w1_w2}
\end{equation}
Taking the infimum on the left over all couplings and then using
that the same inequality holds for every coupling gives
\begin{equation}
\begin{aligned}
  \W_{1,m,s}(P,Q)
  &=
  \inf_{\gamma\in\Pi(P,Q)}
  \E_{\gamma}
  \left[
    d_{m,s}(U,V)
  \right]
  \\
  &\le
  \inf_{\gamma\in\Pi(P,Q)}
  \left(
    \E_{\gamma}
    \left[
      d_{m,s}(U,V)^2
    \right]
  \right)^{1/2}
  \\
  &=
  \W_{2,m,s}(P,Q).
\end{aligned}
\label{eq:app_w1_le_w2}
\end{equation}

We now prove the Gaussian insertion inequality.  For brevity, set
\begin{equation}
\begin{aligned}
  P_0 &:= Q_{m,s}^{\theta_t},
  &
  P_1 &:= \cG_{m,s}^{\theta_t},
  \\
  P_2 &:= \cG_{m,s}^{\theta_m},
  &
  P_3 &:= Q_{m,s}^{\theta_m}.
\end{aligned}
\label{eq:app_insert_short_notation}
\end{equation}
Let $\gamma_{01}$, $\gamma_{12}$, and $\gamma_{23}$ be
$\eta$-optimal couplings for the three adjacent pairs.  By the
gluing lemma, there exists a joint distribution over
$(Z_0,Z_1,Z_2,Z_3)$ whose adjacent marginals are these couplings.
Using the triangle inequality for $d_{m,s}$ and then Minkowski's
inequality, we obtain
\begin{equation}
\begin{aligned}
  &\left(
    \E
    \left[
      d_{m,s}(Z_0,Z_3)^2
    \right]
  \right)^{1/2}
  \\
  &\le
  \left(
    \E
    \left[
      \left(
        d_{m,s}(Z_0,Z_1)
        +
        d_{m,s}(Z_1,Z_2)
        +
        d_{m,s}(Z_2,Z_3)
      \right)^2
    \right]
  \right)^{1/2}
  \\
  &\le
  \left(
    \E
    \left[
      d_{m,s}(Z_0,Z_1)^2
    \right]
  \right)^{1/2}
  +
  \left(
    \E
    \left[
      d_{m,s}(Z_1,Z_2)^2
    \right]
  \right)^{1/2}
  \\
  &\quad+
  \left(
    \E
    \left[
      d_{m,s}(Z_2,Z_3)^2
    \right]
  \right)^{1/2}.
\end{aligned}
\label{eq:app_minkowski_insert_chain}
\end{equation}
Since the joint law couples $P_0$ and $P_3$, the definition of
$\W_2$ gives
\begin{equation}
\begin{aligned}
  \W_{2,m,s}(P_0,P_3)
  &\le
  \left(
    \E
    \left[
      d_{m,s}(Z_0,Z_3)^2
    \right]
  \right)^{1/2} \\
  &\le
  \W_{2,m,s}(P_0,P_1)
  + \W_{2,m,s}(P_1,P_2) \\
  &\qquad + \W_{2,m,s}(P_2,P_3)
  + 3\eta.
\end{aligned}
\label{eq:app_insert_eta_bound}
\end{equation}
Letting $\eta\downarrow 0$ and substituting the definitions of
$\Gamma_{m,s}^{\theta_t}$, $\Delta_{m,t,s}$, and
$\Gamma_{m,s}^{\theta_m}$ proves
\begin{equation}
\begin{aligned}
  \W_{2,m,s}
  \left(
    Q_{m,s}^{\theta_t},
    Q_{m,s}^{\theta_m}
  \right)
  &\le
  \Gamma_{m,s}^{\theta_t}
  +
  \Delta_{m,t,s}
  +
  \Gamma_{m,s}^{\theta_m}.
\end{aligned}
\label{eq:app_gaussian_insert_final}
\end{equation}
Combining Eq.~\eqref{eq:app_w1_le_w2} and
Eq.~\eqref{eq:app_gaussian_insert_final} proves the claim.
\end{proof}

\begin{lemma}[Exact Gaussian formula in the ASR whitened metric]
\label{lem:gaussian_w2_formula}
Let
$G_0=\mathcal{N}(\mu_0,\Sigma_0)$ and
$G_1=\mathcal{N}(\mu_1,\Sigma_1)$ with $\Sigma_0\succ0$ and
$\Sigma_1\succ0$.  Under the metric
$d_0(z,z')=\|\Sigma_0^{-1/2}(z-z')\|_2$,
\begin{equation}
\begin{aligned}
  \W_{2,0}^2(G_1,G_0)
  &=
  \left\|
    \Sigma_0^{-1/2}(\mu_1-\mu_0)
  \right\|_2^2
  \\
  &\quad+
  \Tr(C_1)
  +
  D
  -
  2\Tr(C_1^{1/2}),
\end{aligned}
\label{eq:app_gaussian_w2_formula_trace}
\end{equation}
where $C_1=\Sigma_0^{-1/2}\Sigma_1\Sigma_0^{-1/2}$.  Equivalently,
\begin{equation}
  \W_{2,0}^2(G_1,G_0)
  =
  \left\|
    \Sigma_0^{-1/2}(\mu_1-\mu_0)
  \right\|_2^2
  +
  \left\|
    C_1^{1/2}-I_D
  \right\|_F^2 .
  \label{eq:app_gaussian_w2_formula_fro}
\end{equation}
\end{lemma}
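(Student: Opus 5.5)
The plan is to reduce to the standard Euclidean Bures--Wasserstein formula by whitening. Let $T(z)=\Sigma_0^{-1/2}z$. This is a linear bijection of $\R^D$ with $d_0(z,z')=\|T z-T z'\|_2$, so it is an isometry from $(\R^D,d_0)$ onto Euclidean $\R^D$. Pushing forward couplings along $T\times T$ gives a bijection between $\Pi(G_1,G_0)$ and $\Pi(T_\#G_1,T_\#G_0)$ that preserves the transport cost. Hence
\[
\W_{2,0}(G_1,G_0)=\W_2(T_\#G_1,T_\#G_0),
\]
where the right-hand side is the ordinary Euclidean 2-Wasserstein distance. The whitened laws are $T_\#G_0=\mathcal{N}(\Sigma_0^{-1/2}\mu_0,I_D)$ and $T_\#G_1=\mathcal{N}(\Sigma_0^{-1/2}\mu_1,C_1)$.

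Next I will invoke the classical Gaussian $\W_2$ formula (Dowson--Landau, Givens--Shortt, Olkin--Pukelsheim):
\[
\W_2^2\bigl(\mathcal{N}(m_1,K_1),\mathcal{N}(m_0,K_0)\bigr)=\|m_1-m_0\|_2^2+\Tr K_1+\Tr K_0-2\Tr\bigl(K_0^{1/2}K_1K_0^{1/2}\bigr)^{1/2}.
\]
I take this as a known result rather than reprove it. If a self-contained argument is wanted, I will sketch it in two parts. First, the mean separates: for any coupling of centered versions, $\E\|U-V\|^2=\|m_1-m_0\|^2+\E\|\tilde U-\tilde V\|^2$, because the cross term vanishes. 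Second, the covariance part comes from minimizing $\Tr K_1+\Tr K_0-2\Tr\,\Cov(\tilde U,\tilde V)$ over joint covariances that are PSD with the given diagonal blocks. The optimal coupling is the linear map $x\mapsto K_1^{1/2}x$ when $K_0=I$, which attains the bound. That second part is the only nontrivial step, and the main obstacle if one insists on proving it.

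With $K_0=I_D$ the cross term simplifies to $\Tr(C_1^{1/2})$, which gives exactly the trace form $\|\Sigma_0^{-1/2}(\mu_1-\mu_0)\|_2^2+\Tr C_1+D-2\Tr C_1^{1/2}$. For the Frobenius form, I note that $C_1\succ0$ is symmetric, so $C_1^{1/2}$ is symmetric as well. Expanding then gives
\[
\|C_1^{1/2}-I_D\|_F^2=\Tr C_1-2\Tr C_1^{1/2}+D,
\]
which proves the equivalence.
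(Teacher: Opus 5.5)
Your proposal is correct and follows the paper's route: whiten by $\Sigma_0^{-1/2}$ to reduce to the Euclidean $\W_2$ between $\mathcal{N}(\delta,C_1)$ and $\mathcal{N}(0,I_D)$ (the paper also subtracts $\mu_0$, which makes no difference), attain the bound with the linear coupling $Y=\delta+C_1^{1/2}X$, and expand the Frobenius norm. The only difference is that you cite the classical Gaussian formula, whereas the paper proves the lower bound directly for $K_0=I_D$: it uses the Schur complement $C_1-K^{\top}K\succeq 0$ of the joint covariance and then $\Tr(K)\le\|K\|_*=\Tr\bigl((K^{\top}K)^{1/2}\bigr)\le\Tr(C_1^{1/2})$, which is exactly the step your sketch leaves open.
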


\begin{proof}
Apply the affine whitening map
$T(z)=\Sigma_0^{-1/2}(z-\mu_0)$.  Under this map,
$G_0$ becomes $\widetilde{G}_0=\mathcal{N}(0,I_D)$ and $G_1$
becomes
$\widetilde{G}_1=\mathcal{N}(\delta,C_1)$, where
\begin{equation}
  \delta
  :=
  \Sigma_0^{-1/2}(\mu_1-\mu_0),
  \qquad
  C_1
  :=
  \Sigma_0^{-1/2}\Sigma_1\Sigma_0^{-1/2}.
  \label{eq:app_whitened_gaussian_params}
\end{equation}
Therefore,
\begin{equation}
  \W_{2,0}^2(G_1,G_0)
  =
  \W_2^2
  \left(
    \mathcal{N}(\delta,C_1),
    \mathcal{N}(0,I_D)
  \right),
  \label{eq:app_metric_to_euclidean_w2}
\end{equation}
where the right-hand side uses the Euclidean metric.

Let $X\sim\mathcal{N}(0,I_D)$ and
$Y\sim\mathcal{N}(\delta,C_1)$ be any coupling.  Write
$\bar{Y}=Y-\delta$.  Then $\E X=0$, $\E \bar{Y}=0$,
$\Cov(X)=I_D$, and $\Cov(\bar{Y})=C_1$.  Expanding the squared cost,
we obtain
\begin{equation}
\begin{aligned}
  \E
  \left[
    \|X-Y\|_2^2
  \right]
  &=
  \E
  \left[
    \|X-\bar{Y}-\delta\|_2^2
  \right]
  \\
  &=
  \|\delta\|_2^2
  +
  \E
  \left[
    \|X-\bar{Y}\|_2^2
  \right]
  \\
  &=
  \|\delta\|_2^2
  +
  \Tr(I_D)
  +
  \Tr(C_1)
  -
  2\Tr(K),
\end{aligned}
\label{eq:app_gaussian_cost_expand}
\end{equation}
where $K:=\E[X\bar{Y}^{\top}]$ is the cross-covariance.  The joint
covariance matrix
\begin{equation}
  M
  :=
  \begin{pmatrix}
    I_D & K \\
    K^{\top} & C_1
  \end{pmatrix}
  \label{eq:app_joint_covariance_matrix}
\end{equation}
must be positive semidefinite.  Since the upper-left block is
$I_D$, the Schur complement implies
\begin{equation}
  C_1-K^{\top}K\succeq 0 .
  \label{eq:app_schur_condition}
\end{equation}
Consequently,
\begin{equation}
\begin{aligned}
  \Tr(K)
  &\le
  \|K\|_*
  \\
  &=
  \Tr\!\left(
    (K^{\top}K)^{1/2}
  \right)
  \\
  &\le
  \Tr(C_1^{1/2}).
\end{aligned}
\label{eq:app_cross_cov_trace_bound}
\end{equation}
The first inequality uses $\Tr(K)\le \sum_i\sigma_i(K)$, the second
is the definition of the nuclear norm, and the last follows from
$K^{\top}K\preceq C_1$ and the monotonicity of the matrix square
root.  Substituting Eq.~\eqref{eq:app_cross_cov_trace_bound} into
Eq.~\eqref{eq:app_gaussian_cost_expand} gives the lower bound
\begin{equation}
\begin{aligned}
  \E
  \left[
    \|X-Y\|_2^2
  \right]
  &\ge
  \|\delta\|_2^2
  +
  D
  +
  \Tr(C_1)
  -
  2\Tr(C_1^{1/2}).
\end{aligned}
\label{eq:app_gaussian_lower_bound}
\end{equation}

It remains to show that the lower bound is attainable.  Let
$X\sim\mathcal{N}(0,I_D)$ and define
\begin{equation}
  Y
  :=
  \delta + C_1^{1/2}X .
  \label{eq:app_optimal_gaussian_coupling}
\end{equation}
Then $Y\sim\mathcal{N}(\delta,C_1)$ and the cross-covariance is
$K=C_1^{1/2}$.  Therefore,
\begin{equation}
\begin{aligned}
  \E
  \left[
    \|X-Y\|_2^2
  \right]
  &=
  \|\delta\|_2^2
  +
  D
  +
  \Tr(C_1)
  -
  2\Tr(C_1^{1/2}).
\end{aligned}
\label{eq:app_gaussian_upper_bound}
\end{equation}
Combining the lower and upper bounds proves
Eq.~\eqref{eq:app_gaussian_w2_formula_trace}.  Finally,
\begin{equation}
\begin{aligned}
  \left\|
    C_1^{1/2}-I_D
  \right\|_F^2
  &=
  \Tr(C_1)
  +
  \Tr(I_D)
  -
  2\Tr(C_1^{1/2})
  \\
  &=
  \Tr(C_1)
  +
  D
  -
  2\Tr(C_1^{1/2}),
\end{aligned}
\label{eq:app_trace_to_frobenius}
\end{equation}
which proves Eq.~\eqref{eq:app_gaussian_w2_formula_fro}.
\end{proof}

\begin{lemma}[Perturbation caused by empirical spectral memory]
\label{lem:empirical_memory_perturbation}
Let
$G=\mathcal{N}(\mu,\Sigma)$ be the ideal old spectral prototype and
$\widehat{G}=\mathcal{N}(\widehat{\mu},\widehat{\Sigma})$ be the
stored empirical prototype, where $\Sigma\succ0$ and
$\widehat{\Sigma}\succ0$.  Define
\begin{equation}
\begin{aligned}
  a
  &:=
  \left\|
    \Sigma^{-1/2}(\widehat{\mu}-\mu)
  \right\|_2,
  \\
  B
  &:=
  \Sigma^{-1/2}\widehat{\Sigma}\Sigma^{-1/2},
  \\
  b
  &:=
  \|B-I_D\|_F,
  \qquad
  \rho
  :=
  \|B-I_D\|_{\mathrm{op}} .
\end{aligned}
\label{eq:app_memory_error_def}
\end{equation}
If $\rho<1$, then for any Gaussian
$G_t=\mathcal{N}(\mu_t,\Sigma_t)$,
\begin{equation}
\begin{aligned}
  \W_{2,\Sigma}(G_t,G)
  &\le
  \sqrt{1+\rho}\,
  \W_{2,\widehat{\Sigma}}(G_t,\widehat{G})
  \\
  &\quad+
  \left(
    a^2
    +
    \frac{b^2}{(1+\sqrt{1-\rho})^2}
  \right)^{1/2}.
\end{aligned}
\label{eq:app_empirical_memory_perturbation_statement}
\end{equation}
Here $\W_{2,\Sigma}$ and $\W_{2,\widehat{\Sigma}}$ denote
$2$-Wasserstein distances under the Mahalanobis metrics induced by
$\Sigma$ and $\widehat{\Sigma}$, respectively.
\end{lemma}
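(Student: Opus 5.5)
The plan is to insert the stored prototype $\widehat G$ between $G_t$ and $G$ and use the triangle inequality for $\W_{2,\Sigma}$. This holds because $\W_{2,\Sigma}$ is the $2$-Wasserstein distance for the metric $d_\Sigma(z,z')=\|\Sigma^{-1/2}(z-z')\|_2$; the gluing argument in the proof of Lemma~\ref{lem:w1_w2_gaussian_insert} gives it directly. This yields
\begin{equation*}
  \W_{2,\Sigma}(G_t,G)\le \W_{2,\Sigma}(G_t,\widehat G)+\W_{2,\Sigma}(\widehat G,G).
\end{equation*}
The first term must then be converted from the $\Sigma$-metric to the $\widehat\Sigma$-metric. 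The second term should be evaluated in closed form with Lemma~\ref{lem:gaussian_w2_formula}.

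\emph{Metric comparison.} For any $w\in\R^D$, write $\Sigma^{-1/2}w=M\,\widehat\Sigma^{-1/2}w$ with $M:=\Sigma^{-1/2}\widehat\Sigma^{1/2}$. Then
\begin{equation*}
  \|M\|_{\mathrm{op}}^2=\|MM^\top\|_{\mathrm{op}}=\|\Sigma^{-1/2}\widehat\Sigma\Sigma^{-1/2}\|_{\mathrm{op}}=\|B\|_{\mathrm{op}}\le 1+\|B-I_D\|_{\mathrm{op}}=1+\rho,
\end{equation*}
so $d_\Sigma(z,z')\le\sqrt{1+\rho}\,d_{\widehat\Sigma}(z,z')$ pointwise. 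Evaluating both costs on the same coupling of $G_t$ and $\widehat G$, and then taking the infimum over couplings, gives $\W_{2,\Sigma}(G_t,\widehat G)\le\sqrt{1+\rho}\,\W_{2,\widehat\Sigma}(G_t,\widehat G)$. I expect this to be the only step needing care. The key observation is that the relevant operator norm is exactly that of $B$, through the identity $MM^\top=B$; this is why the constant is $\sqrt{1+\rho}$ rather than something depending on $\Sigma$ and $\widehat\Sigma$ separately.

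\emph{Memory term.} Apply Lemma~\ref{lem:gaussian_w2_formula} with $G_0=G$ and $G_1=\widehat G$. Here $C_1=B$, so
\begin{equation*}
  \W_{2,\Sigma}^2(\widehat G,G)=a^2+\|B^{1/2}-I_D\|_F^2.
\end{equation*}
Diagonalize the symmetric matrix $B$ with eigenvalues $\lambda_i$. Since $\rho<1$, every $\lambda_i$ lies in $[1-\rho,1+\rho]\subset(0,\infty)$, and
\begin{equation*}
  |\sqrt{\lambda_i}-1|=\frac{|\lambda_i-1|}{\sqrt{\lambda_i}+1}\le\frac{|\lambda_i-1|}{1+\sqrt{1-\rho}}.
\end{equation*}
Squaring and summing over $i$ gives $\|B^{1/2}-I_D\|_F\le b/(1+\sqrt{1-\rho})$. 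Hence $\W_{2,\Sigma}(\widehat G,G)\le\big(a^2+b^2/(1+\sqrt{1-\rho})^2\big)^{1/2}$.

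Substituting the two bounds into the triangle inequality gives Eq.~\eqref{eq:app_empirical_memory_perturbation_statement}. The hypothesis $\rho<1$ is used only to keep the eigenvalues of $B$ bounded away from zero, which is what produces the factor $1/(1+\sqrt{1-\rho})$.
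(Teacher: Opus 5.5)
Your proposal is correct and follows the paper's proof essentially step for step. Both arguments use the triangle inequality through $\widehat G$, then the pointwise comparison $d_\Sigma\le\sqrt{1+\rho}\,d_{\widehat\Sigma}$, and finally the closed-form Gaussian formula together with the eigenvalue bound $|\sqrt{\lambda_i}-1|\le|\lambda_i-1|/(1+\sqrt{1-\rho})$. The only cosmetic difference is in the metric comparison: you derive it from $\|\Sigma^{-1/2}\widehat\Sigma^{1/2}\|_{\mathrm{op}}^2=\|B\|_{\mathrm{op}}\le 1+\rho$, while the paper inverts the Loewner bound $B\preceq(1+\rho)I_D$ to get $\Sigma^{-1}\preceq(1+\rho)\widehat\Sigma^{-1}$, and the two are equivalent.
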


\begin{proof}
The triangle inequality for $\W_2$ gives
\begin{equation}
\begin{aligned}
  \W_{2,\Sigma}(G_t,G)
  &\le
  \W_{2,\Sigma}(G_t,\widehat{G})
  +
  \W_{2,\Sigma}(\widehat{G},G).
\end{aligned}
\label{eq:app_memory_triangle}
\end{equation}
We first compare the two ground metrics.  Since
$\rho=\|B-I_D\|_{\mathrm{op}}<1$, all eigenvalues of $B$ lie in
$[1-\rho,1+\rho]$.  Hence $B\preceq (1+\rho)I_D$, which implies
$B^{-1}\succeq (1+\rho)^{-1}I_D$.  For any vector $v\in\R^D$,
\begin{equation}
\begin{aligned}
  \|\Sigma^{-1/2}v\|_2^2
  &=
  v^{\top}\Sigma^{-1}v
  \\
  &\le
  (1+\rho)
  v^{\top}
  \widehat{\Sigma}^{-1}
  v
  \\
  &=
  (1+\rho)
  \|\widehat{\Sigma}^{-1/2}v\|_2^2 .
\end{aligned}
\label{eq:app_metric_comparison_vector}
\end{equation}
Therefore,
\begin{equation}
  d_{\Sigma}(u,v)
  \le
  \sqrt{1+\rho}\,
  d_{\widehat{\Sigma}}(u,v)
  \qquad
  \forall u,v\in\R^D .
  \label{eq:app_metric_comparison_pointwise}
\end{equation}
Applying Eq.~\eqref{eq:app_metric_comparison_pointwise} inside the
definition of $\W_2$ yields
\begin{equation}
\begin{aligned}
  \W_{2,\Sigma}(G_t,\widehat{G})
  &=
  \inf_{\gamma}
  \left(
    \E_{\gamma}
    \left[
      d_{\Sigma}(U,V)^2
    \right]
  \right)^{1/2}
  \\
  &\le
  \sqrt{1+\rho}
  \inf_{\gamma}
  \left(
    \E_{\gamma}
    \left[
      d_{\widehat{\Sigma}}(U,V)^2
    \right]
  \right)^{1/2}
  \\
  &=
  \sqrt{1+\rho}\,
  \W_{2,\widehat{\Sigma}}(G_t,\widehat{G}).
\end{aligned}
\label{eq:app_metric_comparison_w2}
\end{equation}

It remains to bound $\W_{2,\Sigma}(\widehat{G},G)$.  By
Lemma~\ref{lem:gaussian_w2_formula},
\begin{equation}
\begin{aligned}
  \W_{2,\Sigma}^2(\widehat{G},G)
  &=
  \left\|
    \Sigma^{-1/2}(\widehat{\mu}-\mu)
  \right\|_2^2
  +
  \left\|
    B^{1/2}-I_D
  \right\|_F^2
  \\
  &=
  a^2
  +
  \left\|
    B^{1/2}-I_D
  \right\|_F^2 .
\end{aligned}
\label{eq:app_empirical_memory_exact}
\end{equation}
Let $\lambda_1,\ldots,\lambda_D$ be the eigenvalues of $B$.  Since
$\lambda_i\in[1-\rho,1+\rho]$, for each $i$ we have
\begin{equation}
\begin{aligned}
  |\sqrt{\lambda_i}-1|
  &=
  \frac{|\lambda_i-1|}{\sqrt{\lambda_i}+1}
  \\
  &\le
  \frac{|\lambda_i-1|}{1+\sqrt{1-\rho}} .
\end{aligned}
\label{eq:app_sqrt_eigen_bound}
\end{equation}
Squaring and summing over $i$ gives
\begin{equation}
\begin{aligned}
  \left\|
    B^{1/2}-I_D
  \right\|_F^2
  &=
  \sum_{i=1}^D
  (\sqrt{\lambda_i}-1)^2
  \\
  &\le
  \frac{
    \sum_{i=1}^D(\lambda_i-1)^2
  }{
    (1+\sqrt{1-\rho})^2
  }
  \\
  &=
  \frac{\|B-I_D\|_F^2}{
    (1+\sqrt{1-\rho})^2
  }
  \\
  &=
  \frac{b^2}{
    (1+\sqrt{1-\rho})^2
  } .
\end{aligned}
\label{eq:app_matrix_sqrt_fro_bound}
\end{equation}
Combining Eq.~\eqref{eq:app_memory_triangle},
Eq.~\eqref{eq:app_metric_comparison_w2},
Eq.~\eqref{eq:app_empirical_memory_exact}, and
Eq.~\eqref{eq:app_matrix_sqrt_fro_bound} proves
Eq.~\eqref{eq:app_empirical_memory_perturbation_statement}.
\end{proof}

\subsection{Proof of Theorem~\ref{thm:spectral_drift_forgetting}}
\label{app:spectral_forgetting_proof}

We start from the definition of forgetting and remove the positive
part using $[a]_+\le |a|$:
\begin{equation}
\begin{aligned}
  \Fgt_{m\to t}
  &=
  \left[
    R_m(\theta_t)-R_m(\theta_m)
  \right]_+
  \\
  &\le
  \left|
    R_m(\theta_t)-R_m(\theta_m)
  \right| .
\end{aligned}
\label{eq:app_forgetting_abs_start}
\end{equation}
Using the risk decomposition in Eq.~\eqref{eq:app_risk_decomposition},
we get
\begin{equation}
\begin{aligned}
  \left|
    R_m(\theta_t)-R_m(\theta_m)
  \right|
  &=
  \left|
    \sum_{s\in\cS}
    \omega_{m,s}
    \left(
      R_{m,s}(\theta_t)-R_{m,s}(\theta_m)
    \right)
  \right|
  \\
  &\le
  \sum_{s\in\cS}
  \omega_{m,s}
  \left|
    R_{m,s}(\theta_t)-R_{m,s}(\theta_m)
  \right| .
\end{aligned}
\label{eq:app_forgetting_skill_average}
\end{equation}
Applying Lemma~\ref{lem:risk_to_transport} to each skill gives
\begin{equation}
\begin{aligned}
  \Fgt_{m\to t}
  &\le
  \sum_{s\in\cS}
  \omega_{m,s}
  \Bigg(
    \left|
      \E_{Z\sim Q_{m,s}^{\theta_t}} g_{m,s}(Z)
      -
      \E_{Z\sim Q_{m,s}^{\theta_m}} g_{m,s}(Z)
    \right| \\
  &\qquad\qquad
    + 2\epsilon_{m,s}
  \Bigg).
\end{aligned}
\label{eq:app_forgetting_after_sufficiency}
\end{equation}
Since $g_{m,s}$ is $L_{m,s}$-Lipschitz under $d_{m,s}$,
Lemma~\ref{lem:lipschitz_wasserstein} yields
\begin{equation}
\begin{aligned}
  \Fgt_{m\to t}
  &\le
  \sum_{s\in\cS}
  \omega_{m,s}
  \Bigg(
    L_{m,s}
    \W_{1,m,s}
    \left(
      Q_{m,s}^{\theta_t},
      Q_{m,s}^{\theta_m}
    \right)
    +
    2\epsilon_{m,s}
  \Bigg)
  \\
  &=
  \sum_{s\in\cS}
  \omega_{m,s}L_{m,s}
  \W_{1,m,s}
  \left(
    Q_{m,s}^{\theta_t},
    Q_{m,s}^{\theta_m}
  \right)
  +
  2
  \sum_{s\in\cS}
  \omega_{m,s}\epsilon_{m,s}.
\end{aligned}
\label{eq:app_forgetting_w1_final}
\end{equation}
This proves Eq.~\eqref{eq:main_sharp_w1_bound}.

It remains to prove the Gaussian spectral-drift version.  From
Lemma~\ref{lem:w1_w2_gaussian_insert}, for every skill $s$,
\begin{equation}
\begin{aligned}
  \W_{1,m,s}
  \left(
    Q_{m,s}^{\theta_t},
    Q_{m,s}^{\theta_m}
  \right)
  &\le
  \Gamma_{m,s}^{\theta_t}
  +
  \Delta_{m,t,s}
  +
  \Gamma_{m,s}^{\theta_m}.
\end{aligned}
\label{eq:app_w1_to_delta_substitution}
\end{equation}
Substituting Eq.~\eqref{eq:app_w1_to_delta_substitution} into
Eq.~\eqref{eq:app_forgetting_w1_final} gives
\begin{equation}
\begin{aligned}
  \Fgt_{m\to t}
  &\le
  \sum_{s\in\cS}
  \omega_{m,s}L_{m,s}
  \left(
    \Gamma_{m,s}^{\theta_t}
    +
    \Delta_{m,t,s}
    +
    \Gamma_{m,s}^{\theta_m}
  \right)
  \\
  &\quad+
  2
  \sum_{s\in\cS}
  \omega_{m,s}\epsilon_{m,s},
\end{aligned}
\label{eq:app_gaussian_drift_bound_final}
\end{equation}
which proves Eq.~\eqref{eq:main_gaussian_drift_bound}.

Finally, suppose
$\cG_{m,s}^{\theta_m}=\mathcal{N}(\mu_{m,s},\Sigma_{m,s})$ and
$\cG_{m,s}^{\theta_t}=\mathcal{N}(\mu_{t|m,s},\Sigma_{t|m,s})$.
Applying Lemma~\ref{lem:gaussian_w2_formula} with
$\Sigma_0=\Sigma_{m,s}$, $\mu_0=\mu_{m,s}$,
$\Sigma_1=\Sigma_{t|m,s}$, and $\mu_1=\mu_{t|m,s}$ yields
\begin{equation}
\begin{aligned}
  \Delta_{m,t,s}^{2}
  &=
  \left\|
    \Sigma_{m,s}^{-1/2}
    (\mu_{t|m,s}-\mu_{m,s})
  \right\|_2^2
  \\
  &\quad+
  \left\|
    \left(
      \Sigma_{m,s}^{-1/2}
      \Sigma_{t|m,s}
      \Sigma_{m,s}^{-1/2}
    \right)^{1/2}
    -I_D
  \right\|_F^2 .
\end{aligned}
\label{eq:app_closed_form_delta_final}
\end{equation}
This proves Eq.~\eqref{eq:main_closed_form_delta} and completes the
proof of Theorem~\ref{thm:spectral_drift_forgetting}.

\subsection{Empirical Memory Version}
\label{app:spectral_forgetting_empirical_memory}

Theorem~\ref{thm:spectral_drift_forgetting} uses the ideal old
prototype $(\mu_{m,s},\Sigma_{m,s})$.  In ASR, this prototype is
estimated and stored as
$(\widehat{\mu}_{m,s},\widehat{\Sigma}_{m,s})$.  The following
corollary converts the ideal bound into a bound expressed through the
stored memory.

\begin{corollary}[Forgetting bound with stored ASR memory]
\label{cor:empirical_memory_forgetting}
Assume the conditions of Theorem~\ref{thm:spectral_drift_forgetting}.
For every skill $s$, let
$\widehat{\cG}_{m,s}:=
\mathcal{N}(\widehat{\mu}_{m,s},\widehat{\Sigma}_{m,s})$ and define
\begin{equation}
\begin{aligned}
  \widehat{\Delta}_{m,t,s}
  &:=
  \W_{2,\widehat{\Sigma}_{m,s}}
  \left(
    \cG_{m,s}^{\theta_t},
    \widehat{\cG}_{m,s}
  \right),
  \\
  a_{m,s}
  &:=
  \left\|
    \Sigma_{m,s}^{-1/2}
    (\widehat{\mu}_{m,s}-\mu_{m,s})
  \right\|_2,
  \\
  B_{m,s}
  &:=
  \Sigma_{m,s}^{-1/2}
  \widehat{\Sigma}_{m,s}
  \Sigma_{m,s}^{-1/2},
  \\
  b_{m,s}
  &:=
  \|B_{m,s}-I_D\|_F,
  \qquad
  \rho_{m,s}
  :=
  \|B_{m,s}-I_D\|_{\mathrm{op}} .
\end{aligned}
\label{eq:app_empirical_corollary_defs}
\end{equation}
If $\rho_{m,s}<1$ for all $s\in\cS$, then
\begin{equation}
\begin{aligned}
  \Fgt_{m\to t}
  &\le
  \sum_{s\in\cS}
  \omega_{m,s}L_{m,s}
  \Bigg(
    \Gamma_{m,s}^{\theta_t}
    +
    \Gamma_{m,s}^{\theta_m}
    \\
  &\qquad\qquad+
    \sqrt{1+\rho_{m,s}}\,
    \widehat{\Delta}_{m,t,s}
    +
    \xi_{m,s}
  \Bigg)
  \\
  &\quad+
  2
  \sum_{s\in\cS}
  \omega_{m,s}\epsilon_{m,s},
\end{aligned}
\label{eq:app_empirical_memory_bound}
\end{equation}
where the empirical-memory error is
\begin{equation}
  \xi_{m,s}
  :=
  \left(
    a_{m,s}^2
    +
    \frac{b_{m,s}^2}{
      (1+\sqrt{1-\rho_{m,s}})^2
    }
  \right)^{1/2}.
  \label{eq:app_empirical_memory_error}
\end{equation}
\end{corollary}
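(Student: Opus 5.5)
The plan is to start from the Gaussian-drift bound Eq.~\eqref{eq:main_gaussian_drift_bound} of Theorem~\ref{thm:spectral_drift_forgetting}, already proved in Eq.~\eqref{eq:app_gaussian_drift_bound_final}. There, each skill $s$ contributes the ideal drift $\Delta_{m,t,s}=\W_{2,m,s}(\cG_{m,s}^{\theta_t},\cG_{m,s}^{\theta_m})$, computed in the metric induced by the ideal covariance $\Sigma_{m,s}$ against the ideal prototype $\cG_{m,s}^{\theta_m}=\mathcal{N}(\mu_{m,s},\Sigma_{m,s})$. Since all coefficients $\omega_{m,s}L_{m,s}$ are nonnegative, it suffices to bound each $\Delta_{m,t,s}$ separately by a quantity involving only the stored memory. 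We then substitute that bound termwise; the $\Gamma$ terms and the residual term $2\sum_s\omega_{m,s}\epsilon_{m,s}$ carry over unchanged.

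For the per-skill bound, apply Lemma~\ref{lem:empirical_memory_perturbation} with the following choices:
\begin{itemize}
\item $G=\mathcal{N}(\mu_{m,s},\Sigma_{m,s})$ as the ideal prototype;
\item $\widehat G=\widehat{\cG}_{m,s}$ as the stored prototype;
\item $G_t=\cG_{m,s}^{\theta_t}$ as the current Gaussian.
\end{itemize}
With these choices, the lemma's quantities $a,B,b,\rho$ coincide exactly with $a_{m,s},B_{m,s},b_{m,s},\rho_{m,s}$ in Eq.~\eqref{eq:app_empirical_corollary_defs}. The hypothesis $\rho_{m,s}<1$ is assumed for every $s$. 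The lemma then yields
\[
\Delta_{m,t,s}=\W_{2,\Sigma_{m,s}}(\cG_{m,s}^{\theta_t},\cG_{m,s}^{\theta_m})\le \sqrt{1+\rho_{m,s}}\,\widehat{\Delta}_{m,t,s}+\xi_{m,s},
\]
because $\W_{2,\widehat\Sigma_{m,s}}(\cG_{m,s}^{\theta_t},\widehat{\cG}_{m,s})$ is by definition $\widehat{\Delta}_{m,t,s}$, and the additive term of the lemma is exactly $\xi_{m,s}$ from Eq.~\eqref{eq:app_empirical_memory_error}.

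Multiplying this inequality by $\omega_{m,s}L_{m,s}\ge 0$, summing over $s$, and inserting the result into Eq.~\eqref{eq:app_gaussian_drift_bound_final} gives Eq.~\eqref{eq:app_empirical_memory_bound}.

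The only potential obstacle is bookkeeping rather than analysis:
\begin{itemize}
\item The metric in $\Delta_{m,t,s}$ must be the one induced by the ideal $\Sigma_{m,s}$, which is how $d_{m,s}$ is defined, so the lemma applies with $\Sigma=\Sigma_{m,s}$.
\item Lemma~\ref{lem:empirical_memory_perturbation} needs $\widehat\Sigma_{m,s}\succ0$ and $\cG_{m,s}^{\theta_t}$ to be a Gaussian, possibly with degenerate covariance. The lemma's proof only uses the triangle inequality, metric comparison, and Lemma~\ref{lem:gaussian_w2_formula} for the pair $(\widehat G,G)$, so the covariance of $G_t$ never needs to be positive definite.
\item $\widehat\Sigma_{m,s}\succ0$ follows from $\rho_{m,s}<1$, since $B_{m,s}$ then has eigenvalues in $[1-\rho_{m,s},1+\rho_{m,s}]\subset(0,\infty)$.
\end{itemize}
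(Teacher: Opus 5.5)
Your proposal is correct and matches the paper's proof: you bound each ideal drift $\Delta_{m,t,s}$ with Lemma~\ref{lem:empirical_memory_perturbation}, taking $G_t=\cG_{m,s}^{\theta_t}$, $G=\cG_{m,s}^{\theta_m}=\mathcal{N}(\mu_{m,s},\Sigma_{m,s})$ and $\widehat G=\widehat{\cG}_{m,s}$, and then substitute into Eq.~\eqref{eq:main_gaussian_drift_bound} using $\omega_{m,s}L_{m,s}\ge 0$. Your extra checks are accurate and slightly more careful than the paper: $\rho_{m,s}<1$ forces $\widehat\Sigma_{m,s}\succ 0$, so $\widehat\Delta_{m,t,s}$ is well defined, and the covariance of $G_t$ never enters the argument.
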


\begin{proof}
From Theorem~\ref{thm:spectral_drift_forgetting}, it is enough to
upper bound the ideal Gaussian drift $\Delta_{m,t,s}$.  Applying
Lemma~\ref{lem:empirical_memory_perturbation} with
$G_t=\cG_{m,s}^{\theta_t}$,
$G=\cG_{m,s}^{\theta_m}$, and
$\widehat{G}=\widehat{\cG}_{m,s}$ gives
\begin{equation}
\begin{aligned}
  \Delta_{m,t,s}
  &=
  \W_{2,\Sigma_{m,s}}
  \left(
    \cG_{m,s}^{\theta_t},
    \cG_{m,s}^{\theta_m}
  \right)
  \\
  &\le
  \sqrt{1+\rho_{m,s}}\,
  \W_{2,\widehat{\Sigma}_{m,s}}
  \left(
    \cG_{m,s}^{\theta_t},
    \widehat{\cG}_{m,s}
  \right)
  \\
  &\quad+
  \left(
    a_{m,s}^2
    +
    \frac{b_{m,s}^2}{
      (1+\sqrt{1-\rho_{m,s}})^2
    }
  \right)^{1/2}
  \\
  &=
  \sqrt{1+\rho_{m,s}}\,
  \widehat{\Delta}_{m,t,s}
  +
  \xi_{m,s}.
\end{aligned}
\label{eq:app_empirical_delta_substitution}
\end{equation}
Substituting Eq.~\eqref{eq:app_empirical_delta_substitution} into
Eq.~\eqref{eq:main_gaussian_drift_bound} yields
Eq.~\eqref{eq:app_empirical_memory_bound}.
\end{proof}

\subsection{Tightness of the Wasserstein Forgetting Bound}
\label{app:spectral_forgetting_tightness}

The first inequality in Theorem~\ref{thm:spectral_drift_forgetting}
is essentially the tightest possible bound under only Lipschitz
spectral sufficiency.  The next proposition formalizes this claim.

\begin{proposition}[Sharpness under Lipschitz spectral sufficiency]
\label{prop:w1_sharpness}
Consider one skill and suppose the spectral sufficiency residual is
zero.  Let $P$ and $Q$ be two descriptor laws supported on a compact
subset of $\R^D$ under metric $d_{m,s}$.  For every $\eta>0$, there
exists an $L$-Lipschitz spectral surrogate $g$ such that the induced
risk difference satisfies
\begin{equation}
  R(\theta_t)-R(\theta_m)
  \ge
  L\W_{1,m,s}(P,Q)-\eta .
  \label{eq:app_sharpness_statement}
\end{equation}
Consequently, no uniform upper bound depending only on the
descriptor laws and the Lipschitz constant can improve the leading
term $L\W_{1,m,s}(P,Q)$ by a fixed positive factor.
\end{proposition}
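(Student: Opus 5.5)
The plan is to use Kantorovich--Rubinstein duality for $\W_{1,m,s}$ under the ground metric $d_{m,s}$, and then realize the near-optimal dual potential as the spectral surrogate.

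\textbf{Step 1 (dual potential).} Since $d_{m,s}$ is a metric on $\R^D$, equivalent to the Euclidean one because $\Sigma_{m,s}\succ0$, and $P,Q$ have compact support (hence finite first moments), duality gives
\begin{equation*}
  \W_{1,m,s}(P,Q)=\sup\Big\{\E_{Z\sim P}f(Z)-\E_{Z\sim Q}f(Z) : f \text{ is } 1\text{-Lipschitz w.r.t. } d_{m,s}\Big\}.
\end{equation*}
Given $\eta>0$, I would pick a $1$-Lipschitz $f_\eta$ attaining this supremum up to $\eta/L$. Compactness even lets one take an exact maximizer via Arzel\`a--Ascoli after pinning $f(z_0)=0$, but the $\eta$-version suffices. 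Any such function is defined on the support and extends to all of $\R^D$ with the same Lipschitz constant, e.g.\ by McShane's extension $\inf_{z'}\{f(z')+d_{m,s}(z,z')\}$.

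\textbf{Step 2 (realize the surrogate as a risk).} Set $g:=L f_\eta$, which is $L$-Lipschitz under $d_{m,s}$. Since the residual is zero, Assumption~\ref{ass:main_spectral_sufficiency} forces $R(\theta)=\E_{Z\sim Q^{\theta}}g(Z)$. Taking the model-induced descriptor laws to be $Q^{\theta_t}=P$ and $Q^{\theta_m}=Q$, one can define the loss on old-skill samples as $\ell(f_\theta(X),Y):=g(z_\theta(X))$. This is a legitimate instance of the assumption with $\epsilon=0$ and constant $L$. Then
\begin{equation*}
  R(\theta_t)-R(\theta_m)=L\big(\E_P f_\eta-\E_Q f_\eta\big)\ge L\W_{1,m,s}(P,Q)-\eta .
\end{equation*}
If the adversarial direction is needed with the roles swapped, replace $f_\eta$ by $-f_\eta$, which is still $1$-Lipschitz.

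\textbf{Step 3 (consequence).} Suppose some bound $R(\theta_t)-R(\theta_m)\le c\,L\W_{1,m,s}(P,Q)$ with $c<1$ held uniformly. Choosing $P\neq Q$, so that $\W_{1,m,s}>0$, and $\eta<(1-c)L\W_{1,m,s}(P,Q)$ contradicts Step~2. Hence the leading term cannot be improved by a fixed factor.

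\textbf{Main obstacle.} The main obstacle is the modeling step in Step~2: justifying that the surrogate $g$ may be chosen freely, i.e.\ that the class of admissible problems, namely losses satisfying spectral sufficiency with zero residual, contains the one whose loss is exactly $g\circ z_\theta$. I would handle this by stating explicitly that sharpness is over all instances consistent with the assumption, so the loss is defined through the descriptor. The duality step itself is standard once compact support guarantees finite moments.
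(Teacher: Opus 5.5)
Your proposal is correct and follows the paper's proof essentially step for step. Both use Kantorovich--Rubinstein duality to get an $\eta/L$-optimal $1$-Lipschitz potential, scale it by $L$ to obtain the surrogate, and then posit a zero-residual task with $Q^{\theta_t}=P$, $Q^{\theta_m}=Q$ and $R(\theta)=\E_{Z\sim Q^{\theta}}g(Z)$. The paper leaves two points implicit that you make explicit: the McShane extension (so that $g$ is $L$-Lipschitz on all of $\R^D$, as the assumption requires) and the contradiction argument for the ``consequently'' clause. Conversely, the paper notes that $g$ can be shifted to be nonnegative, which you omit but do not need.
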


\begin{proof}
By the Kantorovich--Rubinstein duality on compact metric spaces,
\begin{equation}
  \W_{1,m,s}(P,Q)
  =
  \sup_{\Lip(h)\le 1}
  \left\{
    \E_{Z\sim P}h(Z)
    -
    \E_{Z\sim Q}h(Z)
  \right\}.
  \label{eq:app_kr_duality}
\end{equation}
Therefore, for every $\eta>0$, there exists a $1$-Lipschitz function
$h_{\eta}$ such that
\begin{equation}
  \E_{Z\sim P}h_{\eta}(Z)
  -
  \E_{Z\sim Q}h_{\eta}(Z)
  \ge
  \W_{1,m,s}(P,Q)-\eta/L .
  \label{eq:app_eta_optimal_potential}
\end{equation}
Set $g=Lh_{\eta}$.  Then $g$ is $L$-Lipschitz and
\begin{equation}
\begin{aligned}
  \E_{Z\sim P}g(Z)
  -
  \E_{Z\sim Q}g(Z)
  &=
  L
  \left(
    \E_{Z\sim P}h_{\eta}(Z)
    -
    \E_{Z\sim Q}h_{\eta}(Z)
  \right)
  \\
  &\ge
  L\W_{1,m,s}(P,Q)-\eta .
\end{aligned}
\label{eq:app_scaled_potential_bound}
\end{equation}
Because the support is compact, $h_{\eta}$ is bounded.  Adding a
constant to $g$ does not change the difference of expectations, so
one may shift it to be nonnegative if a nonnegative loss surrogate
is desired.  Construct a spectrally sufficient old task with
$Q_{m,s}^{\theta_t}=P$, $Q_{m,s}^{\theta_m}=Q$, and
\begin{equation}
  R(\theta)
  =
  \E_{Z\sim Q^{\theta}}g(Z).
  \label{eq:app_constructed_sufficient_task}
\end{equation}
The residual is zero by construction, and
Eq.~\eqref{eq:app_scaled_potential_bound} gives
Eq.~\eqref{eq:app_sharpness_statement}.  Hence the Wasserstein term
in Theorem~\ref{thm:spectral_drift_forgetting} is sharp under the
stated assumptions.
\end{proof}

\subsection{Interpretation for ASR}
\label{app:spectral_forgetting_interpretation}

The theorem shows that ASR controls forgetting through three
quantities.  First, the main term
$\Delta_{m,t,s}$ measures the drift of the skill-conditioned
attention-spectrum prototype.  Its mean component penalizes shifts
in the canonical spectral focus of skill $s$, while its covariance
component penalizes changes in the scale and directional variability
of that focus.  Second, $\Gamma_{m,s}^{\theta_m}$ and
$\Gamma_{m,s}^{\theta_t}$ measure the error of the Gaussian prototype
approximation.  These terms vanish when descriptors are Gaussian in
the ASR space and remain small when the stored prototype is a good
summary of the skill-conditioned spectral distribution.  Third,
$\epsilon_{m,s}$ measures the part of old-task behavior not explained
by attention spectra.

The empirical version shows why compact memory is sufficient.  ASR
does not need old images or old teacher logits; it only needs
$(\widehat{\mu}_{m,s},\widehat{\Sigma}_{m,s})$ per skill.  The price
of using this finite memory is the perturbation term
$\xi_{m,s}$ in Eq.~\eqref{eq:app_empirical_memory_error}.  Thus, the
theory predicts that forgetting should be small when the following
three conditions hold: skill-wise spectral drift is small,
skill-wise spectral prototypes are accurately estimated, and the old
task loss is well explained by the preserved cross-attention
spectra.

\section{Full Proofs for Phase-Invariant Spectral Stability}
\label{app:phase_invariant_stability}

Let
\begin{equation}
  \mathbb{T}_{H,W}
  :=
  \mathbb{Z}_H\times\mathbb{Z}_W
  \label{eq:phase_app_torus_def}
\end{equation}
be the discrete two-dimensional torus.  We identify an attention map
$A\in\R^{H\times W}$ with a vector in $\C^{N}$, where
$N=HW$.  The unitary DFT operator $\cF:\C^N\to\C^N$ is defined by
Eq.~\eqref{eq:phase_main_dft_def}.  Since $\cF$ is unitary, Parseval's
identity gives
\begin{equation}
  \|\cF A\|_2
  =
  \|A\|_F,
  \qquad
  \langle \cF A,\cF B\rangle
  =
  \langle A,B\rangle .
  \label{eq:phase_app_parseval}
\end{equation}
For a nonzero Fourier vector $a\in\C^N$, define its coordinate power
distribution by
\begin{equation}
  q(a)_j
  :=
  \frac{|a_j|^2}{\|a\|_2^2},
  \qquad
  j\in\{1,\ldots,N\}.
  \label{eq:phase_app_q_def}
\end{equation}
Thus the Fourier power spectrum of an attention map is
\begin{equation}
  p(A)
  =
  q(\cF A).
  \label{eq:phase_app_p_as_q}
\end{equation}

The ASR descriptor considered in this analysis is
\begin{equation}
  \Psi(A)
  =
  C p(A),
  \label{eq:phase_app_psi_def}
\end{equation}
where $C$ is a fixed linear coarsening map, such as the map that
aggregates frequency bins into radial and angular spectra.  Its
$\ell_1$-to-$\ell_2$ operator norm is
\begin{equation}
  \kappa_C
  =
  \sup_{r\ne0}
  \frac{\|Cr\|_2}{\|r\|_1}.
  \label{eq:phase_app_kappa_def}
\end{equation}

\subsection{DFT Diagonalization of Cyclic Translations}
\label{app:phase_translation_diagonalization}

\begin{lemma}[Cyclic translations are Fourier phase ramps]
\label{lem:phase_translation_diagonalization}
For every $A\in\C^{H\times W}$ and every cyclic shift
$\delta=(\delta_x,\delta_y)$,
\begin{equation}
  \widehat{\cT_{\delta}A}(u,v)
  =
  \exp\!\left(
    -2\pi i
    \left(
      \frac{u\delta_x}{H}
      +
      \frac{v\delta_y}{W}
    \right)
  \right)
  \widehat A(u,v).
  \label{eq:phase_app_translation_phase}
\end{equation}
Consequently,
\begin{equation}
  p(\cT_{\delta}A)=p(A),
  \qquad
  \Psi(\cT_{\delta}A)=\Psi(A).
  \label{eq:phase_app_translation_invariance}
\end{equation}
\end{lemma}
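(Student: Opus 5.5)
The plan is a direct computation with the unitary DFT from Eq.~\eqref{eq:phase_main_dft_def}, followed by two one-line consequences.

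\emph{Step 1: the phase ramp.} Write $\widehat{\cT_{\delta}A}(u,v)=\frac{1}{\sqrt{HW}}\sum_{x,y}A((x-\delta_x)\bmod H,(y-\delta_y)\bmod W)\,e^{-2\pi i(ux/H+vy/W)}$. Substitute $x'=(x-\delta_x)\bmod H$ and $y'=(y-\delta_y)\bmod W$. As $(x,y)$ ranges over $\mathbb{Z}_H\times\mathbb{Z}_W$, the map $(x,y)\mapsto(x',y')$ is a bijection of the torus $\mathbb{T}_{H,W}$, so the sum runs over all $(x',y')$.

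The key observation is that $e^{-2\pi i u x/H}$ depends on $x$ only modulo $H$, because $u$ is an integer. Hence $e^{-2\pi i u x/H}=e^{-2\pi i u (x'+\delta_x)/H}$ even when $x'+\delta_x\ge H$, and likewise in $y$. This is the only point where cyclicity, as opposed to a plain shift, is used; it is the one step that needs care.

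Pulling out the constant factor $e^{-2\pi i(u\delta_x/H+v\delta_y/W)}$ leaves exactly $\widehat A(u,v)$, which gives Eq.~\eqref{eq:phase_app_translation_phase}. The argument works verbatim for complex $A$.

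\emph{Step 2: invariance of $p$.} The prefactor has unit modulus, so $|\widehat{\cT_{\delta}A}(u,v)|^2=|\widehat A(u,v)|^2$ for every $(u,v)$. For the denominator, $\|\cT_{\delta}A\|_F=\|A\|_F$, since $\cT_\delta$ merely permutes entries; alternatively this follows from Parseval, Eq.~\eqref{eq:phase_app_parseval}, by summing the numerators. Also $\cT_\delta A\neq0$ whenever $A\neq 0$, so $p(\cT_\delta A)$ is well defined. Therefore $p(\cT_{\delta}A)=p(A)$ entrywise.

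\emph{Step 3: invariance of $\Psi$.} Since $\Psi=C\circ p$ with $C$ a fixed deterministic map, we get $\Psi(\cT_\delta A)=Cp(\cT_\delta A)=Cp(A)=\Psi(A)$. Linearity of $C$ is not even needed here.

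I expect no real obstacle. The only subtle point is the modular reduction in Step 1, which I would state explicitly.
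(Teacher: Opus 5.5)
Your proposal is correct and follows essentially the same route as the paper: a change of variables on the torus extracts the unit-modulus phase factor, the factor cancels under the squared modulus while the Frobenius norm is preserved (the paper gets the norm equality from Parseval), and invariance of $\Psi$ then follows from $\Psi = Cp$. Your explicit remark that $e^{-2\pi i u x/H}$ depends on $x$ only modulo $H$ spells out a step that the paper's substitution uses implicitly.
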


\begin{proof}
Starting from the definition of the DFT and the cyclic translation,
we have
\begin{equation}
\begin{aligned}
  \widehat{\cT_{\delta}A}(u,v)
  &=
  \frac{1}{\sqrt{HW}}
  \sum_{x=0}^{H-1}
  \sum_{y=0}^{W-1}
  A\big((x-\delta_x)\!\!\mod H,\,
        (y-\delta_y)\!\!\mod W\big)
  \\
  &\qquad\qquad\cdot
  \exp\!\left(
    -2\pi i
    \left(
      \frac{ux}{H}
      +
      \frac{vy}{W}
    \right)
  \right).
\end{aligned}
\label{eq:phase_app_translation_proof_start}
\end{equation}
Let
$x'=(x-\delta_x)\!\!\mod H$ and
$y'=(y-\delta_y)\!\!\mod W$.  Since the map
$(x,y)\mapsto(x',y')$ is a bijection on $\mathbb{T}_{H,W}$, we get
\begin{equation}
\resizebox{\linewidth}{!}{$
\begin{aligned}
  \widehat{\cT_{\delta}A}(u,v)
  &=
  \frac{1}{\sqrt{HW}}
  \sum_{x'=0}^{H-1}
  \sum_{y'=0}^{W-1}
  A(x',y')
  \\
  &\qquad\qquad\cdot
  \exp\!\left(
    -2\pi i
    \left(
      \frac{u(x'+\delta_x)}{H}
      +
      \frac{v(y'+\delta_y)}{W}
    \right)
  \right)
  \\
  &=
  \exp\!\left(
    -2\pi i
    \left(
      \frac{u\delta_x}{H}
      +
      \frac{v\delta_y}{W}
    \right)
  \right)
  \\
  &\qquad\qquad\cdot
  \frac{1}{\sqrt{HW}}
  \sum_{x'=0}^{H-1}
  \sum_{y'=0}^{W-1}
  A(x',y')
  \exp\!\left(
    -2\pi i
    \left(
      \frac{ux'}{H}
      +
      \frac{vy'}{W}
    \right)
  \right)
  \\
  &=
  \exp\!\left(
    -2\pi i
    \left(
      \frac{u\delta_x}{H}
      +
      \frac{v\delta_y}{W}
    \right)
  \right)
  \widehat A(u,v).
\end{aligned}
$}
\label{eq:phase_app_translation_change_variables}
\end{equation}
Taking squared magnitudes cancels the phase factor:
\begin{equation}
\begin{aligned}
  |\widehat{\cT_{\delta}A}(u,v)|^2
  &=
  \left|
    \exp\!\left(
      -2\pi i
      \left(
        \frac{u\delta_x}{H}
        +
        \frac{v\delta_y}{W}
      \right)
    \right)
  \right|^2
  |\widehat A(u,v)|^2
  \\
  &=
  |\widehat A(u,v)|^2.
\end{aligned}
\label{eq:phase_app_phase_cancel}
\end{equation}
By Parseval's identity,
\begin{equation}
  \|\cT_{\delta}A\|_F
  =
  \|\cF\cT_{\delta}A\|_2
  =
  \|\cF A\|_2
  =
  \|A\|_F.
  \label{eq:phase_app_translation_norm}
\end{equation}
Combining Eq.~\eqref{eq:phase_app_phase_cancel} and
Eq.~\eqref{eq:phase_app_translation_norm} gives
$p(\cT_{\delta}A)=p(A)$.  Since $\Psi(A)=Cp(A)$, we also obtain
$\Psi(\cT_{\delta}A)=\Psi(A)$.
\end{proof}

\subsection{A Sharp Projective Stability Inequality}
\label{app:phase_projective_stability}

The next lemma is the key stability estimate.  It shows that the
normalized power map is Lipschitz with the optimal constant $2$ under
relative perturbations.  The proof uses the geometry of rank-one
projectors.

\begin{lemma}[Coordinate power is controlled by projective distance]
\label{lem:phase_projective_tv}
Let $a,b\in\C^N$ be nonzero vectors and define
$u=a/\|a\|_2$ and $v=b/\|b\|_2$.  Then
\begin{equation}
  \|q(a)-q(b)\|_1
  \le
  2
  \sqrt{
    1-
    |\langle u,v\rangle|^2
  }.
  \label{eq:phase_app_projective_tv_statement}
\end{equation}
\end{lemma}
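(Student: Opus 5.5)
We will pass to rank-one projectors and use the fact that measuring in a fixed basis cannot increase trace distance. Let $P_u=uu^*$ and $P_v=vv^*$ be the orthogonal projectors onto the unit vectors $u$ and $v$. Since $q(a)=q(u)$ and $q(b)=q(v)$, the coordinates are diagonal entries: $q(a)_j=|u_j|^2=(P_u)_{jj}$ and $q(b)_j=(P_v)_{jj}$. So
\[
\|q(a)-q(b)\|_1=\sum_{j}|e_j^*(P_u-P_v)e_j|.
\]

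The first step is to bound this by the trace norm of $X:=P_u-P_v$. The matrix $X$ is Hermitian, so write $X=X_+-X_-$ with $X_\pm\succeq0$ and $\Tr X_++\Tr X_-=\|X\|_*$. Then for each $j$,
\[
|e_j^*Xe_j|\le e_j^*X_+e_j+e_j^*X_-e_j,
\]
and summing over $j$ gives $\sum_j|X_{jj}|\le\|X\|_*$. This is the standard contractivity of the diagonal (dephasing) map in trace norm.

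The second step, which carries the geometry, is to compute $\|P_u-P_v\|_*$ exactly. The matrix $X$ has rank at most $2$ and is supported on $\mathrm{span}\{u,v\}$, with $\Tr X=0$. Hence its nonzero eigenvalues are $\pm\lambda$, and
\[
2\lambda^2=\Tr X^2=\Tr P_u+\Tr P_v-2\Tr(P_uP_v)=2-2|\langle u,v\rangle|^2 .
\]
So $\lambda=\sqrt{1-|\langle u,v\rangle|^2}$ and $\|X\|_*=2\lambda$. If $u$ and $v$ are collinear, then $X=0$ and the bound holds trivially. Combining the two steps yields Eq.~\eqref{eq:phase_app_projective_tv_statement}.

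We expect no serious obstacle. The one point that needs care is the spectral claim: tracelessness plus rank at most two forces the eigenvalues to be $\pm\lambda$. This should be stated explicitly, with the computation of $\Tr X^2$ using $\Tr(P_uP_v)=|\langle u,v\rangle|^2$. Alternatively, one can argue directly without spectral theory. Writing $c=\langle u,v\rangle$, we have
\[
\sum_j\bigl||u_j|^2-|v_j|^2\bigr|=\sum_j|u_j-\bar c' v_j|\,|u_j+\bar c' v_j|
\]
after aligning phases, and Cauchy--Schwarz then gives $\|u-e^{i\varphi}v\|_2\,\|u+e^{i\varphi}v\|_2=\sqrt{(2-2|c|)(2+2|c|)}=2\sqrt{1-|c|^2}$, with $\varphi$ chosen so that $e^{i\varphi}\langle u,v\rangle=|c|$ is real. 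This elementary route is probably the cleaner one to write out, using the projector argument as the conceptual explanation.
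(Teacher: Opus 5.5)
Your proof is correct and follows essentially the paper's route. The paper also reduces to the nuclear norm $\|P_u-P_v\|_*$, using duality against diagonal sign matrices $D_\sigma$ where you use the Jordan decomposition $X=X_+-X_-$, and then computes $\|P_u-P_v\|_*=2\sqrt{1-|\langle u,v\rangle|^2}$ from the trace and determinant of the $2\times 2$ block on $\mathrm{span}\{u,v\}$, which is equivalent to your $\Tr X^2$ computation. One fix for your optional elementary route: with $\tilde v:=e^{i\varphi}v$, the identity $\bigl||u_j|^2-|\tilde v_j|^2\bigr|=|u_j-\tilde v_j|\,|u_j+\tilde v_j|$ fails for complex entries (take $u_j=1$, $\tilde v_j=i$), but the inequality $\le$ holds because $|u_j|^2-|\tilde v_j|^2=\mathrm{Re}\bigl[(u_j-\tilde v_j)\overline{(u_j+\tilde v_j)}\bigr]$, and the inequality is all Cauchy--Schwarz needs.
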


\begin{proof}
Let $P_u=uu^*$ and $P_v=vv^*$ be the rank-one projectors associated
with $u$ and $v$.  For any sign vector
$\sigma\in[-1,1]^N$, let $D_{\sigma}$ be the diagonal matrix with
diagonal entries $\sigma_1,\ldots,\sigma_N$.  Since
$\|D_{\sigma}\|_{\mathrm{op}}\le1$, we have
\begin{equation}
\begin{aligned}
  \|q(a)-q(b)\|_1
  &=
  \sup_{\sigma\in[-1,1]^N}
  \sum_{j=1}^N
  \sigma_j
  \left(
    |u_j|^2-|v_j|^2
  \right)
  \\
  &=
  \sup_{\sigma\in[-1,1]^N}
  \Tr
  \left[
    D_{\sigma}
    (P_u-P_v)
  \right]
  \\
  &\le
  \sup_{\|M\|_{\mathrm{op}}\le1}
  \left|
    \Tr
    \left[
      M(P_u-P_v)
    \right]
  \right|
  \\
  &=
  \|P_u-P_v\|_* .
\end{aligned}
\label{eq:phase_app_tv_to_trace_norm}
\end{equation}
Here $\|\cdot\|_*$ is the nuclear norm.  It remains to compute
$\|P_u-P_v\|_*$.  Multiplying $v$ by a global phase does not change
$P_v$, so assume without loss of generality that
$\langle u,v\rangle=c\in[0,1]$.  If $c=1$, the claim is immediate.
Otherwise define
\begin{equation}
  w
  :=
  \frac{
    v-cu
  }{
    \sqrt{1-c^2}
  },
  \qquad
  v
  =
  cu+\sqrt{1-c^2}\,w,
  \label{eq:phase_app_two_dim_basis}
\end{equation}
where $\{u,w\}$ is an orthonormal basis for
$\mathrm{span}\{u,v\}$.  In this basis, $P_u-P_v$ has matrix
representation
\begin{equation}
  P_u-P_v
  =
  \begin{pmatrix}
    1-c^2 & -c\sqrt{1-c^2} \\
    -c\sqrt{1-c^2} & -(1-c^2)
  \end{pmatrix}.
  \label{eq:phase_app_projector_matrix}
\end{equation}
The trace and determinant of this $2\times2$ matrix are
\begin{equation}
  \Tr(P_u-P_v)=0,
  \qquad
  \det(P_u-P_v)=-(1-c^2).
  \label{eq:phase_app_projector_trace_det}
\end{equation}
Therefore its two nonzero eigenvalues are
$\sqrt{1-c^2}$ and $-\sqrt{1-c^2}$, and hence
\begin{equation}
  \|P_u-P_v\|_*
  =
  2\sqrt{1-c^2}
  =
  2
  \sqrt{
    1-
    |\langle u,v\rangle|^2
  }.
  \label{eq:phase_app_trace_norm_projector}
\end{equation}
Substituting Eq.~\eqref{eq:phase_app_trace_norm_projector} into
Eq.~\eqref{eq:phase_app_tv_to_trace_norm} proves the lemma.
\end{proof}

\begin{lemma}[Sharp perturbation stability of normalized power]
\label{lem:phase_sharp_power_stability}
Let $a,e\in\C^N$ with $a\ne0$, and let $b=a+e$.  If
\begin{equation}
  \rho
  :=
  \frac{\|e\|_2}{\|a\|_2}
  \le1,
  \label{eq:phase_app_relative_noise_def}
\end{equation}
then
\begin{equation}
  \|q(b)-q(a)\|_1
  \le
  2\rho .
  \label{eq:phase_app_sharp_power_stability}
\end{equation}
The constant $2$ is sharp.
\end{lemma}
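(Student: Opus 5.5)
The plan is to reduce the statement to Lemma~\ref{lem:phase_projective_tv}. Set $u=a/\|a\|_2$ and $v=b/\|b\|_2$. If $b\neq 0$, that lemma gives
\[
\|q(b)-q(a)\|_1\le 2\sqrt{1-|\langle u,v\rangle|^2},
\]
so it suffices to prove the geometric inequality $\sqrt{1-|\langle u,v\rangle|^2}\le \|e\|_2/\|a\|_2=\rho$. The degenerate case $b=0$ can only occur when $e=-a$, i.e.\ $\rho=1$. There $q(b)$ is undefined, so I will treat it as excluded or handled by convention; any meaningful convention still gives $\|q(b)-q(a)\|_1\le 2=2\rho$, since both are probability vectors.

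\textbf{Main step.} The quantity $\sqrt{1-|\langle u,v\rangle|^2}$ is the distance from the unit vector $u$ to the complex line $\mathrm{span}_{\C}\{v\}=\mathrm{span}_{\C}\{b\}$. This follows from the orthogonal decomposition $u=\langle v,u\rangle v+(u-\langle v,u\rangle v)$ and Pythagoras. Scaling by $\|a\|_2$ gives
\[
\|a\|_2\sqrt{1-|\langle u,v\rangle|^2}=\operatorname{dist}(a,\mathrm{span}_{\C}\{b\}).
\]
Since $b$ itself lies in that line, this distance is at most $\|a-b\|_2=\|e\|_2$. Dividing by $\|a\|_2$ yields the claim. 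I expect this step to be routine, and it does not even need $\rho\le 1$ beyond guaranteeing $b\neq0$ when $\rho<1$.

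\textbf{Sharpness.} This is the only delicate point, because naive examples give only $2\rho^2$. For instance, with $a=(1,0)$ the coordinate masses change quadratically in the angle. The fix is to place $a$ on the diagonal so that the power distribution varies linearly in the angle.

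Take $N=2$ and $a=(\cos\tfrac{\pi}{4},\sin\tfrac{\pi}{4})$. Choose $b=\cos\varphi\,(\cos(\tfrac{\pi}{4}+\varphi),\sin(\tfrac{\pi}{4}+\varphi))$, which is the orthogonal projection of $a$ onto the line at angle $\tfrac{\pi}{4}+\varphi$. Then $e=b-a\perp b$ and $\rho=\|e\|_2=\sin\varphi$.

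Using $\cos^2x=(1+\cos 2x)/2$, we get $q(a)=(\tfrac12,\tfrac12)$ and
\[
q(b)=\bigl(\cos^2(\tfrac{\pi}{4}+\varphi),\sin^2(\tfrac{\pi}{4}+\varphi)\bigr).
\]
Hence
\[
\|q(b)-q(a)\|_1=\sin 2\varphi=2\rho\sqrt{1-\rho^2}.
\]
The ratio $\|q(b)-q(a)\|_1/\rho$ therefore tends to $2$ as $\rho\to0$, so no constant smaller than $2$ can work. This example also shows that both inequalities in the chain are asymptotically tight.
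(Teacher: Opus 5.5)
Your proposal is correct and follows the paper's proof. The paper also bounds $\sqrt{1-|\langle u,v\rangle|^2}$ by $\rho$ as the distance from $u$ to the line through $b$, invokes Lemma~\ref{lem:phase_projective_tv}, and proves sharpness with the same two-dimensional rotate-and-project construction. The one difference is that the paper centers the rotation on the diagonal ($a$ at angle $\pi/4-\varphi/2$, $b$ at $\pi/4+\varphi/2$). This gives exact equality $\|q(b)-q(a)\|_1=2\rho$ for every $\rho<1$, whereas your example gives $2\rho\sqrt{1-\rho^2}$ and is sharp only as $\rho\to 0$; both show the constant $2$ cannot be lowered, and your explicit handling of the degenerate case $b=0$ at $\rho=1$ is a point the paper skips.
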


\begin{proof}
Let $u=a/\|a\|_2$ and $v=b/\|b\|_2$.  Since $b$ lies in the one
dimensional subspace spanned by $v$, the distance from $u$ to that
subspace is no larger than the distance from $u$ to $b/\|a\|_2$:
\begin{equation}
\begin{aligned}
  \inf_{\alpha\in\C}
  \|u-\alpha v\|_2
  &\le
  \left\|
    u-\frac{b}{\|a\|_2}
  \right\|_2
  \\
  &=
  \left\|
    \frac{a}{\|a\|_2}
    -
    \frac{a+e}{\|a\|_2}
  \right\|_2
  \\
  &=
  \frac{\|e\|_2}{\|a\|_2}
  =
  \rho .
\end{aligned}
\label{eq:phase_app_subspace_distance_upper}
\end{equation}
On the other hand, the exact distance from a unit vector $u$ to the
span of a unit vector $v$ is
\begin{equation}
\begin{aligned}
  \inf_{\alpha\in\C}
  \|u-\alpha v\|_2^2
  &=
  \inf_{\alpha\in\C}
  \left(
    \|u\|_2^2
    -
    2\mathrm{Re}
    \left[
      \overline{\alpha}
      \langle v,u\rangle
    \right]
    +
    |\alpha|^2\|v\|_2^2
  \right)
  \\
  &=
  1
  -
  |\langle u,v\rangle|^2 .
\end{aligned}
\label{eq:phase_app_subspace_distance_exact}
\end{equation}
Combining Eq.~\eqref{eq:phase_app_subspace_distance_upper} and
Eq.~\eqref{eq:phase_app_subspace_distance_exact} yields
\begin{equation}
  \sqrt{
    1-
    |\langle u,v\rangle|^2
  }
  \le
  \rho .
  \label{eq:phase_app_angle_bound}
\end{equation}
Applying Lemma~\ref{lem:phase_projective_tv} gives
\begin{equation}
\begin{aligned}
  \|q(b)-q(a)\|_1
  &\le
  2
  \sqrt{
    1-
    |\langle u,v\rangle|^2
  }
  \\
  &\le
  2\rho .
\end{aligned}
\label{eq:phase_app_sharp_power_chain}
\end{equation}

It remains to prove sharpness.  It suffices to construct a
two-dimensional example.  Fix any $\rho\in[0,1]$ and set
$\varphi=\arcsin(\rho)$ and
$\alpha_0=\pi/4-\varphi/2$.  Define
\begin{equation}
  u
  :=
  \begin{pmatrix}
    \cos\alpha_0 \\
    \sin\alpha_0
  \end{pmatrix},
  \qquad
  v
  :=
  \begin{pmatrix}
    \cos(\alpha_0+\varphi) \\
    \sin(\alpha_0+\varphi)
  \end{pmatrix}.
  \label{eq:phase_app_sharp_vectors}
\end{equation}
Let $a=u$, $b=\cos\varphi\, v$, and $e=b-a$.  Then
\begin{equation}
\begin{aligned}
  \|e\|_2^2
  &=
  \|\cos\varphi\,v-u\|_2^2
  \\
  &=
  \cos^2\varphi
  +
  1
  -
  2\cos\varphi\,\langle u,v\rangle
  \\
  &=
  \cos^2\varphi
  +
  1
  -
  2\cos^2\varphi
  \\
  &=
  \sin^2\varphi
  =
  \rho^2 .
\end{aligned}
\label{eq:phase_app_sharp_noise_norm}
\end{equation}
Since normalizing $b$ gives $v$, the corresponding coordinate power
distributions satisfy
\begin{equation}
\begin{aligned}
  \|q(b)-q(a)\|_1
  &=
  |\cos^2(\alpha_0+\varphi)-\cos^2\alpha_0|
  \\
  &\quad+
  |\sin^2(\alpha_0+\varphi)-\sin^2\alpha_0|
  \\
  &=
  2
  |\cos^2(\alpha_0+\varphi)-\cos^2\alpha_0|
  \\
  &=
  2
  |\sin(2\alpha_0+\varphi)\sin\varphi|
  \\
  &=
  2\rho .
\end{aligned}
\label{eq:phase_app_sharp_l1}
\end{equation}
Thus the factor $2$ in Eq.~\eqref{eq:phase_app_sharp_power_stability}
cannot be improved.
\end{proof}

\subsection{Coarsening, Transport, and Risk Stability}
\label{app:phase_coarsening_transport}

\begin{lemma}[ASR coarsening stability]
\label{lem:phase_coarsening_stability}
For any two nonzero attention maps $A$ and $B$,
\begin{equation}
  \|\Psi(A)-\Psi(B)\|_2
  \le
  \kappa_C
  \|p(A)-p(B)\|_1 .
  \label{eq:phase_app_coarsening_statement}
\end{equation}
\end{lemma}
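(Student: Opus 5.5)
The plan is to use linearity of the coarsening map $C$ and then the definition of $\kappa_C$. By Eq.~\eqref{eq:phase_app_psi_def}, $\Psi(A)=Cp(A)$ and $\Psi(B)=Cp(B)$, where $C$ is a fixed linear map. Since $A$ and $B$ are nonzero, both $p(A)$ and $p(B)$ are well defined, and linearity gives
\begin{equation*}
  \Psi(A)-\Psi(B) = C\bigl(p(A)-p(B)\bigr).
\end{equation*}

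Set $r:=p(A)-p(B)$ and split into two cases. If $r=0$, both sides of Eq.~\eqref{eq:phase_app_coarsening_statement} vanish and there is nothing to prove. If $r\neq 0$, then $r$ is an admissible element in the supremum of Eq.~\eqref{eq:phase_app_kappa_def}. Hence $\|Cr\|_2/\|r\|_1\le\kappa_C$, and multiplying through by $\|r\|_1$ gives $\|Cr\|_2\le\kappa_C\|r\|_1$, which is exactly the claim.

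There is no real obstacle here. The only point to check is that $\kappa_C$ is finite, so the bound is not vacuous. This holds because $C$ is linear on a finite-dimensional space: for $r=\sum_j r_j e_j$ we have $\|Cr\|_2\le\sum_j|r_j|\,\|Ce_j\|_2\le(\max_j\|Ce_j\|_2)\|r\|_1$. Thus $\kappa_C$ is at most the largest column norm of $C$; in fact it equals that value, since the supremum is attained at a standard basis vector.

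I would also note how the lemma will be used. Combining it with Lemma~\ref{lem:phase_sharp_power_stability}, via $p(A)=q(\cF A)$ and Parseval, gives the descriptor bound $2\kappa_C\rho$ in Eq.~\eqref{eq:phase_main_pointwise_stability}.
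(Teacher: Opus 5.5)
Your proposal is correct and follows the paper's proof exactly: write $\Psi(A)-\Psi(B)=C\bigl(p(A)-p(B)\bigr)$ by linearity of $C$, then apply the definition of $\kappa_C$. Your separate treatment of $r=0$ and your check that $\kappa_C$ is finite (it equals the largest column $\ell_2$-norm of $C$) are correct additions that the paper leaves implicit.
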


\begin{proof}
By the definition of $\Psi$ and $\kappa_C$,
\begin{equation}
\begin{aligned}
  \|\Psi(A)-\Psi(B)\|_2
  &=
  \|Cp(A)-Cp(B)\|_2
  \\
  &=
  \|C(p(A)-p(B))\|_2
  \\
  &\le
  \kappa_C
  \|p(A)-p(B)\|_1 .
\end{aligned}
\label{eq:phase_app_coarsening_proof}
\end{equation}
\end{proof}

\begin{lemma}[Pointwise perturbed-translation stability]
\label{lem:phase_pointwise_perturbed_translation}
Let $A\ne0$ and
\begin{equation}
  B
  =
  \cT_{\delta}A+E,
  \qquad
  \rho
  =
  \frac{\|E\|_F}{\|A\|_F}
  \le1 .
  \label{eq:phase_app_pointwise_model}
\end{equation}
Then
\begin{equation}
  \|p(B)-p(A)\|_1
  \le
  2\rho,
  \qquad
  \|\Psi(B)-\Psi(A)\|_2
  \le
  2\kappa_C\rho .
  \label{eq:phase_app_pointwise_result}
\end{equation}
\end{lemma}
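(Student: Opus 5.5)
The plan is to reduce the claim to Lemma~\ref{lem:phase_sharp_power_stability} applied in the Fourier domain, after absorbing the translation by Lemma~\ref{lem:phase_translation_diagonalization}. Set $a:=\cF(\cT_{\delta}A)$ and $e:=\cF E$, so that by linearity of $\cF$ we have $\cF B=a+e$. By Parseval's identity in Eq.~\eqref{eq:phase_app_parseval} and the norm preservation of cyclic shifts in Eq.~\eqref{eq:phase_app_translation_norm},
\[
\|a\|_2=\|\cT_{\delta}A\|_F=\|A\|_F\neq 0,
\qquad
\|e\|_2=\|E\|_F .
\]
Hence the relative noise level $\|e\|_2/\|a\|_2$ equals exactly the $\rho$ of Eq.~\eqref{eq:phase_app_pointwise_model}, and in particular it is at most $1$. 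One degenerate case must be excluded: Lemma~\ref{lem:phase_sharp_power_stability} implicitly needs $b\neq0$ so that $q(b)$ is defined. This can only fail if $E=-\cT_\delta A$, i.e.\ $\rho=1$. In that case $B=0$, $p(B)$ is undefined, and the statement is read as presupposing $B\neq 0$. I expect this boundary bookkeeping to be the only real subtlety.

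Next, apply Lemma~\ref{lem:phase_sharp_power_stability} to $a$ and $b=a+e$, using $p(\cdot)=q(\cF\,\cdot)$ from Eq.~\eqref{eq:phase_app_p_as_q}. This gives
\[
\|p(B)-q(a)\|_1=\|q(b)-q(a)\|_1\le 2\rho .
\]
By Lemma~\ref{lem:phase_translation_diagonalization}, $q(a)=p(\cT_{\delta}A)=p(A)$. Substituting yields the first inequality, $\|p(B)-p(A)\|_1\le2\rho$.

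Finally, both $A$ and $B$ are nonzero, so Lemma~\ref{lem:phase_coarsening_stability} applies. It gives
\[
\|\Psi(B)-\Psi(A)\|_2\le\kappa_C\|p(B)-p(A)\|_1\le 2\kappa_C\rho,
\]
which is the second inequality.
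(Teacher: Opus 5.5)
Your proposal is correct and matches the paper's proof step for step: set $a=\cF(\cT_{\delta}A)$ and $e=\cF E$, use Parseval to identify $\|e\|_2/\|a\|_2$ with $\rho$, apply the sharp power-stability lemma, remove the shift with the phase-ramp lemma, and finish with the coarsening bound. Your remark that $B=0$ (possible only when $E=-\cT_{\delta}A$, which forces $\rho=1$) leaves $p(B)$ undefined is a real gap that the paper passes over silently, and reading the lemma as presupposing $B\neq 0$ is the right repair.
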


\begin{proof}
Let
\begin{equation}
  a
  :=
  \cF(\cT_{\delta}A),
  \qquad
  e
  :=
  \cF E,
  \qquad
  b
  :=
  \cF B .
  \label{eq:phase_app_frequency_vectors}
\end{equation}
By linearity of the DFT and the model $B=\cT_{\delta}A+E$,
\begin{equation}
  b
  =
  a+e .
  \label{eq:phase_app_frequency_additive}
\end{equation}
Parseval's identity and Lemma~\ref{lem:phase_translation_diagonalization}
give
\begin{equation}
  \|a\|_2
  =
  \|\cT_{\delta}A\|_F
  =
  \|A\|_F,
  \qquad
  \|e\|_2
  =
  \|E\|_F .
  \label{eq:phase_app_frequency_norms}
\end{equation}
Therefore
\begin{equation}
  \frac{\|e\|_2}{\|a\|_2}
  =
  \frac{\|E\|_F}{\|A\|_F}
  =
  \rho .
  \label{eq:phase_app_frequency_relative_noise}
\end{equation}
Applying Lemma~\ref{lem:phase_sharp_power_stability} yields
\begin{equation}
  \|q(b)-q(a)\|_1
  \le
  2\rho .
  \label{eq:phase_app_qb_qa_bound}
\end{equation}
Since $q(b)=p(B)$ and $q(a)=p(\cT_{\delta}A)$, while
$p(\cT_{\delta}A)=p(A)$ by Lemma~\ref{lem:phase_translation_diagonalization},
we obtain
\begin{equation}
\begin{aligned}
  \|p(B)-p(A)\|_1
  &=
  \|q(b)-q(\cF A)\|_1
  \\
  &=
  \|q(b)-q(a)\|_1
  \\
  &\le
  2\rho .
\end{aligned}
\label{eq:phase_app_pointwise_power_final}
\end{equation}
The descriptor bound follows from Lemma~\ref{lem:phase_coarsening_stability}:
\begin{equation}
\begin{aligned}
  \|\Psi(B)-\Psi(A)\|_2
  &\le
  \kappa_C
  \|p(B)-p(A)\|_1
  \\
  &\le
  2\kappa_C\rho .
\end{aligned}
\label{eq:phase_app_pointwise_descriptor_final}
\end{equation}
\end{proof}

\begin{lemma}[Distributional transport stability]
\label{lem:phase_distributional_transport}
Fix a skill $s$.  Suppose there exists a coupling of source and
target attention maps such that
\begin{equation}
  A^{\mathrm{tar}}
  =
  \cT_{\Delta}A^{\mathrm{src}}+E,
  \qquad
  \rho
  =
  \frac{\|E\|_F}{\|A^{\mathrm{src}}\|_F}
  \le1
  \quad
  \mathrm{a.s.}
  \label{eq:phase_app_distribution_coupling}
\end{equation}
Then
\begin{equation}
\begin{aligned}
  &\W_1
  \left(
    \law(\Psi(A^{\mathrm{tar}})\mid s),
    \law(\Psi(A^{\mathrm{src}})\mid s)
  \right)
  \\
  &\qquad\le
  2\kappa_C
  \E
  \left[
    \rho
    \mid s
  \right],
\end{aligned}
\label{eq:phase_app_distribution_transport_statement}
\end{equation}
where $\W_1$ uses the Euclidean distance on the descriptor space.
\end{lemma}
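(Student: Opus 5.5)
The natural route is to use the given source--target coupling as a feasible transport plan. Conditional on the skill $s$, the coupling supplies a joint law of $(A^{\mathrm{src}},A^{\mathrm{tar}})$ with $A^{\mathrm{tar}}=\cT_{\Delta}A^{\mathrm{src}}+E$. Pushing this joint law forward through the measurable map $(A,B)\mapsto(\Psi(B),\Psi(A))$ produces a coupling $\gamma_s$ of $\law(\Psi(A^{\mathrm{tar}})\mid s)$ and $\law(\Psi(A^{\mathrm{src}})\mid s)$. Measurability holds because $\Psi=C\circ q\circ\cF$ is continuous on nonzero maps. We also need $A^{\mathrm{tar}}\neq0$ almost surely, so that $\Psi(A^{\mathrm{tar}})$ is defined. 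This follows from $\rho\le1$ only when $\rho<1$. For the boundary case $\rho=1$, I would either note that $A^{\mathrm{tar}}$ is assumed to be a valid nonzero attention map, as in the theorem, or restrict attention to the event where it is nonzero.

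Since $\gamma_s$ is admissible in the infimum defining $\W_1$, we get
\begin{equation*}
\W_1\big(\law(\Psi(A^{\mathrm{tar}})\mid s),\law(\Psi(A^{\mathrm{src}})\mid s)\big)\le \E\big[\|\Psi(A^{\mathrm{tar}})-\Psi(A^{\mathrm{src}})\|_2\mid s\big].
\end{equation*}
Next, apply Lemma~\ref{lem:phase_pointwise_perturbed_translation} pointwise, with $A=A^{\mathrm{src}}$, $\delta=\Delta(\omega)$ and $E=E(\omega)$. This is legitimate because $\rho\le1$ almost surely, and it gives $\|\Psi(A^{\mathrm{tar}})-\Psi(A^{\mathrm{src}})\|_2\le2\kappa_C\rho$ almost surely. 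The fact that $\Delta$ is random and sample dependent causes no difficulty: the lemma holds for every fixed shift, so the bound holds realization by realization. Taking conditional expectations then yields the claim $2\kappa_C\E[\rho\mid s]$.

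The steps are routine, so the only point needing care is integrability. The descriptor $\Psi$ takes values in $C(\Delta^{N-1})$, which is a bounded set, so both laws have finite first moments and $\W_1$ is finite. If $\E[\rho\mid s]=\infty$ the bound is trivial anyway, and in any case $\rho\le1$ makes it finite.
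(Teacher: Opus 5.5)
Your proposal is correct and follows the paper's proof essentially verbatim. Like the paper, you push the given source--target coupling through $\Psi$ to obtain a feasible transport plan. You then apply Lemma~\ref{lem:phase_pointwise_perturbed_translation} realization by realization to get $\|\Psi(A^{\mathrm{tar}})-\Psi(A^{\mathrm{src}})\|_2\le 2\kappa_C\rho$, and take conditional expectations to finish.

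Your extra remarks on measurability, finiteness of $\W_1$, and the boundary case $\rho=1$ are valid refinements that the paper's proof does not address. At $\rho=1$, $A^{\mathrm{tar}}$ can vanish and leave $\Psi(A^{\mathrm{tar}})$ undefined.
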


\begin{proof}
Let
\begin{equation}
  Z^{\mathrm{src}}
  :=
  \Psi(A^{\mathrm{src}}),
  \qquad
  Z^{\mathrm{tar}}
  :=
  \Psi(A^{\mathrm{tar}}).
  \label{eq:phase_app_z_src_tar_def}
\end{equation}
The joint law of $(Z^{\mathrm{src}},Z^{\mathrm{tar}})$ induced by the
assumed coupling is a valid transport plan between the two descriptor
laws.  Hence
\begin{equation}
\begin{aligned}
  &\W_1
  \left(
    \law(Z^{\mathrm{tar}}\mid s),
    \law(Z^{\mathrm{src}}\mid s)
  \right)
  \\
  &\le
  \E
  \left[
    \|Z^{\mathrm{tar}}-Z^{\mathrm{src}}\|_2
    \mid s
  \right]
  \\
  &=
  \E
  \left[
    \|\Psi(A^{\mathrm{tar}})
      -
      \Psi(A^{\mathrm{src}})\|_2
    \mid s
  \right]
  \\
  &\le
  \E
  \left[
    2\kappa_C\rho
    \mid s
  \right]
  \\
  &=
  2\kappa_C
  \E
  \left[
    \rho
    \mid s
  \right].
\end{aligned}
\label{eq:phase_app_distribution_transport_proof}
\end{equation}
The third line uses Lemma~\ref{lem:phase_pointwise_perturbed_translation}.
\end{proof}

\begin{lemma}[Spectral risk stability]
\label{lem:phase_risk_stability}
Under Assumption~\ref{ass:phase_main_lipschitz_risk} and the coupling
condition in Eq.~\eqref{eq:phase_app_distribution_coupling},
\begin{equation}
\begin{aligned}
  &\left|
    \E
    \left[
      h_s(\Psi(A^{\mathrm{tar}}))
      \mid s
    \right]
    -
    \E
    \left[
      h_s(\Psi(A^{\mathrm{src}}))
      \mid s
    \right]
  \right|
  \\
  &\qquad\le
  2L_s\kappa_C
  \E
  \left[
    \rho
    \mid s
  \right].
\end{aligned}
\label{eq:phase_app_risk_stability_statement}
\end{equation}
\end{lemma}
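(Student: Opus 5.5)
The plan is to reduce the risk gap to an expectation over the coupling assumed in Eq.~\eqref{eq:phase_app_distribution_coupling}, and then apply the pointwise descriptor bound. Work on the joint probability space carrying the coupled pair $(A^{\mathrm{src}},A^{\mathrm{tar}})$ conditioned on skill $s$. Set $Z^{\mathrm{src}}=\Psi(A^{\mathrm{src}})$ and $Z^{\mathrm{tar}}=\Psi(A^{\mathrm{tar}})$, exactly as in the proof of Lemma~\ref{lem:phase_distributional_transport}. Both are defined on this common space and have the correct conditional marginals. Hence the two conditional expectations in the statement can be written as a single expectation of the difference:
\[
\E[h_s(Z^{\mathrm{tar}})\mid s]-\E[h_s(Z^{\mathrm{src}})\mid s]=\E\big[h_s(Z^{\mathrm{tar}})-h_s(Z^{\mathrm{src}})\mid s\big].
\]

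Next, bring the absolute value inside by Jensen's inequality. Then apply Assumption~\ref{ass:phase_main_lipschitz_risk}; this is legitimate because both $Z^{\mathrm{tar}}$ and $Z^{\mathrm{src}}$ lie in $\mathrm{range}(\Psi)$, the set on which the Lipschitz bound is assumed. It gives
\[
\big|h_s(Z^{\mathrm{tar}})-h_s(Z^{\mathrm{src}})\big|\le L_s\|Z^{\mathrm{tar}}-Z^{\mathrm{src}}\|_2 \quad\text{almost surely}.
\]
Since $\rho\le1$ almost surely, Lemma~\ref{lem:phase_pointwise_perturbed_translation} applies pointwise with $A=A^{\mathrm{src}}$, $\delta=\Delta$ and $B=A^{\mathrm{tar}}$, and yields $\|Z^{\mathrm{tar}}-Z^{\mathrm{src}}\|_2\le 2\kappa_C\rho$. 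Taking the conditional expectation of this chain gives $2L_s\kappa_C\,\E[\rho\mid s]$, which is the claimed bound.

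An alternative route is to combine the Kantorovich--Rubinstein-type inequality of Lemma~\ref{lem:lipschitz_wasserstein}, with the Euclidean metric in place of $d_{m,s}$, with Lemma~\ref{lem:phase_distributional_transport}. However, Lemma~\ref{lem:lipschitz_wasserstein} requires $h_s$ to be Lipschitz on all of $\R^D$, whereas $h_s$ is only Lipschitz on $\mathrm{range}(\Psi)$. This is why I prefer the direct coupling argument: it never leaves $\mathrm{range}(\Psi)$, so no extension of $h_s$ is needed.

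The only real obstacle is measure-theoretic. We must check that $h_s\circ\Psi$ is integrable under both conditional laws, so that splitting the expectation of the difference is valid. Because $p(A)$ lies in the simplex, $\Psi$ has bounded range ($\|\Psi(A)\|_2\le\kappa_C$). A Lipschitz function on a bounded set is bounded, so integrability holds once $h_s$ is measurable, which I will assume as part of the surrogate's definition.
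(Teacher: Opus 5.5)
Your proposal is correct and is essentially the paper's proof. The paper likewise uses the source--target coupling of Lemma~\ref{lem:phase_distributional_transport} to write the gap as one conditional expectation of $h_s(\Psi(A^{\mathrm{tar}}))-h_s(\Psi(A^{\mathrm{src}}))$, moves the absolute value inside, applies Assumption~\ref{ass:phase_main_lipschitz_risk}, and finishes with the pointwise bound of Lemma~\ref{lem:phase_pointwise_perturbed_translation}. Your two extra remarks are sound and are left implicit in the paper: the integrability check via $\|\Psi(A)\|_2\le\kappa_C$, and the observation that the direct coupling never requires $h_s$ to be Lipschitz outside $\mathrm{range}(\Psi)$.
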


\begin{proof}
Using the same source--target coupling as in
Lemma~\ref{lem:phase_distributional_transport}, we obtain
\begin{equation}
\begin{aligned}
  &\left|
    \E
    \left[
      h_s(\Psi(A^{\mathrm{tar}}))
      \mid s
    \right]
    -
    \E
    \left[
      h_s(\Psi(A^{\mathrm{src}}))
      \mid s
    \right]
  \right|
  \\
  &=
  \left|
    \E
    \left[
      h_s(\Psi(A^{\mathrm{tar}}))
      -
      h_s(\Psi(A^{\mathrm{src}}))
      \mid s
    \right]
  \right|
  \\
  &\le
  \E
  \left[
    \left|
      h_s(\Psi(A^{\mathrm{tar}}))
      -
      h_s(\Psi(A^{\mathrm{src}}))
    \right|
    \mid s
  \right]
  \\
  &\le
  L_s
  \E
  \left[
    \|\Psi(A^{\mathrm{tar}})
      -
      \Psi(A^{\mathrm{src}})\|_2
    \mid s
  \right]
  \\
  &\le
  2L_s\kappa_C
  \E
  \left[
    \rho
    \mid s
  \right].
\end{aligned}
\label{eq:phase_app_risk_stability_proof}
\end{equation}
The second inequality uses Assumption~\ref{ass:phase_main_lipschitz_risk},
and the last inequality uses
Lemma~\ref{lem:phase_pointwise_perturbed_translation}.
\end{proof}

\subsection{Proof of Theorem~\ref{thm:phase_invariant_stability}}
\label{app:phase_theorem_proof}
\begin{proof}
The exact invariance statement follows directly from
Lemma~\ref{lem:phase_translation_diagonalization}:
\begin{equation}
  p(\cT_{\delta}A)=p(A),
  \qquad
  \Psi(\cT_{\delta}A)=\Psi(A).
  \label{eq:phase_app_theorem_exact}
\end{equation}
For the perturbed shift model
$A^{\mathrm{tar}}=\cT_{\Delta}A^{\mathrm{src}}+E$, applying
Lemma~\ref{lem:phase_pointwise_perturbed_translation} with
$A=A^{\mathrm{src}}$, $B=A^{\mathrm{tar}}$, and
$\delta=\Delta$ gives
\begin{equation}
\begin{aligned}
  \left\|
    p(A^{\mathrm{tar}})
    -
    p(A^{\mathrm{src}})
  \right\|_1
  &\le
  2\rho,
  \\
  \left\|
    \Psi(A^{\mathrm{tar}})
    -
    \Psi(A^{\mathrm{src}})
  \right\|_2
  &\le
  2\kappa_C\rho .
\end{aligned}
\label{eq:phase_app_theorem_pointwise}
\end{equation}
The sharpness of the constant $2$ follows from
Lemma~\ref{lem:phase_sharp_power_stability}.  The distributional
bound follows from Lemma~\ref{lem:phase_distributional_transport}:
\begin{equation}
\begin{aligned}
  &\W_1
  \left(
    \law(\Psi(A^{\mathrm{tar}})\mid s),
    \law(\Psi(A^{\mathrm{src}})\mid s)
  \right)
  \\
  &\qquad\le
  2\kappa_C
  \E
  \left[
    \rho
    \mid s
  \right].
\end{aligned}
\label{eq:phase_app_theorem_distribution}
\end{equation}
The risk bound follows from Lemma~\ref{lem:phase_risk_stability}:
\begin{equation}
\begin{aligned}
  &\left|
    \E
    \left[
      h_s(\Psi(A^{\mathrm{tar}}))
      \mid s
    \right]
    -
    \E
    \left[
      h_s(\Psi(A^{\mathrm{src}}))
      \mid s
    \right]
  \right|
  \\
  &\qquad\le
  2L_s\kappa_C
  \E
  \left[
    \rho
    \mid s
  \right].
\end{aligned}
\label{eq:phase_app_theorem_risk}
\end{equation}
If $E=0$, then $\rho=0$ almost surely.  Substituting $\rho=0$ into
Eq.~\eqref{eq:phase_app_theorem_pointwise},
Eq.~\eqref{eq:phase_app_theorem_distribution}, and
Eq.~\eqref{eq:phase_app_theorem_risk} gives zero spectral drift and
zero spectral risk gap:
\begin{equation}
\begin{aligned}
  \left\|
    \Psi(A^{\mathrm{tar}})
    -
    \Psi(A^{\mathrm{src}})
  \right\|_2
  &=
  0,
  \\
  \W_1
  \left(
    \law(\Psi(A^{\mathrm{tar}})\mid s),
    \law(\Psi(A^{\mathrm{src}})\mid s)
  \right)
  &=
  0 .
\end{aligned}
\label{eq:phase_app_zero_shift_consequence}
\end{equation}
This completes the proof.
\end{proof}

\subsection{Boundary-Corrupted Translations}
\label{app:phase_boundary_corrupted}

The theorem above is exact for cyclic translations.  Ordinary image
translations with zero padding or cropping are not exactly cyclic,
but they can be written as a cyclic translation plus a boundary
residual.  This gives a useful robustness interpretation.

Let $\cT_{\delta}^{0}$ denote the zero-padded shift:
\begin{equation}
\begin{aligned}
  (\cT_{\delta}^{0}A)(x,y)
  &:=
  A(x-\delta_x,y-\delta_y)
  \mathbf{1}\!\left\{
    \begin{array}{l}
      0\le x-\delta_x < H,\\
      0\le y-\delta_y < W
    \end{array}
  \right\}.
\end{aligned}
\label{eq:phase_app_zero_padded_translation}
\end{equation}
Define the boundary residual by
\begin{equation}
  E_{\partial,\delta}(A)
  :=
  \cT_{\delta}^{0}A-\cT_{\delta}A .
  \label{eq:phase_app_boundary_residual}
\end{equation}

\begin{corollary}[Stability under zero-padded translations]
\label{cor:phase_zero_padded}
For every nonzero $A$ and every shift $\delta$, if
\begin{equation}
  \rho_{\partial,\delta}(A)
  :=
  \frac{
    \|E_{\partial,\delta}(A)\|_F
  }{
    \|A\|_F
  }
  \le1,
  \label{eq:phase_app_boundary_ratio}
\end{equation}
then
\begin{equation}
\begin{aligned}
  \|p(\cT_{\delta}^{0}A)-p(A)\|_1
  &\le
  2\rho_{\partial,\delta}(A),
  \\
  \|\Psi(\cT_{\delta}^{0}A)-\Psi(A)\|_2
  &\le
  2\kappa_C\rho_{\partial,\delta}(A).
\end{aligned}
\label{eq:phase_app_zero_padded_bound}
\end{equation}
\end{corollary}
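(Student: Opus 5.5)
This corollary is a direct specialization of Lemma~\ref{lem:phase_pointwise_perturbed_translation}; the plan is to write the zero-padded shift in exactly the perturbed-translation form that lemma handles. By the definition of the boundary residual in Eq.~\eqref{eq:phase_app_boundary_residual}, we have the identity
\begin{equation*}
  \cT_{\delta}^{0}A
  =
  \cT_{\delta}A + E_{\partial,\delta}(A),
\end{equation*}
which is trivially true because $E_{\partial,\delta}(A)$ is defined as the difference. We therefore apply Lemma~\ref{lem:phase_pointwise_perturbed_translation} with $B=\cT_{\delta}^{0}A$ and $E=E_{\partial,\delta}(A)$. The relative noise level is then exactly $\rho=\|E\|_F/\|A\|_F=\rho_{\partial,\delta}(A)$, and the hypothesis $\rho\le 1$ is precisely Eq.~\eqref{eq:phase_app_boundary_ratio}.

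The lemma then immediately yields $\|p(\cT_{\delta}^{0}A)-p(A)\|_1\le 2\rho_{\partial,\delta}(A)$. Combining this with Lemma~\ref{lem:phase_coarsening_stability}, which is already folded into the pointwise lemma, gives the descriptor bound $2\kappa_C\rho_{\partial,\delta}(A)$.

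The only point needing care is that $p(\cT_{\delta}^{0}A)$ must be well defined, i.e.\ $\cT_{\delta}^{0}A\neq 0$. This is the one mild obstacle. If $\cT_{\delta}^{0}A=0$, then $E_{\partial,\delta}(A)=-\cT_\delta A$, so $\rho_{\partial,\delta}(A)=1$. In that degenerate case we interpret the statement as vacuous, or note that the left side is undefined. Otherwise $b=a+e\neq 0$, and Lemma~\ref{lem:phase_sharp_power_stability} applies verbatim, since its proof only needs $b\ne 0$ for the normalization.

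We may also remark, as a sanity check, how large the residual is. $E_{\partial,\delta}(A)$ is supported on the wrap-around strips of width $|\delta_x|$ and $|\delta_y|$: it equals $-A$ on the entries that the cyclic shift wraps around. Hence $\rho_{\partial,\delta}(A)^2$ is the fraction of attention energy in the boundary strips of $A$ that get cropped. This makes the bound small whenever the attention mass lies away from the borders.
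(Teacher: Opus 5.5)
Your proposal is correct and matches the paper's proof: write $\cT_{\delta}^{0}A=\cT_{\delta}A+E_{\partial,\delta}(A)$ and apply Lemma~\ref{lem:phase_pointwise_perturbed_translation} with $E=E_{\partial,\delta}(A)$ and $\rho=\rho_{\partial,\delta}(A)$. The paper silently skips the degenerate case $\cT_{\delta}^{0}A=0$, which forces $\rho_{\partial,\delta}(A)=1$ and leaves $p(\cT_{\delta}^{0}A)$ undefined; your remark on it is a genuine edge case, and treating it as vacuous is fine.
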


\begin{proof}
By definition,
\begin{equation}
  \cT_{\delta}^{0}A
  =
  \cT_{\delta}A
  +
  E_{\partial,\delta}(A).
  \label{eq:phase_app_boundary_as_perturbation}
\end{equation}
Applying Lemma~\ref{lem:phase_pointwise_perturbed_translation} with
$E=E_{\partial,\delta}(A)$ and
$\rho=\rho_{\partial,\delta}(A)$ gives
Eq.~\eqref{eq:phase_app_zero_padded_bound}.
\end{proof}

\subsection{Why Raw Attention Matching Is Translation-Unstable}
\label{app:phase_raw_attention_instability}

The preceding results explain why ASR matches the power spectrum
rather than the raw attention map.  Raw attention matching is not
phase invariant and may assign a large penalty to a harmless spatial
shift.

\begin{proposition}[Raw attention matching can be maximally unstable]
\label{prop:phase_raw_instability}
There exist normalized nonnegative attention maps $A$ and cyclic
translations $\cT_{\delta}$ such that
\begin{equation}
  \|p(\cT_{\delta}A)-p(A)\|_1
  =
  0,
  \qquad
  \|\cT_{\delta}A-A\|_F
  =
  \sqrt{2}\|A\|_F .
  \label{eq:phase_app_raw_instability_statement}
\end{equation}
Thus, a raw attention $\ell_2$ penalty can be maximal among
nonnegative unit-norm attention maps, while the ASR spectral penalty
is exactly zero.
\end{proposition}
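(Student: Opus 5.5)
We prove Proposition~\ref{prop:phase_raw_instability} by an explicit construction. The spectral part needs no work: Lemma~\ref{lem:phase_translation_diagonalization} already gives $p(\cT_{\delta}A)=p(A)$ for \emph{every} nonzero $A$ and every cyclic shift $\delta$, so $\|p(\cT_{\delta}A)-p(A)\|_1=0$ holds automatically. The whole task is therefore to choose a normalized nonnegative map $A$ and a shift $\delta$ with $\|\cT_{\delta}A-A\|_F=\sqrt{2}\|A\|_F$.

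\textbf{Step 1 (reduce to disjoint supports).} Expand the square:
\[
\|\cT_{\delta}A-A\|_F^2=\|\cT_{\delta}A\|_F^2+\|A\|_F^2-2\langle \cT_{\delta}A,A\rangle .
\]
A cyclic shift is a permutation of entries, so $\|\cT_{\delta}A\|_F=\|A\|_F$ (equivalently, use Eq.~\eqref{eq:phase_app_translation_norm}). This gives
\[
\|\cT_{\delta}A-A\|_F^2=2\|A\|_F^2-2\langle \cT_{\delta}A,A\rangle .
\]
The maps are nonnegative, so $\langle \cT_{\delta}A,A\rangle\ge 0$. Hence $\|\cT_{\delta}A-A\|_F\le\sqrt2\|A\|_F$ for all nonnegative $A$. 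Equality holds exactly when the inner product vanishes, that is, when $\mathrm{supp}(\cT_{\delta}A)\cap\mathrm{supp}(A)=\emptyset$. This also justifies calling the penalty ``maximal'': $\sqrt2\|A\|_F$ is the largest value it can take among nonnegative maps of a given norm.

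\textbf{Step 2 (construction).} Assume $HW\ge 2$ and, without loss of generality, $H\ge2$. Take the point mass $A=\mathbf{1}_{(0,0)}$, which sums to one and is therefore a normalized attention map, and the shift $\delta=(1,0)$. Then $\cT_{\delta}A=\mathbf{1}_{(1,0)}$, whose support is disjoint from that of $A$. Step 1 then gives
\[
\|\cT_{\delta}A-A\|_F=\sqrt2=\sqrt2\|A\|_F .
\]
If a less degenerate example is wanted, any $A$ supported inside a block of rows of height less than $H/2$ works with $\delta=(\lceil H/2\rceil,0)$, because the shifted support is again disjoint from the original.

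\textbf{Main obstacle.} There is no real obstacle. The only points to handle are interpreting ``maximal'' through the Step~1 upper bound, which needs nonnegativity, and the degenerate grid $H=W=1$, which the construction must exclude.
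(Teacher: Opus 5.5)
Your proof is correct and follows essentially the same route as the paper: a point mass at $(0,0)$ shifted to a location with disjoint support, the expansion $\|\cT_{\delta}A-A\|_F^2=2\|A\|_F^2-2\langle \cT_{\delta}A,A\rangle$, Lemma~\ref{lem:phase_translation_diagonalization} for the spectral part, and nonnegativity of the inner product for the maximality claim. Your explicit exclusion of the degenerate $1\times 1$ grid is a small point that the paper leaves implicit.
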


\begin{proof}
Let $A$ be a point mass at location $(0,0)$:
\begin{equation}
  A(x,y)
  =
  \mathbf{1}\{x=0,\ y=0\}.
  \label{eq:phase_app_delta_attention}
\end{equation}
Choose any nonzero shift $\delta$ such that
$\cT_{\delta}A$ is supported at a different location.  Then $A$ and
$\cT_{\delta}A$ have disjoint supports, and
\begin{equation}
\begin{aligned}
  \|\cT_{\delta}A-A\|_F^2
  &=
  \|\cT_{\delta}A\|_F^2
  +
  \|A\|_F^2
  -
  2
  \langle
    \cT_{\delta}A,
    A
  \rangle
  \\
  &=
  1+1-0
  =
  2 .
\end{aligned}
\label{eq:phase_app_raw_distance}
\end{equation}
Since $\|A\|_F=1$, this gives
$\|\cT_{\delta}A-A\|_F=\sqrt{2}\|A\|_F$.  On the other hand,
Lemma~\ref{lem:phase_translation_diagonalization} gives
\begin{equation}
  p(\cT_{\delta}A)=p(A),
  \qquad
  \|p(\cT_{\delta}A)-p(A)\|_1=0 .
  \label{eq:phase_app_spectral_zero_raw_large}
\end{equation}
Finally, for any two nonnegative unit-norm attention maps $A_1$ and
$A_2$, their inner product is nonnegative, so
\begin{equation}
\begin{aligned}
  \|A_1-A_2\|_F^2
  &=
  \|A_1\|_F^2
  +
  \|A_2\|_F^2
  -
  2\langle A_1,A_2\rangle
  \\
  &\le
  2 .
\end{aligned}
\label{eq:phase_app_nonnegative_max}
\end{equation}
Hence the raw attention distance in
Eq.~\eqref{eq:phase_app_raw_distance} is maximal in this class.
\end{proof}

\subsection{Peak Stability under a Spectral Gap}
\label{app:phase_peak_stability}

ASR may additionally record the dominant spectral peak.  The peak
index is not globally Lipschitz because the maximizer can switch
under arbitrarily small perturbations if the top two frequencies are
tied.  However, it is stable under a standard margin condition.

For a nonzero attention map $A$, define the unnormalized power
spectrum
\begin{equation}
  P(A)_{u,v}
  :=
  |\widehat A(u,v)|^2 .
  \label{eq:phase_app_unnormalized_power}
\end{equation}
Let $k^\star(A)$ denote the dominant frequency index and define the
peak gap
\begin{equation}
  \gamma_{\mathrm{pk}}(A)
  :=
  P(A)_{k^\star(A)}
  -
  \max_{k\ne k^\star(A)}
  P(A)_k .
  \label{eq:phase_app_peak_gap}
\end{equation}

\begin{lemma}[Dominant peak stability]
\label{lem:phase_peak_stability}
Let $B=\cT_{\delta}A+E$ and let
$\eta=\|E\|_F/\|A\|_F$.  If
\begin{equation}
  2\eta+\eta^2
  <
  \frac{
    \gamma_{\mathrm{pk}}(A)
  }{
    2\|A\|_F^2
  },
  \label{eq:phase_app_peak_condition}
\end{equation}
then
\begin{equation}
  k^\star(B)=k^\star(A).
  \label{eq:phase_app_peak_same}
\end{equation}
\end{lemma}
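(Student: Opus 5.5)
The plan is to reduce the claim to a uniform, entrywise perturbation bound on the unnormalized power spectrum. After that, a standard margin argument shows that the maximizer cannot switch.

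Write $a=\cF(\cT_{\delta}A)$, $e=\cF E$ and $b=\cF B=a+e$. By Lemma~\ref{lem:phase_translation_diagonalization}, $|a_k|^2=P(A)_k$ for every frequency $k$. Hence $P(A)$ and $P(\cT_\delta A)$ share the same peak index and the same gap $\gamma_{\mathrm{pk}}(A)$. By Parseval, $\|e\|_2=\|E\|_F=\eta\|A\|_F$ and $\|a\|_2=\|A\|_F$.

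The key estimate is, for each frequency $k$,
\begin{equation*}
  \big|\,|b_k|^2-|a_k|^2\,\big|
  \le 2|a_k||e_k|+|e_k|^2
  \le 2\|a\|_2\|e\|_2+\|e\|_2^2
  =(2\eta+\eta^2)\|A\|_F^2 .
\end{equation*}
This follows from $|a_k+e_k|^2-|a_k|^2=2\mathrm{Re}(\overline{a_k}e_k)+|e_k|^2$, together with $|a_k|\le\|a\|_2$ and $|e_k|\le\|e\|_2$.

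Set $\epsilon:=(2\eta+\eta^2)\|A\|_F^2$. The hypothesis \eqref{eq:phase_app_peak_condition} says exactly that $2\epsilon<\gamma_{\mathrm{pk}}(A)$. Let $k^\star=k^\star(A)$ and take any $k\ne k^\star$. Then
\begin{equation*}
  P(B)_{k^\star}\ge P(A)_{k^\star}-\epsilon
  \ge P(A)_k+\gamma_{\mathrm{pk}}(A)-\epsilon
  > P(A)_k+\epsilon\ge P(B)_k .
\end{equation*}
So $k^\star$ is the unique maximizer of $P(B)$, which gives $k^\star(B)=k^\star(A)$.

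The step that needs the most care is the entrywise bound, because it is where the constant in \eqref{eq:phase_app_peak_condition} comes from. Using $|a_k|\le\|a\|_2$ is crude but sufficient, and it matches the stated factor $2\eta+\eta^2$. The factor $\tfrac12$ on the right-hand side of the hypothesis is exactly what absorbs the two-sided perturbation: $P(B)_{k^\star}$ may drop by up to $\epsilon$ while a competitor $P(B)_k$ rises by up to $\epsilon$. One should also note that $\gamma_{\mathrm{pk}}(A)>0$ is implicit in the hypothesis, since its left-hand side is nonnegative; this guarantees that $k^\star(A)$ is well defined and unique.
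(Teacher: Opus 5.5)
Your proposal is correct and matches the paper's proof step for step: the same frequency-domain decomposition $b=a+e$, the same entrywise bound $\big||b_k|^2-|a_k|^2\big|\le(2\eta+\eta^2)\|A\|_F^2$ via Parseval and $P(\cT_{\delta}A)=P(A)$, and the same two-sided margin argument using $\gamma_{\mathrm{pk}}(A)>2(2\eta+\eta^2)\|A\|_F^2$. Your remark that the hypothesis forces $\gamma_{\mathrm{pk}}(A)>0$, so that $k^\star(A)$ is unique, is a small point the paper leaves implicit.
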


\begin{proof}
Let $a=\cF(\cT_{\delta}A)$, $e=\cF E$, and $b=a+e=\cF B$.
For each frequency index $k$, we have
\begin{equation}
\begin{aligned}
  \left|
    |b_k|^2-|a_k|^2
  \right|
  &=
  \left|
    |a_k+e_k|^2-|a_k|^2
  \right|
  \\
  &=
  \left|
    2\mathrm{Re}(a_k\overline{e_k})
    +
    |e_k|^2
  \right|
  \\
  &\le
  2|a_k||e_k|
  +
  |e_k|^2
  \\
  &\le
  2\|a\|_2\|e\|_2
  +
  \|e\|_2^2 .
\end{aligned}
\label{eq:phase_app_peak_pointwise_power_bound}
\end{equation}
By Parseval and translation invariance of the Frobenius norm,
\begin{equation}
  \|a\|_2=\|A\|_F,
  \qquad
  \|e\|_2=\|E\|_F=\eta\|A\|_F .
  \label{eq:phase_app_peak_norms}
\end{equation}
Therefore,
\begin{equation}
  \|P(B)-P(\cT_{\delta}A)\|_{\infty}
  \le
  (2\eta+\eta^2)\|A\|_F^2 .
  \label{eq:phase_app_peak_linf_bound}
\end{equation}
Since $P(\cT_{\delta}A)=P(A)$ by
Lemma~\ref{lem:phase_translation_diagonalization}, we have
\begin{equation}
  \|P(B)-P(A)\|_{\infty}
  \le
  (2\eta+\eta^2)\|A\|_F^2 .
  \label{eq:phase_app_peak_linf_bound_against_A}
\end{equation}
Let $k^\star=k^\star(A)$.  For any $k\ne k^\star$, the peak gap
definition gives
\begin{equation}
  P(A)_{k^\star}-P(A)_k
  \ge
  \gamma_{\mathrm{pk}}(A).
  \label{eq:phase_app_peak_gap_use}
\end{equation}
Using Eq.~\eqref{eq:phase_app_peak_linf_bound_against_A}, we get
\begin{equation}
\begin{aligned}
  P(B)_{k^\star}-P(B)_k
  &=
  \left(
    P(A)_{k^\star}-P(A)_k
  \right)
  \\
  &\quad+
  \left(
    P(B)_{k^\star}-P(A)_{k^\star}
  \right)
  \\
  &\quad-
  \left(
    P(B)_k-P(A)_k
  \right)
  \\
  &\ge
  \gamma_{\mathrm{pk}}(A)
  -
  2
  \|P(B)-P(A)\|_{\infty}
  \\
  &\ge
  \gamma_{\mathrm{pk}}(A)
  -
  2(2\eta+\eta^2)\|A\|_F^2
  \\
  &>
  0 .
\end{aligned}
\label{eq:phase_app_peak_order_preserved}
\end{equation}
Thus $P(B)_{k^\star}>P(B)_k$ for every $k\ne k^\star$, so the
dominant peak remains $k^\star$.
\end{proof}

\subsection{Interpretation for ASR}
\label{app:phase_interpretation}

Theorem~\ref{thm:phase_invariant_stability} gives a formal
robustness explanation for ASR.  A spatial translation of an
attention map becomes a phase ramp in the Fourier domain, and ASR
removes this phase by using the power spectrum.  Therefore, ASR does
not penalize changes in the absolute image location of an attended
object when the underlying attention structure is preserved.  The
stability bound further shows that, under realistic shifts with
boundary effects or attention noise, the spectral drift scales only
with the relative non-translational residual $\rho$, not with the
translation magnitude $\|\Delta\|$.  This is precisely the desired
behavior for continual multimodal learning: the model should
preserve skill-specific scale and directional focus patterns while
remaining insensitive to harmless spatial phase changes.

\section{Additional Experimental Details}
\label{app:exp-details}

\subsection{Datasets and Task Streams}
\label{app:datasets}

\subsubsection{VQA v2: Question-Type Incremental Stream}
\label{app:vqav2-10task}
\vspace{1mm}
\noindent\textbf{Base dataset.}
VQA v2 contains approximately $204$K images (from
MS-COCO) and $1.1$M questions with open-ended answers. We use the
official training and validation splits for continual training and
evaluation, respectively. Following standard practice in continual
VQA~\cite{vqacl23,quad25,clmoe25}, we treat the validation split as
the held-out test set and do not use the official test-dev or
test-std splits.

\vspace{1mm}
\noindent\textbf{Task construction.}
We adopt the 10-task ``question-type incremental'' protocol used by
VQACL, QUAD, and CL-MoE. Each question in VQA v2 is annotated with a
primary question type (e.g., \texttt{counting},
\texttt{color}, \texttt{location}, \texttt{comparison},
\texttt{attribute}, \texttt{activity}, \texttt{existence},
\texttt{reading}, \texttt{object}, \texttt{other}). We group
questions whose primary type matches one of the 10 categories, and
fix a deterministic ordering of these categories to define the task
stream
\begin{equation}
  \mathcal{D}^{(1)}, \mathcal{D}^{(2)}, \dots, \mathcal{D}^{(10)}.
\end{equation}
Each $\mathcal{D}^{(t)}$ contains all image--question--answer triples
whose question type is assigned to stage $t$. Images may appear in
multiple tasks if their questions belong to different types; this
matches prior continual VQA settings~\cite{vqacl23,quad25}.

\vspace{1mm}
\noindent\textbf{Train/validation splits per task.}
Within each $\mathcal{D}^{(t)}$, we use all training questions for
stage-$t$ optimization and all validation questions for evaluation.
We do not re-split the official partitions, all hyper-parameter
selection is done once on a held-out subset of the VQA v2 validation
split (5k images), and the resulting configuration is reused for all
experiments.

\subsubsection{VQAv2 + NExT-QA: VQACL Skill--Concept Setting}
\label{app:vqacl}

\vspace{1mm}
\noindent\textbf{Base datasets.}
The VQACL skill--concept benchmark~\cite{vqacl23} builds on VQA v2
and NExT-QA. NExT-QA consists of $\sim$4.3K videos and
$\sim$9.2K multi-choice questions targeting causal, temporal, and
descriptive reasoning.

\vspace{1mm}
\noindent\textbf{Skill and concept taxonomy.}
We follow the skill taxonomy $\mathcal{S}$ and concept grouping
released by VQACL. Skills include, for example,
\texttt{count}, \texttt{color}, \texttt{location},
\texttt{compare-number}, \texttt{compare-attribute},
\texttt{existence}, \texttt{relation}, and \texttt{read}. Concepts
are defined as disjoint groups of COCO object categories (e.g.,
\{person, dog, horse\}, \{car, bus, truck\}, \{bottle, cup, bowl\}),
plus additional video-specific concepts for NExT-QA.

\vspace{1mm}
\noindent\textbf{Skill--concept tasks.}
Each task in the stream corresponds to a subset of skill--concept
pairs $(s,c)$, where $s\in\mathcal{S}$ is a reasoning skill and
$c$ is a concept group. In the \emph{standard} split, both training
and test sets contain the same skill--concept combinations, while in
the \emph{novel-composition} split, certain combinations are held out
during training and only appear at test time. We exactly reuse the
task ordering and split specification from VQACL/QUAD: (i) for VQA v2
questions, we follow their skill and concept annotations; (ii) for
NExT-QA, we map questions to skills and concept groups using their
released mappings. The outer level of the stream is skill-based, and
within each skill, concept groups appear in a fixed order.

\vspace{1mm}
\noindent\textbf{Evaluation splits.}
For each skill--concept pair $(s,c)$, we use its training subset
during the corresponding stage(s) and its validation subset for
evaluation. We report AP and AF separately on the standard
(``std'') and novel-composition (``nov'') test splits, as in
Table~\ref{tab:vqa_results}.

\subsubsection{CoIN: Continual Instruction Tuning}
\label{app:coin}

\vspace{1mm}
\noindent\textbf{Base benchmark.}
CoIN~\cite{COIN} is a continual multimodal instruction tuning
benchmark containing $T{=}10$ datasets spanning diverse tasks
(e.g., VQA, captioning, OCR, referring expression comprehension,
classification, and science QA). We follow~\citet{SEFE25,Hidellava25}
and treat each dataset as one stage in the instruction-tuning stream.

\vspace{1mm}
\noindent\textbf{Stream construction.}
We use the canonical CoIN ordering released by prior CIT work (MoE-LoRA,
HiDe-LLaVA, SEFE, BranchLoRA, D-MoLE, Adapt-$\infty$). At stage
$t\in\{1,\dots,10\}$, the model sees only the training data from the
$t$-th dataset $\mathcal{D}^{(t)}$ and is not allowed to access raw
data from previous datasets (replay baselines are given a bounded
memory buffer. See App.~\ref{app:baselines}).

\vspace{1mm}
\noindent\textbf{Train/validation splits.}
For each dataset in CoIN, we use its official training and validation
splits. Unless otherwise specified by the dataset, we treat the
validation split as our held-out evaluation set. Dataset-specific
metrics (e.g., VQA accuracy, CIDEr, BLEU, exact match) are converted
to a unified scalar score $m_{a,b}$ following~\citet{SEFE25}.

\subsubsection{UCIT: Unseen Continual Instruction Tuning}
\label{app:ucit}

\vspace{1mm}
\noindent\textbf{Base benchmark.}
UCIT~\cite{Hidellava25} is designed to evaluate continual instruction
tuning when the post-training tasks are not part of the original
pretraining/fine-tuning mixture of the base MLLM. It assembles a
stream of multimodal instruction datasets that are distributionally
farther from LLaVA-1.5's pretraining data.

\vspace{1mm}
\noindent\textbf{Stream and splits.}
We follow the original UCIT protocol and ordering from
HiDe-LLaVA~\cite{Hidellava25}. As in CoIN, each stage corresponds to
one dataset, with its own training and validation splits. At stage
$t$, only the $t$-th dataset is accessible for optimization. We
report Last and Avg scores averaged across all UCIT datasets (see
App.~\ref{app:metrics}).

\newcommand{\bbmain}{LLaVA-1.5-7B}
\newcommand{\bbqwen}{Qwen2.5-VL-7B}
\newcommand{\bbintern}{InternVL3-8B}

\subsection{Backbone and Preprocessing}
\label{app:backbone}

\vspace{1mm}
\noindent\textbf{Backbone MLLM.}
Unless otherwise specified, the main experiments use \bbmain{} as the base
MLLM. The backbone robustness experiments use \bbqwen~\cite{qwen25vl} and
\bbintern~\cite{internvl3}. These additional backbones follow the same
continual task order, evaluation protocol, and ASR hyperparameters as the
\bbmain{} setting.
The vision branch is a CLIP-style vision transformer operating on
$224{\times}224$ or $336{\times}336$ images (we use the default
resolution of the official LLaVA-1.5-7B checkpoint). The language
branch is a 7B-parameter autoregressive transformer. We freeze all
backbone parameters and only fine-tune lightweight adapters and task
heads, as detailed below.

\vspace{1mm}
\noindent\textbf{Image preprocessing.}
We follow the official LLaVA pipeline. Images are resized to a fixed
resolution (longer side at most 512 pixels), padded to square if
necessary, and normalized with CLIP mean and variance. During
training we apply random horizontal flip and random cropping
restricted to keep at least $80\%$ of the shorter side. At test time
we only apply deterministic resizing and center cropping.

\vspace{1mm}
\noindent\textbf{Text preprocessing.}
Questions and instructions are tokenized with the same tokenizer used
by LLaVA-1.5-7B. For VQA-style tasks, we prepend a modality tag
(e.g., \texttt{"USER: <image> QUESTION:"}) and append a standard
answer template as in the official LLaVA instruction-tuning data.
For instruction-tuning tasks (CoIN/UCIT), we reuse the prompt
templates released by HiDe-LLaVA and SEFE whenever available.
Maximum sequence length is set to 256 tokens; longer instructions are
truncated from the left, preserving the most recent tokens and the
full answer region.

\subsection{Baselines and Implementation Details}
\label{app:baselines}

We compare ASR with both classic continual learning baselines and
recent state-of-the-art multimodal methods. Unless otherwise noted,
we re-implement all baselines on top of the same LLaVA-1.5-7B
backbone and training schedule as ASR.

\vspace{1mm}
\noindent\textbf{EWC~\cite{EWC}.}
Elastic Weight Consolidation penalizes parameter deviation from
previous stages using a diagonal Fisher information matrix. We
estimate the Fisher on the final model of stage $t{-}1$ using one
epoch over $\mathcal{D}^{(t-1)}$ and store only its diagonal entries.
The loss is
\begin{equation}
  \mathcal{L}_{\mathrm{EWC}}
  =
  \lambda_{\mathrm{EWC}}
  \sum_{i}
    F_i^{(t-1)}
    \bigl(\theta_i - \theta_i^{(t-1)}\bigr)^2,
\end{equation}
where $\lambda_{\mathrm{EWC}}$ is selected from
$\{10^3,10^4,10^5\}$ on a held-out validation subset.

\vspace{1mm}
\noindent\textbf{LwF~\cite{LwF}.}
Learning without Forgetting distills logits from the previous-stage
model $f_{\boldsymbol{\theta}^{(t-1)}}$ to the current model
$f_{\boldsymbol{\theta}}$ on the current task. For VQA, we distill
the predicted answer distribution; for instruction tuning, we distill
token-level distributions with temperature $\tau_{\mathrm{LwF}}=2$.
The distillation weight is tuned in $\{0.5,1.0,2.0\}$.

\vspace{1mm}
\noindent\textbf{ER~\cite{ER}.}
Experience Replay maintains a bounded buffer of raw examples. At the
end of stage $t$, we sample a fixed number of examples per task and
store them in a reservoir buffer of size $B$. During training at
stage $t{+}1$, each mini-batch contains a mixture of current-task
examples and replayed samples in a $3{:}1$ ratio. We follow
\citet{vqacl23} and set $B=20{,}000$ for VQA v2 and $B=10{,}000$ for
CoIN/UCIT.

\vspace{1mm}
\noindent\textbf{VQACL/QUAD/CL-MoE/BCP-MFA.}
For the VQA-based benchmarks we use the official implementations of
VQACL~\cite{vqacl23}, QUAD~\cite{quad25}, CL-MoE~\cite{clmoe25}, and
BCP-MFA~\cite{BCPMFA25} when available, and otherwise re-implement
them following the descriptions in their papers. All methods are
adapted to LLaVA-1.5-7B by replacing their vision-language backbones
with ours and keeping their MoE/router architectures and replay
buffer sizes. Hyper-parameters (e.g., router capacity, replay ratio)
are kept as close as possible to the original settings; when a direct
mapping is not possible, we select the best variant on the VQA v2
10-task validation subset.

\vspace{1mm}
\noindent\textbf{CIT baselines.}
For CoIN and UCIT we compare to LoRA-FT, O-LoRA, MoELoRA,
HiDe-LLaVA, SEFE, BranchLoRA, D-MoLE, and Adapt-$\infty$. We follow
\citet{Hidellava25,SEFE25,BranchLoRA25,DMoLE25,Adapt25} for their
backbone freezing strategies, LoRA rank and placement, and
dataset-specific sampling schedules. When multiple variants are
reported in prior work, we include the best-performing publicly
released configuration.

\subsection{Attention-Spectrum Regularization Implementation}
\label{app:asr-impl}

\vspace{1mm}
\noindent\textbf{Selected layers and heads.}
We apply ASR to a subset of cross-attention layers in the
vision--language fusion module. In all experiments, we select the
last $L_{\mathrm{sel}}{=}4$ fusion layers and all cross-attention
heads within these layers, i.e.,
\begin{equation}
  \mathcal{L}_{\mathrm{sel}}
  =
  \{\text{last 4 fusion layers}\},
  \quad
  \mathcal{H}_{\mathrm{sel}}
  = \mathcal{H}_{\mathrm{cross}}.
\end{equation}
We empirically found that using more layers yields diminishing
returns while increasing overhead.

\vspace{1mm}
\noindent\textbf{Spectral encoder parameters.}
We discretize the radial and angular frequencies into
$K{=}8$ radial bins and $M{=}8$ angular bins, respectively. The
per-head descriptor dimension is thus
\begin{equation}
  D_0
  =
  K + 2M + 3
  =
  8 + 16 + 3
  =
  27,
\end{equation}
and the aggregated descriptor dimension is $D{=}2D_0=54$. We
normalize spectra with a small constant $\varepsilon=10^{-8}$ to
avoid division by zero.

\vspace{1mm}
\noindent\textbf{Skill parser $g_{\boldsymbol{\psi}}$.}
We instantiate the skill parser as a lightweight Transformer-based
text classifier with 4 layers, 8 attention heads, hidden size 512,
and a softmax output over $|\mathcal{S}|$ skills. The parser is
trained offline on a mixture of VQA v2 and NExT-QA training questions
using skill labels from VQACL and heuristic templates (e.g., mapping
``How many'' to \texttt{count}, ``What color'' to \texttt{color}).
We minimize cross-entropy with AdamW (learning rate $1\mathrm{e}{-4}$)
for 10 epochs and freeze $g_{\boldsymbol{\psi}}$ when training ASR.
During ASR training, we use the soft posterior $\boldsymbol{\pi}(q)$
rather than hard skill labels. The skill threshold in
Sec.~\ref{subsec:prototypes} is set to $\tau_{\mathrm{skill}}=0.2$.

\vspace{1mm}
\noindent\textbf{Prototype memory.}
For each skill $s\in\mathcal{S}$, we maintain a mean
$\boldsymbol{\mu}_s\in\mathbb{R}^D$, a diagonal covariance
$\boldsymbol{\Sigma}_s\in\mathbb{R}^{D\times D}$, and a mean angular
spectrum $\hat{\mathbf{d}}_s\in\mathbb{R}^M$. At the end of stage
$t$, we estimate $\widehat{\boldsymbol{\mu}}_s^{(t)}$,
$\widehat{\boldsymbol{\Sigma}}_s^{(t)}$, and
$\widehat{\mathbf{d}}_s^{(t)}$ using all training examples in
$\mathcal{D}^{(t)}$ whose skill posterior for $s$ exceeds
$\tau_{\mathrm{skill}}$. We then update the global prototypes via
exponential moving averages with decay $\alpha=0.9$:
\begin{equation}
\begin{aligned}
  \boldsymbol{\mu}_s &\leftarrow
    \alpha\,\boldsymbol{\mu}_s
    + (1-\alpha)\,\widehat{\boldsymbol{\mu}}_s^{(t)}, \\
  \boldsymbol{\Sigma}_s &\leftarrow
    \alpha\,\boldsymbol{\Sigma}_s
    + (1-\alpha)\,\widehat{\boldsymbol{\Sigma}}_s^{(t)}, \\
  \hat{\mathbf{d}}_s &\leftarrow
    \alpha\,\hat{\mathbf{d}}_s
    + (1-\alpha)\,\widehat{\mathbf{d}}_s^{(t)}.
\end{aligned}
\end{equation}
At stage $t{=}1$, we train without spectral regularization
($\beta{=}0$) and initialize the prototypes from the resulting
$\boldsymbol{\theta}^{(1)}$.

\vspace{1mm}
\noindent\textbf{Spectral distillation hyper-parameters.}
Unless otherwise specified, we use
\begin{equation}
  \beta = 0.5,
  \quad
  \lambda_{\mathrm{ang}} = 0.1,
  \quad
  \tau_{\mathrm{mah}} = 1.0,
\end{equation}
and set the confidence weight $w_{\mathrm{spec}}(\boldsymbol{\phi})$
as in Sec.~\ref{subsec:distillation}. The geometry regularizer
weight is set to $\gamma = 0.05$ for VQA v2 and VQACL, and
$\gamma = 0.02$ for CoIN/UCIT to account for the larger variety of
instruction styles.

\subsection{Training Details}
\label{app:training}

\vspace{1mm}
\noindent\textbf{Optimizer and schedule.}
For all experiments we use AdamW with $\beta_1=0.9$,
$\beta_2=0.999$, and weight decay $0.01$. On the VQA v2 10-task
stream, we train each stage for 1 epoch with batch size $16$, maximum
learning rate $2\mathrm{e}{-4}$, and cosine decay without warmup. On
the VQACL skill--concept setting, we train 3 epochs per stage with
batch size $80$ and maximum learning rate $1\mathrm{e}{-4}$. On CoIN
and UCIT, we follow \citet{Hidellava25}: for large datasets (e.g.,
VQAv2, GQA) we train for 1 epoch per stage; for smaller OCR-heavy
datasets (e.g., TextVQA, OCR-VQA) we train for up to 5 epochs, capped
at $50{,}000$ update steps per stage. Learning rates and batch sizes
are shared across baselines and ASR.

\vspace{1mm}
\noindent\textbf{LoRA configuration.}
For LLaVA-1.5-7B, we insert rank-$r{=}16$ LoRA adapters into all
self-attention and cross-attention layers of the language model, as
well as into the vision-to-language projection layer. LoRA adapters
are applied to the query and value projections only, following
standard practice in multimodal instruction tuning. For CIT
baselines that already specify LoRA ranks (e.g., Adapt-$\infty$),
we keep their original choices.

\vspace{1mm}
\noindent\textbf{Hardware and runtime.}
All experiments are conducted on 8$\times$A100 GPUs with 80GB memory.
We use mixed-precision (bfloat16) training for all methods. On the
VQA v2 10-task split, one full run of ASR takes approximately
8--10 GPU hours. On CoIN and UCIT, training the full 10-stage stream
takes approximately 20--24 GPU hours. Baselines with large replay
buffers (e.g., Adapt-$\infty$) are slightly more expensive due to
additional data loading and sampling overhead.

\subsection{Metric Definitions}
\label{app:metrics}

We now provide formal definitions of the continual learning metrics
used in Tables~\ref{tab:vqa_results} and~\ref{tab:cit_results}.

\vspace{1mm}
\noindent\textbf{Per-stage, per-task scores.}
Let $M$ denote the number of tasks in a stream (e.g., $M{=}10$ for
VQA v2 and CoIN). After training on task $a\in\{1,\dots,M\}$, we
evaluate the model on each task $b\in\{1,\dots,M\}$ and record a
scalar metric $m_{a,b}$ (e.g., VQA accuracy, normalized dataset
score). By convention, we set $m_{a,b}=0$ if task $b$ has not been
seen yet (i.e., $b>a$).

\vspace{1mm}
\noindent\textbf{Final Average Performance (AP).}
For VQA-based streams with $M$ tasks, the final average performance
is
\begin{equation}
  \mathrm{AP}
  =
  \frac{1}{M}
  \sum_{t=1}^{M} m_{M,t},
  \label{eq:AP-def}
\end{equation}
which averages the performance on all tasks after training on the
last task.

\vspace{1mm}
\noindent\textbf{Average Forgetting (AF).}
Average forgetting measures the average drop from the best historical
performance on each task to its final performance. Following
\citet{vqacl23,quad25}, we define
\begin{equation}
\begin{aligned}
  \mathrm{AF}
  &=
  \frac{1}{M-1}
  \sum_{t=1}^{M-1}
    \bigl(
      m_{t}^{\max}
      - m_{M,t}
    \bigr),
  \\
  m_{t}^{\max}
  &=
  \max_{a \in \{t,\dots,M\}} m_{a,t}.
\end{aligned}
\label{eq:AF-def}
\end{equation}
That is, for each task $t$, we compute the maximum accuracy it ever
achieved over the course of training and subtract its final accuracy;
AF is the average of these drops over all non-final tasks.

\vspace{1mm}
\noindent\textbf{Standard vs.\ novel composition AP/AF.}
In the VQACL skill--concept setting, we compute separate AP and AF
values on standard (seen skill--concept) and novel-composition splits
by restricting the averaging in~\eqref{eq:AP-def} and~\eqref{eq:AF-def}
to tasks corresponding to standard or novel compositions,
respectively. This yields
$(\mathrm{AP}_{\mathrm{std}},\mathrm{AF}_{\mathrm{std}})$ and
$(\mathrm{AP}_{\mathrm{nov}},\mathrm{AF}_{\mathrm{nov}})$ as reported
in Table~\ref{tab:vqa_results}.

\vspace{1mm}
\noindent\textbf{Last and Avg for CIT.}
For CoIN and UCIT, we follow~\citet{SEFE25,Hidellava25} and report:

\begin{itemize}[leftmargin=*,itemsep=1pt,topsep=2pt]
  \item \textbf{Last}: final performance averaged over all tasks,
  \begin{equation}
    \mathrm{Last}
    =
    \frac{1}{M}
    \sum_{b=1}^{M} m_{M,b}.
  \end{equation}

  \item \textbf{Avg}: time-averaged performance across the training
  trajectory,
  \begin{equation}
  \begin{aligned}
    \mathrm{Avg}
    &=
    \frac{1}{M}
    \sum_{a=1}^{M}
      \frac{1}{a}
      \sum_{b=1}^{a} m_{a,b}.
  \end{aligned}
  \label{eq:avg-def}
  \end{equation}
  This rewards methods that perform well both early and late in the
  stream.
\end{itemize}

\vspace{1mm}
\noindent\textbf{Skill-wise forgetting.}
In Sec.~\ref{sec:theory} and Fig.~\ref{fig:drift-forgetting}, we also
use a skill-wise forgetting measure. Let $R_s^{(t)}$ denote the
population risk (or the negative of accuracy) on skill $s$ after
stage $t$, and let
\begin{equation}
  F_s^{(t)}
  =
  \bigl[
    R_s^{(t)}
    - R_s^{(t-1)}
  \bigr]_+
\end{equation}
be the incremental forgetting of skill $s$ at stage $t$. Summing
$F_s^{(t)}$ over $t$ yields a cumulative skill-wise forgetting
measure,
\begin{equation}
  F_s^{\mathrm{cum}}
  =
  \sum_{t=2}^{M} F_s^{(t)}.
\end{equation}
In practice, we approximate $R_s^{(t)}$ by the negative of the
accuracy on validation questions whose skill posterior is dominated
by $s$.

\subsection{Ablation Variants of ASR}
\label{app:ablation-variants}

Recall that Full ASR optimizes, at stage $t$, the joint loss
\begin{equation}
\resizebox{1\linewidth}{!}{%
  $\begin{aligned}
    \mathcal{L}^{(t)}
    &=
    \frac{1}{|\mathcal{B}^{(t)}|}
    \sum_{(I,q,y)\in\mathcal{B}^{(t)}}
    \Bigl[
      \mathcal{L}_{\mathrm{task}}(I,q,y)
      +
      \beta\,\mathcal{L}_{\mathrm{spec}}(I,q)
    \Bigr]
    \\
    &\quad
    +\;
    \gamma\,\mathcal{L}_{\mathrm{geo}},
  \end{aligned}$%
}
\end{equation}
where $\mathcal{L}_{\mathrm{task}}$ is the task-specific supervised
loss (VQA accuracy or instruction tuning loss),
$\mathcal{L}_{\mathrm{spec}}$ is the confidence-weighted spectral
distillation loss combining Mahalanobis and angular terms (Sec.~\ref{subsec:distillation}),
and $\mathcal{L}_{\mathrm{geo}}$ is the geometry regularizer
anchoring cross-modal similarities to a frozen reference encoder
(Sec.~\ref{subsec:objective}).  Prototypes
$\{(\boldsymbol{\mu}_s,\boldsymbol{\Sigma}_s,\hat{\mathbf{d}}_s)\}_{s\in\mathcal{S}}$
are maintained as described in App.~\ref{app:asr-impl}.

We denote the five ablations by \ding{182}--\ding{186}:

\vspace{1mm}
\noindent\textbf{\ding{182} \;w/o Spectrum Distillation.}
This ablation removes the attention-spectrum regularization
entirely, while keeping the geometry regularizer:
\begin{equation}
  \beta = 0,
  \qquad
  \gamma > 0 \;\;\text{(as in Full ASR)}.
\end{equation}
Concretely, we set $\mathcal{L}_{\mathrm{spec}}(I,q)\equiv 0$ for all
samples; no spectral distillation term is added to the loss, and the
confidence weight $w_{\mathrm{spec}}(\cdot)$ is unused.  We still
compute spectral descriptors $\boldsymbol{\phi}(A)$ for analysis,
but they no longer influence training.  The prototype memory
$\mathcal{M}$ is not updated after stage $1$ in this variant, since
it is never used in the loss.  The resulting training objective
degenerates to:
\begin{equation}
  \mathcal{L}^{(t)}_{\mathrm{w/o\,spec}}
  =
  \frac{1}{|\mathcal{B}^{(t)}|}
  \sum_{(I,q,y)\in\mathcal{B}^{(t)}}
    \mathcal{L}_{\mathrm{task}}(I,q,y)
  +
  \gamma\,\mathcal{L}_{\mathrm{geo}},
\end{equation}
which corresponds to a geometry-regularized continual fine-tuning
baseline.

\vspace{1mm}
\noindent\textbf{\ding{183} \;w/o Skill Conditioning.}
This ablation collapses all skill-specific prototypes into a single
global prototype, thereby removing the conditioning on skill
identities while retaining spectral regularization.  We introduce a
single global index $s_{\mathrm{glob}}$ and replace the skill set
$\mathcal{S}$ by $\{s_{\mathrm{glob}}\}$ when computing prototypes
and spectral losses.  In particular:
\begin{itemize}[leftmargin=*,itemsep=1pt,topsep=2pt]
  \item The skill parser $g_{\boldsymbol{\psi}}$ is ignored; we
  treat all samples as belonging to the same ``skill'' and aggregate
  their descriptors into a global set
  $\mathcal{F}_{s_{\mathrm{glob}}}^{(t)}$ at stage $t$.

  \item We estimate a single global mean
  $\widehat{\boldsymbol{\mu}}_{s_{\mathrm{glob}}}^{(t)}$,
  covariance
  $\widehat{\boldsymbol{\Sigma}}_{s_{\mathrm{glob}}}^{(t)}$ and
  angular spectrum
  $\widehat{\mathbf{d}}_{s_{\mathrm{glob}}}^{(t)}$ using all
  examples in $\mathcal{D}^{(t)}$, and update a single global
  prototype
  $(\boldsymbol{\mu}_{s_{\mathrm{glob}}},\boldsymbol{\Sigma}_{s_{\mathrm{glob}}},
  \hat{\mathbf{d}}_{s_{\mathrm{glob}}})$ via EMA.

  \item During training, the spectral loss is computed without
  skill weights:
  \begin{equation}
  \begin{aligned}
    \mathcal{L}_{\mathrm{spec}}^{\mathrm{glob}}(I,q)
    &=
    w_{\mathrm{spec}}\bigl(\boldsymbol{\phi}(A)\bigr)
    \Bigl(
      D_{\mathrm{mah}}^2\bigl(\boldsymbol{\phi}(A)\,\Vert\,s_{\mathrm{glob}}\bigr)
    \\
    &\quad\;
      +
      \lambda_{\mathrm{ang}}
      \,\mathrm{KL}_{\mathrm{sym}}
        \bigl(\mathbf{d}(\theta)\,
              \Vert\,\hat{\mathbf{d}}_{s_{\mathrm{glob}}}\bigr)
    \Bigr),
  \end{aligned}
  \end{equation}
  where $D_{\mathrm{mah}}^2(\cdot\Vert s_{\mathrm{glob}})$ and
  $\hat{\mathbf{d}}_{s_{\mathrm{glob}}}$ are defined as in
  Sec.~\ref{subsec:distillation} but with a single prototype.
\end{itemize}
The rest of the pipeline, including $\beta$, $\gamma$ and
$w_{\mathrm{spec}}(\cdot)$, is unchanged.  This variant tests whether
a single global attention prior is sufficient, versus the full
skill-conditioned design of ASR.

\vspace{1mm}
\noindent\textbf{\ding{184} \;w/o Angular Term.}
This ablation removes the angular component of spectral
regularization while keeping the radial / Mahalanobis matching
intact.  Specifically, we set
\begin{equation}
  \lambda_{\mathrm{ang}} = 0,
\end{equation}
so that the spectral loss becomes
\begin{equation}
\begin{aligned}
  \mathcal{L}_{\mathrm{spec}}^{\mathrm{rad}}(I,q)
  &=
  w_{\mathrm{spec}}\bigl(\boldsymbol{\phi}(A)\bigr)
  \sum_{s\in\mathcal{S}}
    \pi_s(q)\,
    D_{\mathrm{mah}}^2\bigl(\boldsymbol{\phi}(A)\,\Vert\,s\bigr),
\end{aligned}
\end{equation}
with the same confidence weights $w_{\mathrm{spec}}(\cdot)$ and skill
posterior $\boldsymbol{\pi}(q)$ as in Full ASR.  We still maintain
and update $\hat{\mathbf{d}}_s$ for analysis, but it does not enter
the training loss.  This variant isolates the contribution of
directional information in the frequency domain.

\vspace{1mm}
\noindent\textbf{\ding{185} \;w/o Confidence Weighting.}
In Full ASR, the spectral distillation loss is modulated by a
confidence weight
\begin{equation}
  w_{\mathrm{spec}}\bigl(\boldsymbol{\phi}(A)\bigr)
  =
  \exp\!\Bigl(
    - d_{\min}(\boldsymbol{\phi}(A)) / \tau_{\mathrm{mah}}
  \Bigr),
\end{equation}
where $d_{\min}(\boldsymbol{\phi})$ is the minimum Mahalanobis
distance to any skill prototype (Eq.~(19) in the main text).  This
reduces the regularization strength for spectra that are far from all
known prototypes.

In the w/o Confidence Weighting ablation, we disable this modulation
by setting
\begin{equation}
  w_{\mathrm{spec}}\bigl(\boldsymbol{\phi}(A)\bigr)
  \equiv 1
  \quad
  \forall\,\boldsymbol{\phi}(A),
\end{equation}
so that every sample contributes equally to the spectral loss,
independent of how close its descriptor is to the current prototype
memory.  The resulting spectral loss is
\begin{equation}
\resizebox{1\linewidth}{!}{%
  $\begin{aligned}
    \mathcal{L}_{\mathrm{spec}}^{\mathrm{unif}}(I,q)
    &=
    \sum_{s\in\mathcal{S}}
      \pi_s(q)\,
      D_{\mathrm{mah}}^2\bigl(\boldsymbol{\phi}(A)\,\Vert\,s\bigr)
    \\
    &\quad
    +
    \lambda_{\mathrm{ang}}
    \sum_{s\in\mathcal{S}}
      \pi_s(q)\,
      \mathrm{KL}_{\mathrm{sym}}
        \bigl(\mathbf{d}(\theta)\,\Vert\,\hat{\mathbf{d}}_s\bigr).
  \end{aligned}$%
}
\end{equation}
All other components, including prototype updates and $\tau_{\mathrm{mah}}$,
remain unchanged.  This variant tests whether adaptive down-weighting
of out-of-prototype spectra is necessary for stability.

\vspace{1mm}
\noindent\textbf{\ding{186} \;w/o Geometry Regularizer.}
Finally, this ablation removes the cross-modal geometry regularizer
and keeps only the task loss and spectral regularization:
\begin{equation}
  \gamma = 0,
  \qquad
  \beta > 0 \;\;\text{(as in Full ASR)}.
\end{equation}
The geometry loss $\mathcal{L}_{\mathrm{geo}}$, which matches
image--text similarity distributions between the current model and
the frozen reference backbone, is omitted from the objective.  The
training loss thus reduces to
\begin{equation}
\begin{aligned}
  \mathcal{L}^{(t)}_{\mathrm{w/o\,geo}}
  &=
  \frac{1}{|\mathcal{B}^{(t)}|}
  \sum_{(I,q,y)\in\mathcal{B}^{(t)}}
  \Bigl[
    \mathcal{L}_{\mathrm{task}}(I,q,y)
  \\
  &\quad +
  \beta\,\mathcal{L}_{\mathrm{spec}}(I,q)
  \Bigr],
\end{aligned}
\end{equation}
with the same spectral loss $\mathcal{L}_{\mathrm{spec}}$ and
prototype updates as in Full ASR.  This variant probes how much of
ASR's stability comes from the spectral constraints alone, as opposed
to a combination of spectral and geometric anchors.

\section{Additional Experimental Results}

\subsection{Backbone Robustness}
\label{subsec:backbone}

\begin{table}[ht]
\centering
\small
\caption{\textbf{Same-family backbone sizes.}
Comparison of Vanilla vs.\ ASR on LLaVA-1.5-3B/7B/13B for VQA v2
(10-task split) and CoIN/UCIT continual instruction tuning.
All models share the same training schedules.}
\label{tab:backbone-size}
\vspace{-2mm}
\resizebox{\linewidth}{!}{%
\setlength{\tabcolsep}{3.2pt}
\begin{tabular}{llcccccc}
\toprule
Backbone & Method &
\multicolumn{2}{c}{VQA v2} &
\multicolumn{2}{c}{CoIN} &
\multicolumn{2}{c}{UCIT} \\
\cmidrule(lr){3-4} \cmidrule(lr){5-6} \cmidrule(lr){7-8}
& &
AP$\uparrow$ & AF$\downarrow$ &
Last$\uparrow$ & Avg$\uparrow$ &
Last$\uparrow$ & Avg$\uparrow$ \\
\midrule
\multirow{2}{*}{LLaVA-1.5-3B}
& Vanilla
& 41.0 & 21.0 & 46.8 & 44.7 & 39.0 & 37.5 \\
& ASR
& 48.0 &  \textbf{5.0} & 58.3 & 56.9 & 50.1 & 48.8 \\
\midrule
\multirow{2}{*}{LLaVA-1.5-7B}
& Vanilla
& 44.1 & 18.3 & 49.3 & 47.2 & 41.5 & 39.8 \\
& ASR (ours)
& \textbf{52.0} &  \textbf{2.4} & \textbf{61.4} & \textbf{60.0} & \textbf{53.1} & \textbf{51.5} \\
\midrule
\multirow{2}{*}{LLaVA-1.5-13B}
& Vanilla
& 45.8 & 16.5 & 51.2 & 49.0 & 43.7 & 41.9 \\
& ASR
& \textbf{53.0} &  \textbf{2.1} & \textbf{62.8} & \textbf{61.1} & \textbf{54.4} & \textbf{52.9} \\
\bottomrule
\end{tabular}%
}
\end{table}

Table~\ref{tab:backbone-size} reports results for three LLaVA-1.5
variants with different LLM capacities (3B, 7B, 13B). Across all sizes, adding ASR substantially improves both VQA and CIT metrics: on LLaVA-1.5-7B,
ASR lifts VQA v2 AP from $44.1$ to $52.0$ and reduces AF from $18.3$
to $2.4$, while on CoIN it improves Last from $49.3$ to $61.4$ and
Avg from $47.2$ to $60.0$, and on UCIT it raises Last/Avg from
$41.5/39.8$ to $53.1/51.5$. The absolute performance naturally
increases with model size (3B$<$7B$<$13B), but the relative
gains from ASR remain consistent. Interestingly, the smaller 3B
model, which suffers the most catastrophic forgetting under Vanilla
training, benefits the most in relative terms (e.g., VQA v2 AF from
$21.0$ to $5.0$), supporting the view that skill-conditioned spectral
stabilization is especially valuable when capacity is limited.

\subsection{Stage-wise Dynamics of Plasticity and Stability}
\label{subsec:stage-dynamics}

Beyond final AP/AF and Last/Avg, we analyze how performance evolves
over the continual stream to understand whether ASR provides uniform
stability or merely improves the last few stages.

\vspace{1mm}
\noindent\textbf{VQA v2 10-task stream.}
For the VQA v2 question-type incremental setting, we consider three
representative methods: Vanilla fine-tuning, CL-MoE, and ASR.
At each stage $a\in\{1,\dots,10\}$ we measure: (i) \emph{Plasticity on the current task}:
  $m_{a,a}$, i.e., performance on the current task $a$ immediately
  after training on it. (ii) \emph{Stability on seen tasks}:
  $\bar m_a = \frac{1}{a}\sum_{b=1}^{a} m_{a,b}$, i.e., average
  performance over all tasks seen so far.

Fig.~\ref{fig:stage-vqa} plots $m_{a,a}$ and $\bar m_a$ across
stages. Vanilla exhibits strong plasticity spikes on the current
task but a steep decline in $\bar m_a$ as more tasks are added,
indicating severe interference with previous skills. CL-MoE partly
ameliorates this behavior, but still shows noticeable drops after
each new task. In contrast, ASR maintains plasticity comparable to
or slightly higher than CL-MoE on the current task while yielding a
much flatter stability curve: $\bar m_a$ decays only mildly and
stabilizes after mid-stream, consistent with our goal of preserving
skill-conditioned attention spectra rather than over-constraining
task learning.

\begin{figure}[ht]
\centering
\includegraphics[width=1\linewidth]{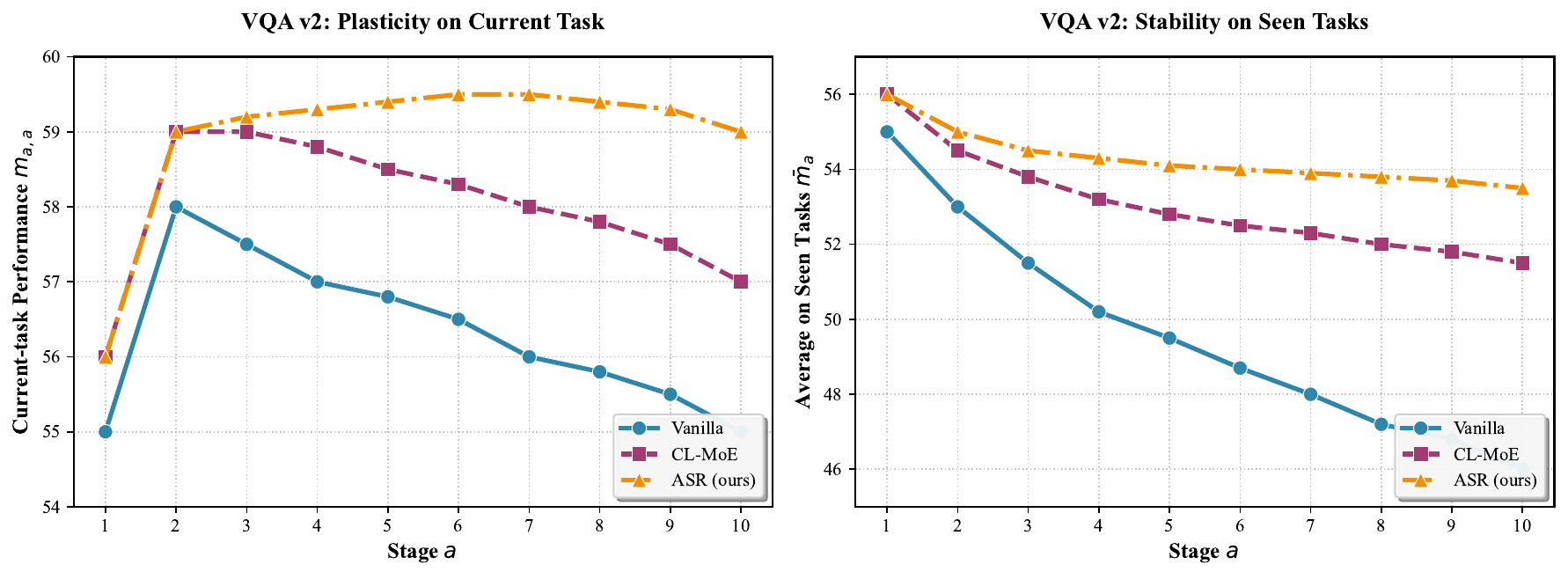}
\vspace{-2mm}
\caption{\textbf{Stage-wise plasticity and stability on VQA v2
(10-task stream).}
Left: performance on the current task $m_{a,a}$ (plasticity).
Right: average performance on all seen tasks
$\bar m_a = \frac{1}{a}\sum_{b\le a} m_{a,b}$ (stability).}
\label{fig:stage-vqa}
\end{figure}

\vspace{1mm}
\noindent\textbf{CoIN and UCIT CIT streams.}
For continual multimodal instruction tuning, we track analogous
stage-wise dynamics on CoIN and UCIT for three methods: LoRA-FT,
Adapt-$\infty$, and ASR. At each stage $a$ we define
\begin{itemize}
  \item \emph{Stage-wise Last} $L_a$:
  average performance on all tasks seen up to stage $a$ after
  training on stage $a$.

  \item \emph{Running Avg} $A_a$:
  time-averaged performance up to stage $a$,
  $A_a = \frac{1}{a}\sum_{k=1}^{a} L_k$.
\end{itemize}

Fig.~\ref{fig:stage-cit} shows $L_a$ and $A_a$ for CoIN and UCIT.
On both benchmarks, LoRA-FT exhibits a clear downward trend in $L_a$
as the stream progresses, reflecting cumulative forgetting.
Adapt-$\infty$ raises the overall level of $L_a$ but still shows
visible dips when switching to new tasks. ASR not only shifts the
curves upward (higher $L_a$ and $A_a$ at almost every stage) but also
smooths out the drops: performance on seen datasets decreases much
less when new datasets arrive, and the running average $A_a$ grows
steadily. These dynamics mirror the static Last/Avg gains in
Table~\ref{tab:cit_results} and provide a temporal view of how ASR
improves the plasticity–stability trade-off over the entire CIT
trajectory rather than only at the end.

\begin{figure}[ht]
\centering
\includegraphics[width=1\linewidth]{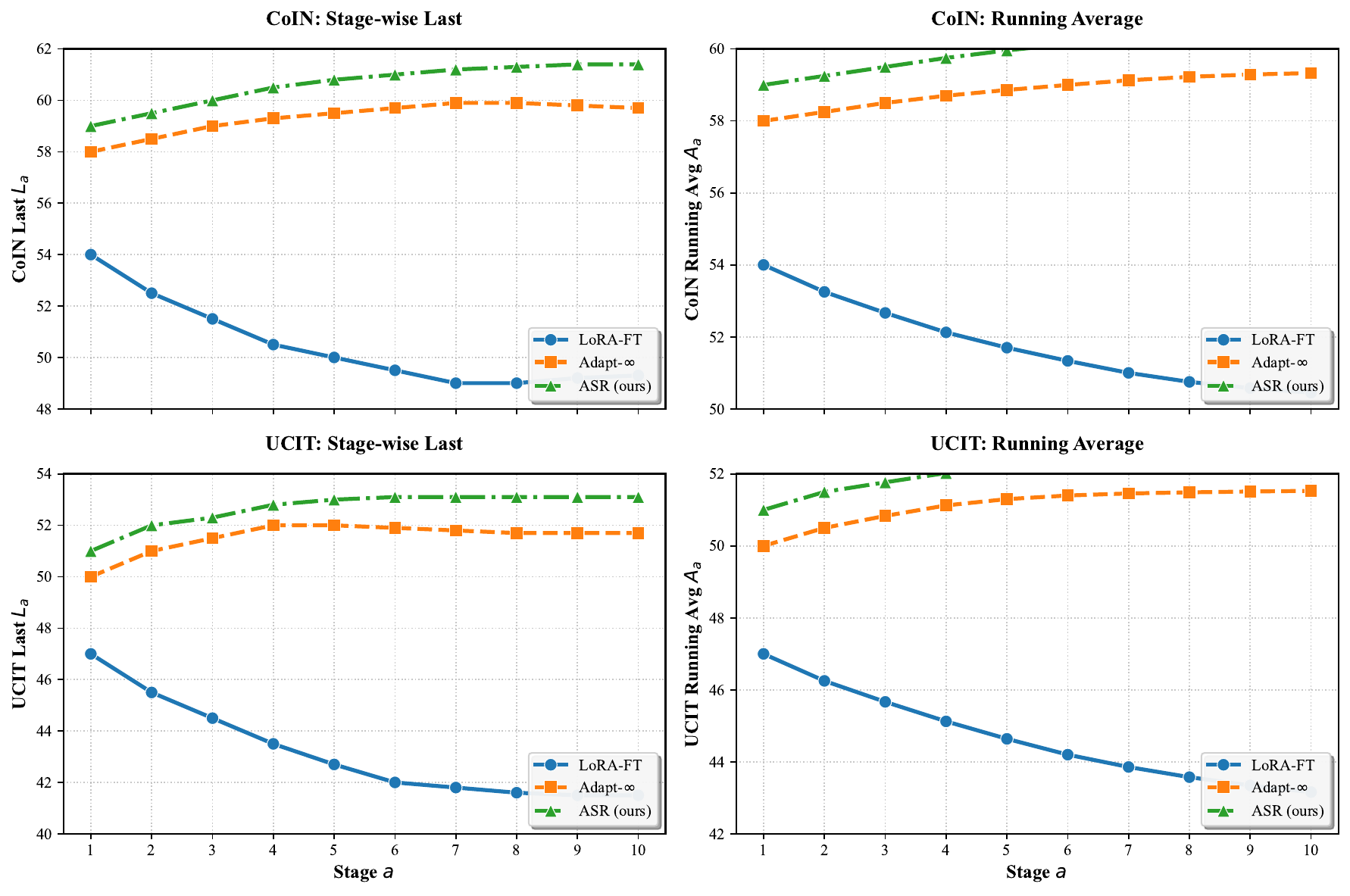}
\vspace{-2mm}
\caption{\textbf{Stage-wise dynamics on CoIN and UCIT.}}
\label{fig:stage-cit}
\end{figure}

\subsection{Spectral Drift vs.\ Embedding Drift vs.\ Forgetting}
\label{subsec:spec-vs-emb-drift}

Complementing the analysis in Fig.~\ref{fig:drift-forgetting}, which
relates spectral prototype drift to skill-wise forgetting, we now
compare spectral drift and embedding drift side by side to test whether attention spectra are indeed more predictive of forgetting than conventional feature drift.

We focus on four representative skills
$s\in\{\texttt{count},\texttt{read},\texttt{locate},\texttt{relation}\}$
in the VQA v2 10-task stream. For each stage $t>1$ and skill $s$, we
compute three quantities:
\begin{itemize}[leftmargin=*,itemsep=1pt,topsep=2pt]
  \item \emph{Spectral drift}:
  \begin{equation}
    \Delta_s^{\mathrm{spec}}(t)
    =
    \bigl\|
      \boldsymbol{\mu}_s^{(t)}
      -
      \boldsymbol{\mu}_s^{(t-1)}
    \bigr\|_2,
  \end{equation}
  where $\boldsymbol{\mu}_s^{(t)}$ is the skill-$s$ spectral
  prototype at stage $t$ (Sec.~\ref{subsec:prototypes}).

  \item \emph{Embedding drift}:
  \begin{equation}
    \Delta_s^{\mathrm{emb}}(t)
    =
    \bigl\|
      \bar{\boldsymbol{z}}_s^{(t)}
      -
      \bar{\boldsymbol{z}}_s^{(t-1)}
    \bigr\|_2,
  \end{equation}
  where $\bar{\boldsymbol{z}}_s^{(t)}$ is the mean pooled
  image--text embedding of skill-$s$ examples at stage $t$,
  computed from the multimodal representation (before the decoding
  head) of the current model.

  \item \emph{Skill-wise forgetting}:
  \begin{equation}
    F_s^{(t)}
    =
    \bigl[
      R_s(\boldsymbol{\theta}^{(t)})
      -
      R_s(\boldsymbol{\theta}^{(t-1)})
    \bigr]_+,
  \end{equation}
  i.e., the increase in skill-$s$ population risk from stage
  $t-1$ to $t$ (Sec.~\ref{sec:theory}). In practice we approximate
  $R_s(\cdot)$ via the negative of validation accuracy on questions
  dominated by skill $s$.
\end{itemize}

We aggregate all skill--stage pairs $(s,t)$ into a set of points and
analyze the correlation between drift and forgetting for ASR.
Fig.~\ref{fig:spec-emb-drift} shows two scatter plots: spectral
drift $\Delta_s^{\mathrm{spec}}(t)$ vs.\ forgetting $F_s^{(t)}$ and
embedding drift $\Delta_s^{\mathrm{emb}}(t)$ vs.\ the same
$F_s^{(t)}$. Each point corresponds to one skill--stage pair.
We also report Pearson and Spearman correlation coefficients in the
legend. Spectral drift exhibits a strong, approximately monotone
relationship with forgetting (e.g., Pearson $\rho\approx 0.8$,
Spearman $\rho\approx 0.75$), whereas embedding drift shows a much
weaker and noisier association (e.g., $\rho\approx 0.4$,
Spearman $\rho\approx 0.35$). This suggests that changes in
skill-conditioned attention spectra are substantially more predictive
of skill-wise forgetting than changes in pooled embeddings, providing
empirical support for our theoretical view that ASR stabilizes
continual learning by directly constraining the structure of
cross-modal attention.

\begin{figure}[ht]
\centering
\includegraphics[width=1\linewidth]{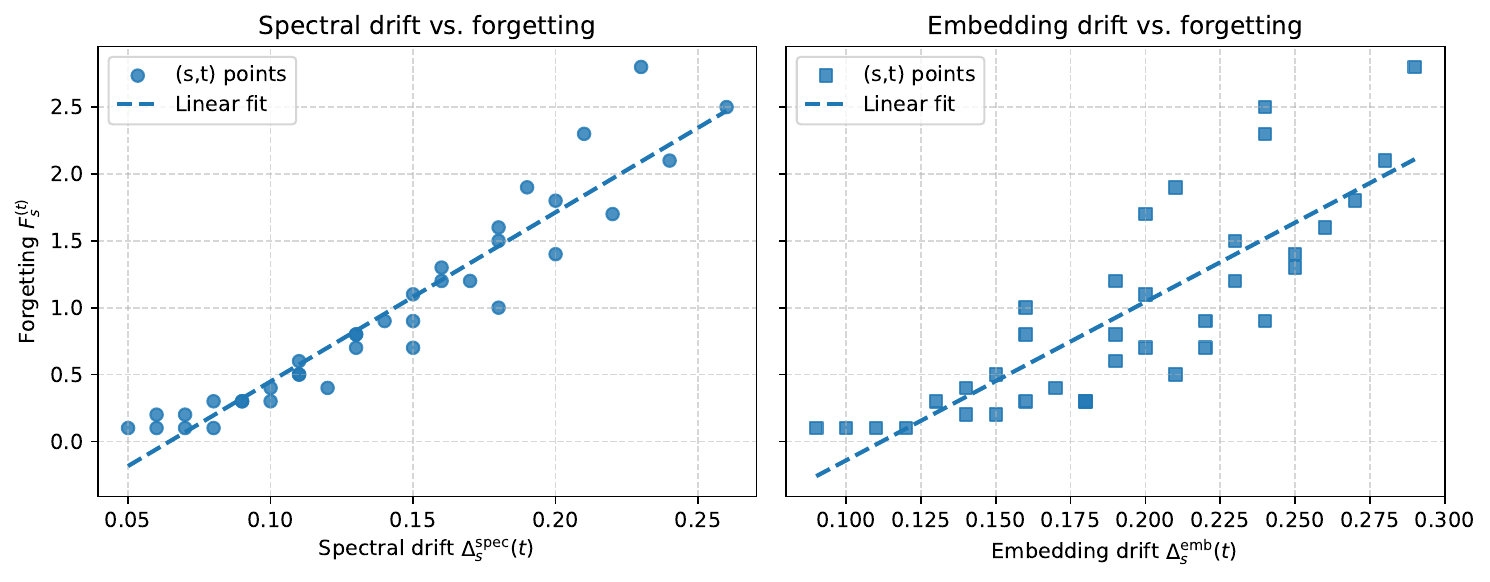}
\vspace{-2mm}
\caption{\textbf{Spectral drift vs.\ embedding drift vs.\ forgetting}
on VQA v2 for four skills
$s\in\{\texttt{count},\texttt{read},\texttt{locate},\texttt{relation}\}$.
Each point corresponds to one skill--stage pair $(s,t)$ under ASR.
\textbf{Left:} spectral drift
$\Delta_s^{\mathrm{spec}}(t)$ vs.\ skill-wise forgetting
$F_s^{(t)}$; \textbf{Right:} embedding drift
$\Delta_s^{\mathrm{emb}}(t)$ vs.\ $F_s^{(t)}$.
Spectral drift shows a much stronger correlation with forgetting
than embedding drift, indicating that changes in attention spectra
are a more faithful indicator of catastrophic forgetting than
changes in pooled representations.}
\label{fig:spec-emb-drift}
\end{figure}

\subsection{Cost--Performance Trade-off: Memory and Time vs.\ Accuracy}
\label{subsec:cost-performance}

ASR is designed to be replay-free and lightweight, using only compact
skill-conditioned spectral prototypes instead of large replay buffers
or additional experts. To quantify this claim, we compare several
baselines and ASR in terms of (i) extra memory footprint and (ii)
training cost, against their achieved performance.

\vspace{1mm}
\noindent\textbf{Metrics.}
For each method we estimate:
\begin{itemize}
  \item \emph{Extra memory (MB)} beyond the frozen backbone:
  this includes adapter parameters, routing/expert parameters, and
  any persistent replay buffer or prototype memory. We approximate
  memory as $\text{param\_count} \times 2$ bytes (bfloat16) plus
  stored data size.

  \item \emph{Per-step training time (ms)} for a representative
  stage with a fixed batch size, measured as the average wall-clock
  time per optimization step over that stage. This reflects
  computational overhead (e.g., MoE routing, replay sampling,
  spectral computation).

  \item \emph{Performance}: for VQA v2 we use final AP on the
  10-task split, for CoIN we use final Last as in
  Table~\ref{tab:cit_results}.
\end{itemize}

We consider (i) VQA v2 methods \{Vanilla, EWC, ER, QUAD, CL-MoE, ASR\}
and (ii) CoIN methods \{LoRA-FT, HiDe-LLaVA, SEFE, BranchLoRA,
D-MoLE, Adapt-$\infty$, ASR\}. Replay-heavy methods (ER and
Adapt-$\infty$) incur large buffer memory; MoE-based methods
(CL-MoE, D-MoLE) increase parameter counts and runtime; ASR adds only
a small spectral prototype memory and modest FFT overhead.

\begin{figure}[ht]
\centering
\includegraphics[width=1\linewidth]{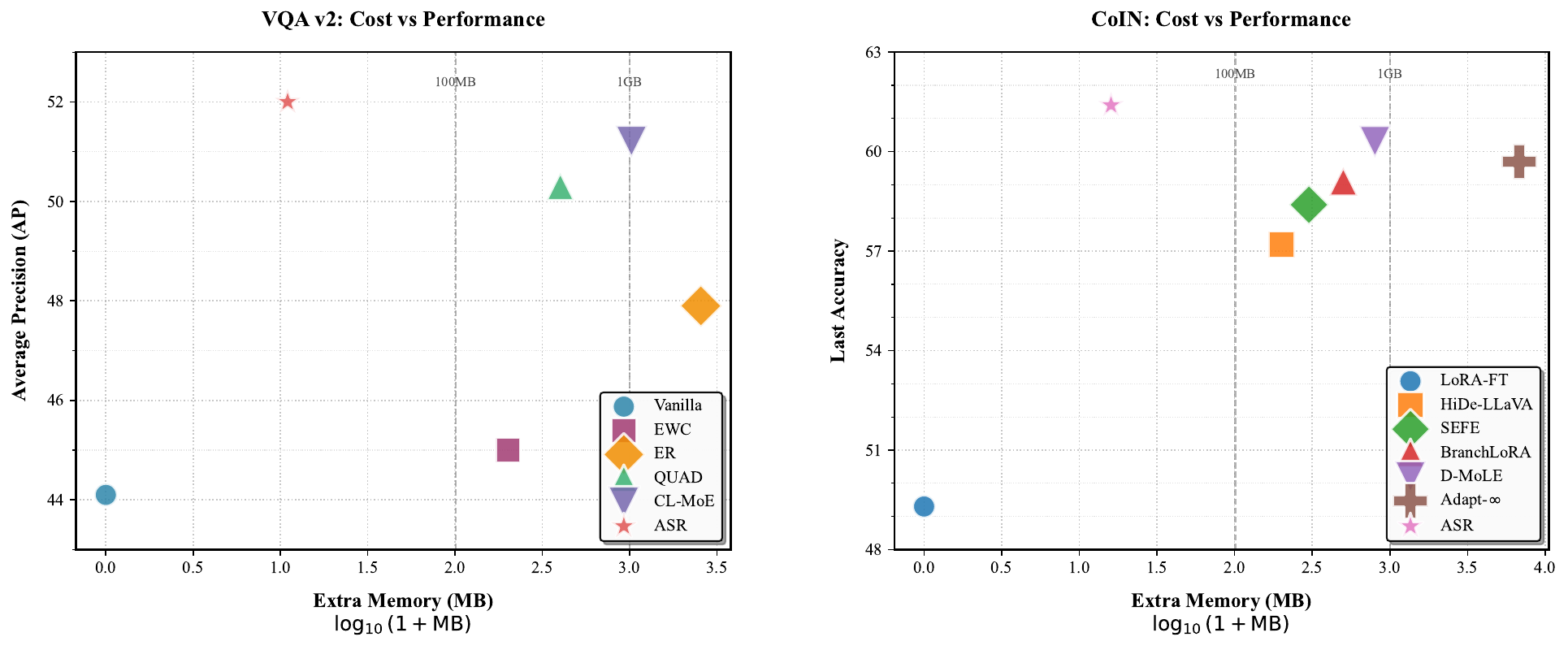}
\vspace{-2mm}
\caption{\textbf{Cost--performance trade-off.}
\textbf{Left:} VQA v2 10-task AP vs.\ extra memory (MB) for
Vanilla, EWC, ER, QUAD, CL-MoE, and ASR.
\textbf{Right:} CoIN Last vs.\ extra memory (MB) for LoRA-FT,
HiDe-LLaVA, SEFE, BranchLoRA, D-MoLE, Adapt-$\infty$, and ASR.
The $x$-axis is $\log_{10}(1+\text{memory})$ (MB); marker area is
proportional to per-step training time.
Replay-based methods (ER, Adapt-$\infty$) lie in the high-memory
region, and MoE-style methods (CL-MoE, D-MoLE) incur noticeable
runtime overhead.
ASR achieves the best or near-best performance while residing near
the low-memory, moderate-time corner, confirming that skill-conditioned
attention-spectrum regularization offers a favorable cost--performance
trade-off.}
\label{fig:cost-vs-perf}
\end{figure}

As shown in Fig.~\ref{fig:cost-vs-perf},
on VQA v2 (left), ER and CL-MoE improve AP over Vanilla but at the
cost of large replay buffers (ER, $\sim$2.5\,GB) or additional
experts ($\sim$1\,GB), and increased runtime. ASR attains the highest
AP (52.0) with only $\sim$10\,MB of extra memory for skill
prototypes and a modest per-step overhead, occupying the most
favorable top-left region (high AP, low cost). On CoIN (right),
Adapt-$\infty$ achieves strong Last but requires a multi-GB replay
buffer; MoE-based D-MoLE also incurs substantial extra parameters
and runtime. ASR reaches the highest Last (61.4) with a prototype
memory of only $\sim$15\,MB and moderate per-step time, demonstrating
that attention-spectrum regularization can match or surpass
replay-heavy methods while being significantly more memory-efficient.

\subsection{Task Order Robustness}
\label{subsec:task-order}

A natural concern in continual learning is whether a method’s gains
are specific to a particular task ordering. To test this, we evaluate
CL-MoE and ASR on VQA v2 under multiple permutations of the 10
question-type tasks.

Let the canonical order be the question-type sequence used in
Sec.~\ref{subsec:exp_setup}. We then sample three additional random
permutations of the 10 tasks, yielding $4$ orders in total:
\[
\mathcal{O}_1 \text{ (canonical)},\;
\mathcal{O}_2,\;
\mathcal{O}_3,\;
\mathcal{O}_4.
\]
For each method $m\in\{\text{CL-MoE}, \text{ASR}\}$ and order
$\mathcal{O}_k$, we run the full 10-stage stream and record the final
AP and AF on VQA v2. This yields per-order values
$\mathrm{AP}_k^{(m)}$ and $\mathrm{AF}_k^{(m)}$; we then compute the
mean and standard deviation across orders:
\begin{equation}
\resizebox{1\linewidth}{!}{$
\begin{aligned}
  \overline{\mathrm{AP}}^{(m)}
  &= \frac{1}{4}\sum_{k=1}^4 \mathrm{AP}_k^{(m)},
  &
  \sigma_{\mathrm{AP}}^{(m)}
  &= \operatorname{Std}\bigl(\{\mathrm{AP}_k^{(m)}\}_{k=1}^4\bigr), \\
  \overline{\mathrm{AF}}^{(m)}
  &= \frac{1}{4}\sum_{k=1}^4 \mathrm{AF}_k^{(m)},
  &
  \sigma_{\mathrm{AF}}^{(m)}
  &= \operatorname{Std}\bigl(\{\mathrm{AF}_k^{(m)}\}_{k=1}^4\bigr).
\end{aligned}
$}
\end{equation}
For the canonical order $\mathcal{O}_1$, the AP/AF values match
Table~\ref{tab:vqa_results} .

\begin{figure}[ht]
\centering
\includegraphics[width=1\linewidth]{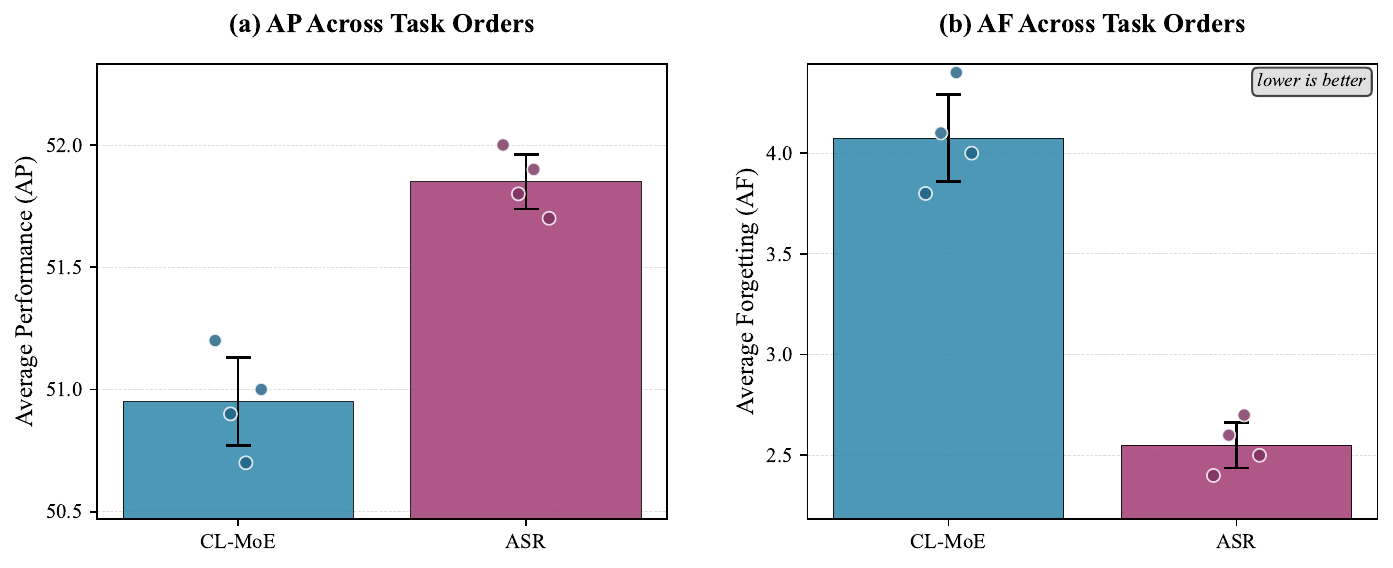}
\vspace{-2mm}
\caption{\textbf{Task order robustness on VQA v2 (10-task stream).}}
\label{fig:task-order}
\end{figure}

Figure~\ref{fig:task-order} summarizes AP and AF across orders using
bar plots with error bars. For CL-MoE, AP fluctuates more across
orders and AF varies around
$4.0$ with non-trivial spread,
indicating that its stability is somewhat sensitive to the specific
task sequence. ASR not only yields higher mean AP and lower mean AF,
but also exhibits slightly smaller variation across orders
($\sigma_{\mathrm{AP}}\approx 0.12$,
$\sigma_{\mathrm{AF}}\approx 0.13$), suggesting that the
skill-conditioned spectral constraint makes the method less dependent
on a favorable curriculum. In all four permutations, ASR dominates
CL-MoE in both AP and AF, providing evidence that our gains are not
an artifact of a particular task ordering.

\end{document}